\setlist{leftmargin=5mm}
\DeclareFontFamily{U}{mathx}{}
\DeclareFontShape{U}{mathx}{m}{n}{<-> mathx10}{}
\DeclareSymbolFont{mathx}{U}{mathx}{m}{n}
\DeclareMathAccent{\widecheck}{0}{mathx}{"71}
\newtheorem{manualtheoreminner}{Theorem}
\newtheorem{manualcorollaryinner}{Corollary}
\DeclareMathOperator{\tr}{\mathrm{tr}}
\def\eqref#1{equation~\ref{#1}}
\def\1{\bm{1}}
\newtheorem{theorem}{Theorem}
\newtheorem{proposition}{Proposition}
\newtheorem{corollary}{Corollary}
\newtheorem{assumption}{Assumption}
\newtheorem{lemma}{Lemma}
\DeclareMathAlphabet{\mathsfit}{\encodingdefault}{\sfdefault}{m}{sl}
\SetMathAlphabet{\mathsfit}{bold}{\encodingdefault}{\sfdefault}{bx}{n}
\newcommand{\R}{\mathbb{R}}
\newcommand{\N}{\mathbb{N}}
\newcommand{\rank}{\mathrm{rank}\,}
\theoremstyle{remark}
\newcommand{\by}{\boldsymbol{y}}
\newcommand{\bA}{\boldsymbol{A}}
\newcommand{\bE}{\boldsymbol{E}}
\newcommand{\bM}{\boldsymbol{M}}
\newcommand{\bW}{\boldsymbol{W}}
\newcommand{\bTheta}{\boldsymbol{\Theta}}
\newcommand{\cA}{\mathcal{A}}
\newcommand{\cP}{\mathcal{P}}
\newcommand{\hbTheta}{\widehat{\bTheta}}
\newcommand{\tbTheta}{\widetilde{\bTheta}}
\newcommand{\sfF}{\mathsf{F}}
\newcommand{\Id}{\mathrm{Id}}
\newcommand{\bMsurr}{\bM^{\text{surr}}}
\newcommand{\bb}{\mathbb}
\newcommand{\DS}[1]{{\color{magenta} (DS: #1)}}
\newcommand{\ZK}[1]{{\color{cyan} [{\em Zekai:} #1]}}
\newcommand{\qq}[1]{\textcolor{blue}{\bf [{\em Qing:} #1]}}
\title{Efficient Compression of Overparameterized Deep Models through Low-Dimensional Learning Dynamics}
\author[1]{Soo Min Kwon\footnote{The first two authors contributed to this work equally. Correspondence to \texttt{kwonsm@umich.edu}.}}
\author[3]{Zekai Zhang$^*$}
\author[1]{Dogyoon Song}
\author[1,2]{Laura Balzano}
\author[1,2]{Qing Qu}
\affil[1]{Department of Electrical Engineering \& Computer Science, University of Michigan}
\affil[2]{Michigan Institute for Data Science, University of Michigan}
\affil[3]{Department of Automation, Tsinghua University}
\date{\today}
\begin{document}

\maketitle

\begin{abstract}

Overparameterized models have proven to be powerful tools for solving various machine learning tasks. However, overparameterization often leads to a substantial increase in computational and memory costs, which in turn requires extensive resources to train.
In this work, we present a novel approach for compressing overparameterized models, developed through studying their learning dynamics. We observe that for many deep models, updates to the weight matrices occur within a low-dimensional invariant subspace. For deep linear models, we demonstrate that their principal components are fitted incrementally within a small subspace, and use these insights to propose a compression algorithm for deep linear networks that involve decreasing the width of their intermediate layers.
We empirically evaluate the effectiveness of our compression technique on matrix recovery problems. 
Remarkably, by using an initialization that exploits the structure of the problem, we observe that our compressed network converges faster than the original network, consistently
yielding smaller recovery errors. 
We substantiate this observation by developing a theory focused on deep matrix factorization. Finally, we empirically demonstrate how our
compressed model has the potential to improve the utility of deep nonlinear models.
Overall, our algorithm improves the training efficiency
by more than $2\times$, without compromising generalization.

\end{abstract}

\tableofcontents

\section{Introduction}\label{sec:intro}

Overparameterization has proven to be a powerful modeling approach for solving various problems in machine learning and signal processing~\cite{hinton, benefit1, zou2022benefits}. In the literature, it has been observed that overparameterized models yield solutions with superior generalization capabilities. This phenomenon has demonstrated prevalence across a wide range of problems, along with a broad class of model architectures, and has far-reaching implications, including the acceleration of convergence~\cite{arora2018optimization, liu2019accelerating} and the improvement of sample complexity~\cite{arora2019implicit, Sun2022TowardsSO}.
%To be concrete,~\cite{arora2018optimization} rigorously showed that overparameterization via depth served as a preconditioner, potentially accelerating convergence. Specifically, they demonstrated that deeper networks can be interpreted as an adaptive learning rate schedule, transforming the gradient of a shallow network towards more favorable solutions.
For example, Arora et al.~\cite{arora2019implicit} illustrated the advantages of deep linear models within the context of low-rank matrix recovery. They showed that deeper models promoted low-rank solutions as a function of depth, consequently decreasing the sample complexity required for accurate recovery compared to classical approaches like nuclear norm minimization~\cite{nuc_norm} and the Burer-Monterio factorization~\cite{burer}.
%More recently,~\cite{huh2023lowrank} demonstrated that incorporating additional linear layers into deep nonlinear networks enhanced their generalizability, leading to a considerable increase in test accuracy due to this additional overparameterization.
%While we have highlighted only a couple of examples showcasing the benefits of overparameterized 
%models, there exists an abundance of research further emphasizing the potency of these models~\cite{over1, over2, over3, over4}.
Outside of matrix recovery, there exists an abundance of research showcasing the benefits of overparameterized models under many different settings~\cite{over1, over2, over3, over4}.

Nevertheless, the benefits of overparameterization come with a cost; they require extensive computational resources to train. The number of parameters to estimate rapidly increases with the signal dimension, hindering the use of overparameterized models for large-scale problems (or rather making it infeasible with limited resources). This calls for an urgent need of solutions that reduce computational costs in terms of both memory and speed.

%\qq{compress the intro}
%\qq{again, here, for this paragraph, it would be better to start discussion on deep nonlinear networks with Figure 1, citing low-rank training and low-rank adaptation papers first, and then reduce to deep linear networks and discuss about Can's work. I suggest to rewrite this paragraph}

To tackle this issue, researchers have begun to study the learning dynamics of overparameterized models more closely under low-rank settings, seeking opportunities for compression~\cite{lr1,lr2,lr3,lr4}. For instance, recent works have shown that assuming the updates of the weight matrices in nonlinear networks have a low-dimensional structure can significantly reduce the training overhead with only a small tradeoff in accuracy~\cite{hu2021lora, wang2023cuttlefish, horvath2023maestro}. 
%\qq{put it here} More recently, Yaras et al.~\cite{yaras2023law} has shown that the learning process of gradient descent (GD) for training deep linear networks (DLNs) takes place primarily within a minimal invariant subspace when the target matrix exhibits low-dimensional structure.
However, these results are limited to only specific network architectures under certain settings.
%\qq{Here, I think we should put the discussion on the phenonemon in Figure 1 ahead, and then introduce Can Yaras's result on deep linear networks after it, say people study the phenoemon based upon simplified models, it will be better connected to the next paragraph as well}
Our first observation is that low-rank weight matrices arise rather universally across a wide range of network architectures when trained with GD.
We demonstrate this in Figure~\ref{fig:plot_svals}, where we plot
the singular values of the weight updates 
for the penultimate weight matrix across various networks, showing that the training largely occurs within a low-dimensional subspace despite all of the parameters being updated.
Although this suggests that it is possible to compress the overparameterized weight matrices, it is not immediately clear how one could tackle this task. In addition, therein lies the question of whether one could construct compressed networks without compromising the performance of their wider counterparts.

\begin{figure}[t!]
     \centering
     \includegraphics[width=\textwidth]{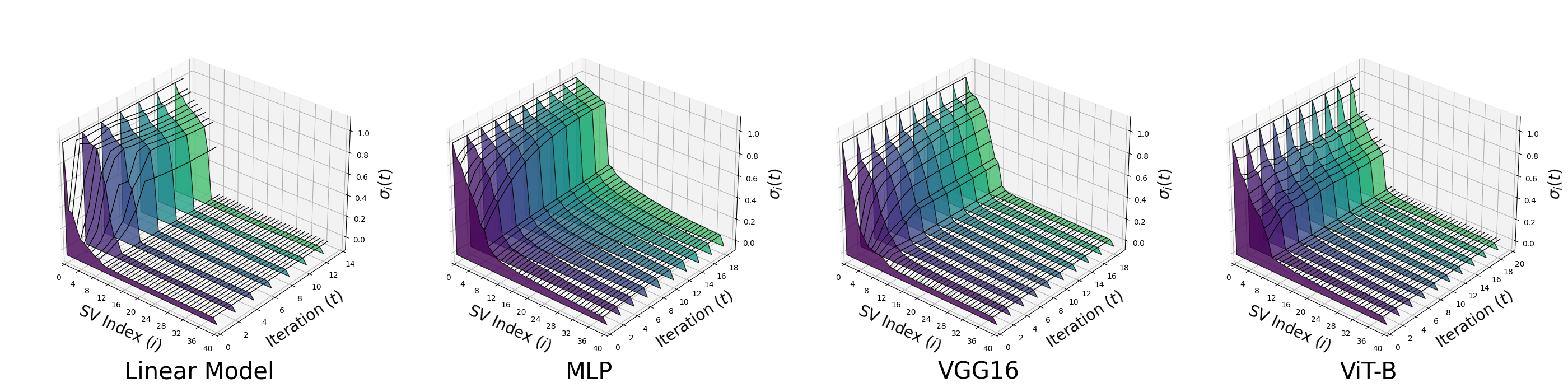}
     \caption{\textbf{Prevalence of low-rank weight updates in various deep networks.} Each plot visualizes the
     singular values of the weight updates from initialization for the penultimate layer weight matrix for different types of (nonlinear) network architectures: including deep linear network (DLN), multi-layer perception (MLP), VGG \cite{DBLP:journals/corr/SimonyanZ14a}, and ViT-B \cite{DBLP:conf/iclr/DosovitskiyB0WZ21} (from left to right). The first two networks (i.e., DLN and MLP) are trained on MNIST, while the latter (i.e., VGG and ViT-B) are trained on CIFAR-10. The result shows a prevalent phenomenon across linear and nonlinear networks -- gradient descent only updates a small portion of the singular values, while the others remain small and almost unchanged. We provide plots for the respective singular vectors and discuss the training details in Appendix~\ref{sec:training_dets}.}
     %\qq{defer the experimental setup to somewhere in the experimental section or appendix, learning rate? initialization? more details}
     %\qq{this caption is important and needs to be rephrased with more details, we need to emphasize that this happens to nonlinear networks, describe what are VGG and ViT-B to all readers. We show also include Laura's suggestion into the appendix, and refer here about the singular vectors} }
     \label{fig:plot_svals}
\end{figure} 

In this work, we take a step in developing a principled approach for compressing overparameterized models.
By carefully exploring the learning dynamics of DLNs, we unveil several interesting low-rank properties that occur during the training process. Firstly, we demonstrate that the singular subspaces of the DLN are fitted incrementally, similarly to existing work~\cite{li2020towards,jacot2021saddle, chou2023gradient}, but only within a small invariant subspace across a wide range of matrix recovery problems.
Secondly, we leverage this observation to propose a simple, yet highly effective method for compressing DLNs that involve 
decreasing the width of the intermediate layers of the original wide DLN.
The main takeaway of our method is the following:
\begin{tcolorbox}
\begin{center}
    %\emph{For a particular choice of initialization, the compressed DLN can outperform the wide DLN across a wide range of problems.}
    \emph{When properly initialized, the compressed DLN can consistently achieve a lower recovery error than the wide DLN across all iterations of GD, across a wide range of problems.}
\end{center}
\end{tcolorbox}

%\qq{for this paragraph, we have mentioned too many stuff: low-rank matrix recovery, deep nonlinear networks, deep matrix factorization, it would be quite confusing to the readers. I suggest that we just stick to the deep linear network vs nonlinear network, without mentioning low-rank recovery problem}

By capitalizing on the prevalence of incremental learning, we rigorously substantiate
this finding on the deep matrix factorization problem as an illustrative example. We highlight that our approach can provide insights into how one can compress overparameterized weights without increasing recovery error.
Below, we outline some of our key contributions.

%We empirically verify this phenomenon on various low-rank matrix recovery tasks 
%on both synthetic and real data 
%and demonstrate how our compressed network can enhance the efficiency of deep nonlinear networks in classification tasks.
%Then, by capitalizing on the prevalence of incremental learning, we rigorously substantiate our findings on the deep matrix factorization problem as an illustrative example. 
\begin{itemize}
    \item \textbf{Fast Convergence and Efficient Training.} By using our compression technique, we demonstrate that our compressed network attains a lower recovery error than the original overparameterized network throughout all iterations of GD. As a result, we are able to achieve (1) faster convergence in less GD iterations and (2) further speed up training by estimating less parameters, all while enjoying the benefits of overparameterized networks.

    \item \textbf{Incremental Learning within an Invariant Subspace.} We empirically demonstrate the prevalence of incremental learning (also commonly referred to as sequential learning), wherein the singular subspaces of a DLN are fitted one at a time (see Figure~\ref{fig:motivation}). We establish that this phenomenon occurs in several canonical matrix recovery problems and leverage these insights to rigorously establish the superiority of our compressed DLN over wide DLNs.

    \item \textbf{Compression of Deep Nonlinear Networks.} We demonstrate how to leverage our findings in deep linear models to accelerate the training of deep nonlinear networks. By overparameterizing the penultimate layer of these deep networks and employing our compression technique, we can reduce memory complexity and training time while achieving similar (or even better) test accuracy.
    
    %We plot the singular values of the DLN throughout gradient descent and illustrate that the incremental learning process occurs within a low-dimensional subspace when the target matrix exhibits low-dimensional structure, while the remaining singular values remain close to their initial values (see Figure~\ref{fig:motivation}). This phenomenon has been theoretically studied for the two-layer case~\cite{incremental2, jiang2023algorithmic} and in the context of deep matrix factorization~\cite{li2020towards, jacot2021saddle, chou2023gradient}. We establish that this phenomenon occurs in several canonical matrix recovery problems and leverage these insights to rigorously establish the superiority of our compressed DLN over wide DLNs. \qq{this is way too concrete. It does not highlight the benefits of our methods. I think we should postponed this, and just highlight our findings and the benefits}

    %\qq{we could add another contribution on low-rank training of nonlinear networks if possible}
\end{itemize}

%\begin{itemize}
%    \item \textbf{Efficient Training of Overparameterized DLNs.} We showcase the effectiveness of our method on low-rank matrix recovery tasks, where our objective is to model a rank-$r$ matrix $\bm{W}^* \in \R{R}^{d\times d}$ using a DLN. Using an assumed rank upper bound $\hat{r} \geq r$, our compression approach estimates only $2d\hat{r} + L\hat{r}^2$ parameters, , in contrast to the $Ld^2$ parameters in the original DLN with depth $L$. This reduction in
%    parameters substantially reduces the computational complexity of the training, especially when $\hat{r}$ is small.
%    \item \textbf{Faster Convergence.} We observe that our compression-based approach consistently outperforms the original DLN across various matrix recovery tasks, in terms of estimation error
%throughout all iterations of GD. We theroretically show that this occurs 
%using the incremental learning phenomenon.
%\end{itemize}
\begin{figure}[t!]
    \centering
    \includegraphics[width=0.9\textwidth]{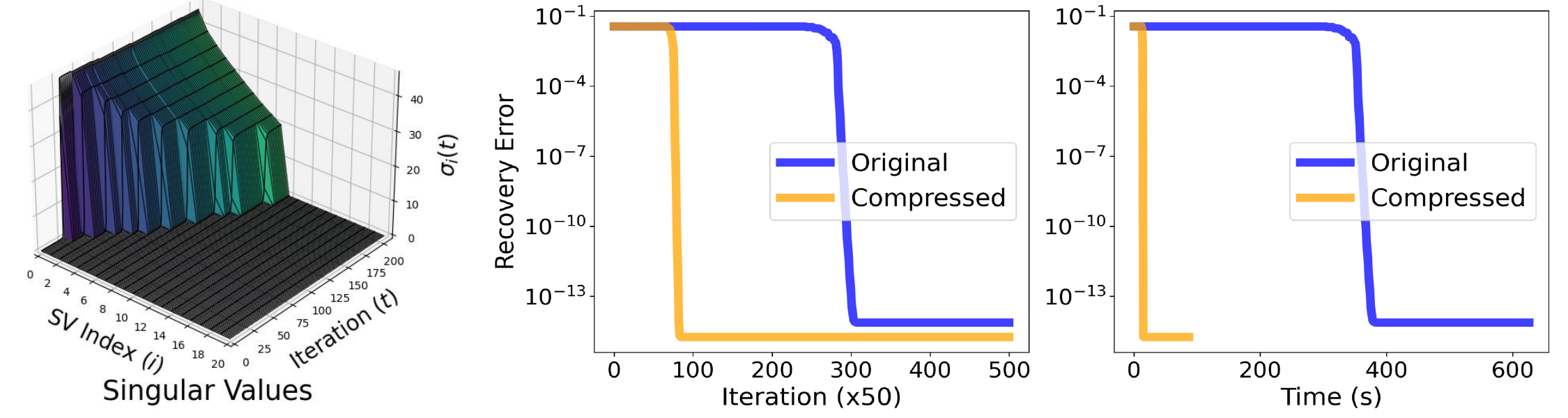}
    \caption{\textbf{Motivating the benefits of our compressed DLN while showcasing the incremental learning phenomenon.}
    Left: plot of the change in singular values of the end-to-end DLN for matrix completion with $r=10$. This shows incremental learning of singual values and implies that we can perform low-rank training within a small subspace without having to overparameterize.  Right: recovery error for the original and compressed DLN across iterations and time, respectively.}%\ZK{This pic seems a bit redundant, I think for the left: net arch like ~\cref{fig:net}; for the right: only the loss curve corresponding to time}}
    
    \label{fig:motivation}
\end{figure}
 
\paragraph{Notation and Organization.} We denote vectors with bold lower-case letters (e.g. $\bm{x}$) and matrices with bold upper-case letters (e.g. $\bm{X}$). Given any $n \in \N$, we use $\bm{I}_n$ to denote an identity matrix of size $n$ and $\mathcal{O}^{m \times n} = \{\bm{X}^{m\times n} \,|\, \bm{X}^{\top}\bm{X} = \bm{I}_n\}$ to denote the set of all $m\times n$ orthonormal matrices. We use $[L]$ to denote the set $\{1, 2, \ldots, L\}$.

The rest of the paper is organized as follows. In Section~\ref{sec:compression}, we briefly introduce low-rank matrix recovery problems to motivate our work and propose our compression algorithm. In Section~\ref{sec:theory}, we present our theory, highlighting the benefits of our proposed method. In Section~\ref{sec:experiments}, we empirically demonstrate the effectiveness of our method and the validity of our theory.

%For convenience, we adopt the abbreviation $\bm{W}_{j:i} = \bm{W}_j \cdot \ldots \cdot \bm{W}_i$ for $j\geq i$, where the product yields an identity matrix if $j < i$

\section{Efficient Network Compression Method}
\label{sec:compression}

In this section, we first discuss the low-rank matrix recovery problem using DLNs (Section \ref{sec:problem}), and then describe our proposed low-rank network compression method (Section \ref{sec:method}). Additionally, we outline how these ideas can extend to deep nonlinear networks (Section \ref{sec:nonlinear_extension}).

\subsection{A Basic Problem Setup}\label{sec:problem}
We first motivate our study of network compression based upon low-rank matrix recovery problem, where the goal is to estimate a ground truth low-rank matrix $\bm{M}^* \in \R^{d\times d}$ from the measurement $\by = \cA(\bm{M}^*) \in \R^m$, where we assume that the rank $r$ of $\bm{M}^*$ is much smaller than its ambient dimension $(r \ll d)$. Here,  $\cA(\cdot): \R^{d\times d} \to \R^m$ is a linear sensing operator and $m$ denotes the number of measurements.

Moreover, we consider the deep low-rank matrix recovery by modeling the low rank matrix $\bm M^*$ via a DLN parameterized by a sequence of parameters\footnote{In general, we can allow the layer parameter matrices $\bW_l \in \R^{d_l \times d_{l-1}}$ to have arbitrary shapes with any $d_1, \dots, d_{L-1}$ as long as $d_0 = d_L = d$. Here, we assumed square shapes for simplicity in exposition.} $\bTheta = ( \bW_l \in \R^{d \times d})_{l=1}^L$, which can be estimated by solving the following least-squares problem:

%and we use $\bm \Theta = \{ \bm W_l \}_{l=1}^L$ to denote all the network parameters. We obtain the solution by solving the following least-squares problem:
%the estimate $\widehat{\bm{W}}$ of $\bm{M}^*$ using a DLN, parameterized by a sequence of parameters $\bTheta = ( \bW_l \in \R^{d \times d})_{l=1}^L$, and an estimate $\hbW = \hbW_L \cdot \ldots \cdot \hbW_1$ for $\hbTheta = \big( \hbW_l \big)_{l=1}^L$ obtained by solving a least-squares problem:
\begin{equation}\label{eq:dln_setup}
    \hbTheta \in \underset{\bTheta}{\arg\min}~ \ell_{\cA}(\bTheta; \bM^*)
    \coloneqq \frac{1}{2} \big\| \cA\big( \underbrace{\bW_L \cdot \ldots \cdot \bW_1}_{\eqqcolon \bW_{L:1}} - \bM^* \big) \big\|_{2}^2.
\end{equation}
%Note that when each weight matrix is overparameterized and $L \geq 2$, this is an under-determined problem, and thus, there exist infinitely many different parameter collections $\bTheta$ that yield the same matrix estimate. 
%Thus, our goal in deep matrix sensing is to find \emph{any} $\hbTheta$ that yields $\hbW \approx \bM^*$. 

To obtain the desired low-rank solution, for every iteration $t \geq 0$, we update each weight matrix $\bm W_l$ using GD given by
\begin{align}\label{eqn:gd}
    \bm{W}_l(t) = \bm{W}_l(t-1) - \eta\cdot \nabla_{\bm{W_l}}\ell_{\mathcal{A}}(\bm{\Theta}(t-1)),
\end{align}
$\forall l \in [L]$, where $\eta > 0$ is the learning rate and $\nabla_{\bm{W_l}}\ell_{\mathcal{A}}(\bm{\Theta}(t))$ is the gradient of $\ell_{\mathcal{A}}(\bm{\Theta})$ with respect to the $l$-th weight matrix at the $t$-th GD iterate.
Additionally, we initialize the weights to be orthogonal matrices scaled by a small constant $\epsilon > 0$:
\begin{align}
\label{eq:orth_init}
\bm{W}_l(0)^{\top}\bm{W}_l(0)  = \bm{W}_l(0)\bm{W}_l(0)^{\top} = \epsilon \bm{I},
\end{align}
$\forall l \in [L]$. For the case in which the weight matrices are not square, we can initialize them to be $\epsilon$-scaled semi-orthogonal weight matrices that satisfy
\begin{align}
    \bm{W}_l(0)^{\top}\bm{W}_l(0) = \epsilon \bm{I} \quad \text{or} \quad \bm{W}_l(0)\bm{W}_l(0)^{\top} = \epsilon \bm{I},
\end{align}
which depends on the shape of $\bW_l$.
 It has been proven that orthogonal initialization leads to favorable convergence properties when training DLNs~\cite{yaras2023law, orth1, orth2}, which we adopt throughout this paper for analysis.\footnote{Nonetheless, our experiments show that our method is effective even with small \emph{random} uniform initialization.}

\paragraph{Examples of Deep Matrix Recovery.}
The low-rank matrix recovery problem appears in various machine learning applications, and \emph{deep} low-rank matrix recovery has received much attention recently. 
Among others, here we illustrate three prominent such examples.
\begin{itemize}[leftmargin=*]
    \item 
    \textbf{Deep Matrix Factorization.} 
    When $\mathcal{A}(\cdot)$ is the identity map $\Id: \R^{d \times d} \to \R^{d \times d}$, the deep matrix sensing problem reduces to \emph{deep matrix factorization}:
    \begin{align}
        \label{eq:deep_mf}
        \hbTheta \in \underset{\bTheta}{\arg\min}~ \ell_{\Id}(\bTheta; \bM^*)
         \coloneqq  \frac{1}{2} \|\bW_{L:1}- \bM^*\|^2_{\sfF}.
    \end{align}
    This simple setting is often used as a starting point for theoretical investigations. 
    For instance, it has been observed that in deep matrix factorization, DLNs often exhibit similar behaviors to their nonlinear counterparts~\cite{saxe2014exact}.
    \item 
    \textbf{Deep Matrix Sensing with Random Measurements.} 
    For any linear sensing operator $\cA(\cdot)$, the image $\by = \cA(\bM^*)$ can be represented as 
    \begin{align*}
        \by = \cA(\bM^*)  = \big[ \langle \bA_1, \bM^* \rangle, \cdots, \langle \bA_m, \bM^*\rangle \big] \in \R^m,
    \end{align*}
    where $\bA_i \in \bb R^{d \times d}$, $\forall i \in [m]$ denote sensing matrices and $\langle \bA_i , \bM^* \rangle \coloneqq \tr(\bA_i^{\top} \bM^*)$. 
    In the matrix sensing literature, it is common to consider a random sensing operator, for example, generated by drawing random sensing matrices $\bA_i$ with i.i.d. Gaussian entries. 
This choice is due to the fact that such a random sensing operator often satisfies desirable properties (with high probability) like the Restricted Isometry Property (RIP), which is a sufficient condition to accurately sense a low-rank $\mathbf{M}^*$ from $m < d^2$ measurements~\cite{rip}.
    \item
    \textbf{Deep Matrix Completion.} 
    Let $\Omega \subset [d] \times [d]$ and $\mathbf{y}$ be the set of elements from $\mathbf{M}^*$ with indices in $\Omega$. In this scenario, the problem reduces to the \emph{matrix completion} problem. This problem has been extensively studied in the recent decades, ignited by the Netflix prize challenge, and fuelling advances in convex relaxation heuristics~\cite{mc1} and nonconvex optimization methods~\cite{spectral}. 
    
    Formally, the sensing operator $\mathcal{A}(\cdot)$ is defined by $m = |\Omega|$ sensing matrices $\bA_{ij} = \bE_{ij}$ with $(i,j) \in \Omega$, where $\bE_{ij}$ is the $(i,j)$-th canonical basis. 
    In the literature, this problem is commonly described using the projection operator $\cP_{\Omega}: \R^{d\times d} \to \R^{d\times d}$ that only retains the entries in the index set $\Omega$, i.e., $\cP_{\Omega}(\bM^*)_{ij} = \bM^*_{ij}$ if and only if $(i,j) \in \Omega$ and $\cP_{\Omega}(\bM^*)_{ij}=0$, otherwise. 
    In our context, the deep matrix completion amounts to solving\footnote{$\cP_{\Omega} = \cA^{\dagger} \cA$ where $\cA^{\dagger}$ is the adjoint operator of $\cA$, and $\| \cP_{\Omega} (\bW_{L:1}- \bM^*) \|^2_{\mathsf{F}} = \| \cA (\bW_{L:1}- \bM^*) \|^2_{2}$.}
    \begin{align}
    \label{eq:deep_mc}
        \hbTheta \in \underset{\bTheta}{\arg\min}~ \ell_{\Omega}(\bTheta; \bM^*) \coloneqq  \frac{1}{2} \big\| \cP_{\Omega} (\bW_{L:1}- \bM^*) \big\|^2_{\sfF}.
    \end{align}    
    %\qq{what is the relationship between the mapping and the projection operator}
\end{itemize}

\begin{comment}
    
Suppose that we have $N$ training samples $\bm{X} = [\bm{x}_1 \, \bm{x}_2 \, \ldots \, \bm{x}_N] \in \R^{d_x \times N}$ with corresponding labels $\bm{Y} = [\bm{y}_1 \, \bm{y}_2 \, \ldots \, \bm{y}_N] \in \R^{d_y \times N}$. The objective of a DLN is to estimate an $L$-layer ($L\geq 2$) matrix parameterized by $\bm{\Theta} = \{\bm{W}_l\}_{l=1}^L$ that most accurately maps the input $\bm{x} \in \R^{d_x}$ to its label $\bm{y} \in \R^{d_y}$ \qq{maybe we should not mention classification problem? change labels to observation or measurements} by solving the following least squares optimization problem:
\begin{align}
\label{eq:dln_setup}
    \underset{\bm{\Theta}}{\min} \, \ell(\bm{\Theta}) = \frac{1}{2} \|\underbrace{\bm{W}_{L}\cdot \ldots \cdot \bm{W}_1}_{\bm{W}_{L:1}}\bm{X} - \bm{Y}\|^2_{\mathsf{F}},
\end{align}
\end{comment}

\subsection{Efficient Low-Rank Network Compression Methods}\label{sec:method}
%\ZK{add a plot like ~\cref{fig:net} showing our architecture here? Or putting the fig forward for attraction. I think we can replace the left part in ~\cref{fig:motivation} with net architecture.
%}
%\begin{figure}
%    \centering
%    \includegraphics[width=0.6\linewidth]{}
%    \caption{an illustration of network architecture}
%    \label{fig:net}
%\end{figure}

While overparameterization offers benefits such as decreased sample complexity and improved generalization, they come at the cost of a large increase in computational complexity. 
However, as we observed in \Cref{fig:plot_svals} and~\Cref{fig:motivation}, overparameterized networks manifest effective low-dimensionality in their training dynamics across various learning tasks.
This could be potentially exploited for an efficient training strategy that alleviates the computational burden. To this end, instead of overparameterizing each layer of the DLN, we consider a compressed DLN parameterized by
\[
    \widetilde{\bTheta} \coloneqq \left( \widetilde{\bm{W}}_1, \cdots, \widetilde{\bm{W}}_L \right)
    \quad\text{where}\quad
    \widetilde{\bm{W}}_{L} \in \R^{d\times \hat{r}},~~
    \widetilde{\bm{W}}_l \in \R^{\hat{r} \times \hat{r}},~2\leq l \leq L-1,~~\widetilde{\bm{W}}_{1} \in \R^{\hat{r}\times d},
\]
where $\hat{r}$ is any positive integer such that $\hat{r} \geq r = \rank(\bM^*)$.
The end-to-end matrix product at GD iterate $t$ with this parameterization is given by
\begin{align}\label{eq:comp_dln}
    \widetilde{\bm{W}}_{L:1}(t) = \widetilde{\bm{W}}_L(t) \cdot \ldots \cdot \widetilde{\bm{W}}_1(t) \in \R^{d\times d}.
\end{align}
Note that we do not assume precise knowledge of the true rank $r$, but only require $\hat{r} \geq r$. 
This compression reduces the total number of parameters from $Ld^2$ in the original DLN into $2d\hat{r} + (L-2)\cdot\hat{r}^2$, which is a substantial reduction when $\hat{r}$ is much smaller than $d$. 
This compressed DLN is largely motivated by Figure~\ref{fig:motivation}, where the training of the wide DLN largely occurs within an $r$-dimensional subspace. 
If we can obtain an upper bound $\hat{r} \geq r$, then we can 
essentially ``prune'' out the portion of the DLN that corresponds to the singular values that remain unchanged and small. However, doing so directly would result in large accumulative approximation error. Nonetheless, we show that for a particular chosen initialization of $\widetilde{\Theta}$, we can mitigate the error and remarkably outperform the original wide DLN.

For initialization, we choose a small constant $\epsilon > 0$ and let $\widetilde{\bm{W}}_l(0) = \epsilon \cdot \bm{I}_{\hat{r}}$ for $2 \leq l \leq L-1$; 
additionally, we initialize the left and right most factors $\widetilde{\bm{W}}_L(0)$ and $\widetilde{\bm{W}}_{1}(0)$ by extracting the top-$\hat{r}$ singular vectors of the surrogate matrix
\begin{align}
\label{eq:surrogate_ms}
    \bMsurr \coloneqq \mathcal{A}^\dagger \mathcal{A}(\bM^*) = \frac{1}{m}\sum_{i=1}^m y_i \bm{A}_i ,
    \qquad\text{where}\quad
    y_i = \langle \bm{A}_i, \bM^* \rangle,
\end{align}
scaled by $\epsilon$.\footnote{In other words, the $i$-th columns of $\widetilde{\bm{W}}_L(0)$ and $\widetilde{\bm{W}}_{1}(0)$ are the $i$-th left and right singular vectors of $\bMsurr$, respectively, up to scaling factor of $\epsilon$.} 
For a random sensing operator $\cA(\cdot)$, the singular subspaces of $\bMsurr$ are known to closely approximate those of $\bM^* \in \R^{d\times d}$ with high probability~\cite{spectral}. 
Thus, we expect $\widetilde{\bm{W}}_L(0)$ and $\widetilde{\bm{W}}_{1}(0)$ to be roughly close to the singular subspaces of $\bM^*$. Note that for the matrix completion case, $\cP_{\Omega} = \cA^{\dagger} \cA$, and so the surrogate matrix becomes $\bMsurr = \frac{1}{|\bm{\Omega}|} \cP_{\Omega}(\bM^*)$.

To update each weight matrix, we use a learning rate $\eta > 0$ to update $\widetilde{\bm{W}}_l(t)$ for $2 \leq l \leq L-1$, and a rate $\alpha \cdot \eta$ with $\alpha > 0$ to update $\widetilde{\bm{W}}_L(t)$ and $\widetilde{\bm{W}}_1(t)$. We often observe that using a scale $\alpha > 1$ to update the left and right most factors accelerates convergence.
The complete algorithm is summarized in Algorithm~\ref{alg:alg}.

To provide an intuition for the utility of spectral initialization, we conducted an experiment where we analyzed the trajectories of the factors ${\bm{W}}_L(t)$ and ${\bm{W}}_1(t)$ of the original DLN starting from orthogonal initialization. In Figure~\ref{fig:pc_dist}, we show that these two factors ultimately align with the singular vectors of the target matrix $\bM^*$. 
Interestingly, this observation is similar to that of St\"{o}ger et al.~\cite{spectral_akin}, where they note that the initial iterations of GD for two-layer matrix factorization resemble the power method. Here, we find a similar result, but it applies to the leftmost and rightmost factors of the DLN. Therefore, if we were to initialize these two factors close to the underlying singular vectors, we could achieve accelerated convergence even when compressing the wide DLN.

\begin{algorithm}[t]
\caption{Compressed DLNs (C-DLNs) for Learning Low-Dimensional Models}
\label{alg:alg}
\begin{algorithmic}[1]
    \Require loss function $\ell(\cdot)$; ~$\bm{y} \in \R^m$;~ $\epsilon, \eta, \alpha \in \R$;~ $\hat{r}, L, T \in \mathbb{N}$
    % Objective function $\ell(\cdot)$, Measurements $\mathcal{A}(\bm{M}^*)$, Surrogate matrix $\bMsurr$,  Number of layers $L$, Initial scale $\epsilon$, Estimated rank $\hat{r}$, Learning rate $\eta$, Learning rate scale $\alpha$, Number of iterations $T$
    
    \State $\widetilde{\bm{W}}_l(0) \gets \epsilon \cdot \bm{I}_{\hat{r}}$, \quad $2 \leq l \leq L-1$
        \Comment{Initialize intermediate weight parameters}
    \State $\bm{U}_{\hat{r}}, \bm{V}_{\hat{r}} \gets \texttt{SVD}( \bMsurr)$
        \Comment{Compute $\bMsurr$ via Equation~(\ref{eq:surrogate_ms}) and take top-$\hat{r}$ SVD}
    \State 
    $\widetilde{\bm{W}}_L$ = $\epsilon\cdot \bm{U}_{\hat{r}}$, \,  $\widetilde{\bm{W}}_1$ = $\epsilon\cdot \bm{V}_{\hat{r}}$
        \Comment{Initialize outer weight parameters}
    \For{$t=1, \ldots, T$}
        \Comment{GD update for $T$ iterations}
    \begin{align*}
        \widetilde{\bm{W}}_{L}(t+1)       &\gets \widetilde{\bm{W}}_{L}(t) - \alpha \eta \cdot \nabla_{\widetilde{\bm{W}}_{L}} \ell(\widetilde{\bm{\Theta}}(t)\big) , \\
        \widetilde{\bm{W}}_l(t+1)   &\gets \widetilde{\bm{W}}_l(t) - \eta \cdot \nabla_{\widetilde{\bm{W}}_l}\ell \big(\widetilde{\bm{\Theta}}(t) \big),\\
        \widetilde{\bm{W}}_{1}(t+1)  &= \widetilde{\bm{W}}_{1}(t) - \alpha \eta \cdot \nabla_{\widetilde{\bm{W}}_{1}} \ell(\widetilde{\bm{\Theta}}(t)\big) ,
    \end{align*}
    \EndFor
    \State  Return $\widetilde{\bm{W}}_{L:1} = \widetilde{\bm{W}}_{L}(T)\cdot \ldots \cdot \widetilde{\bm{W}}_{1}(T)$
        \Comment{Output of compressed DLN}
\end{algorithmic}
\end{algorithm}

\begin{figure}[t!]
    \centering
     \begin{subfigure}{0.435\textwidth}
         \centering
         \includegraphics[width=\textwidth]{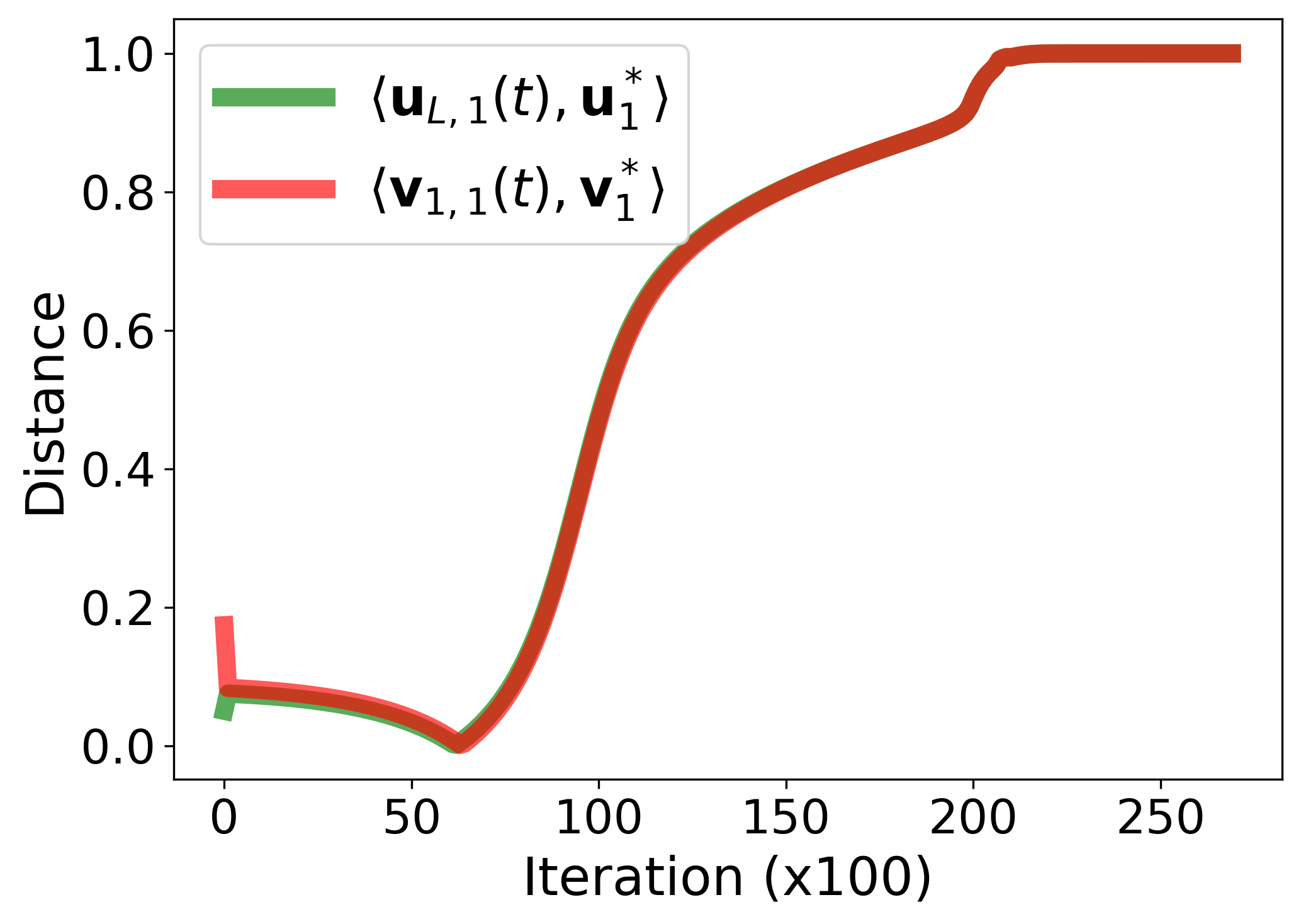}
         \caption*{Matrix Factorization}
     \end{subfigure}
     \begin{subfigure}{0.435\textwidth}
         \centering
         \includegraphics[width=\textwidth]{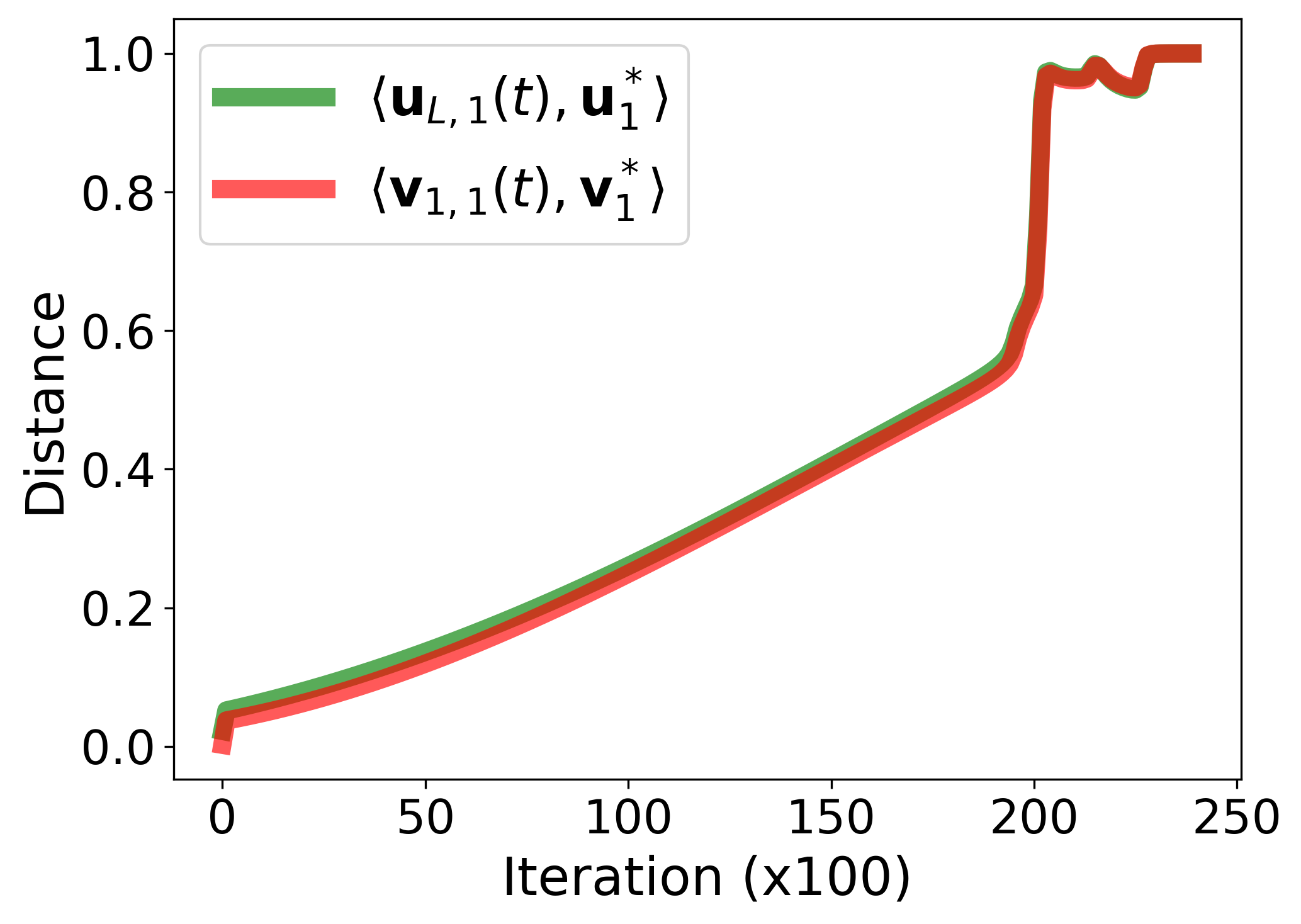}
         \caption*{Matrix Completion}
     \end{subfigure}
    \caption{\textbf{Motivating the use of spectral initialization for DLNs.} These plots measure the similarity between the first principal component of $\bm{W}_L(t)$ and $\bm{U}^*$ (and respectively $\bm{W}_1(t)$ and $\bm{V}^*$). This result shows that the left and right most factors of the DLN fit the left and right singular vectors of the target matrix $\bM^*$, respectively. }
    \label{fig:pc_dist}
\end{figure}

\subsection{Extension to Compression in Linear Layers of Nonlinear Networks}\label{sec:nonlinear_extension}

In this section, we explore leveraging our compressed network to reduce the training complexity and enhance the generalization capabilities of deep nonlinear networks, based on our observations in Figure~\ref{fig:plot_svals}. Since our compressed network takes into account the low-rank structure of the weights, it is possible that compression can lead to more efficient solutions.
Traditionally, the success of deep neural networks is attributed to their overparameterized nature, which is often achieved by increasing the width of weight matrices keeping the depth fixed. 
Interestingly, recent work by Huh et al.~\cite{huh2023lowrank} demonstrated that increasing the \emph{depth} of each weight matrix by \emph{adding linear layers} also improves generalization in terms of test accuracy, across a wide range of datasets for solving classification problems.
More concretely, consider a simple deep nonlinear network parameterized by weight matrices $\bm{\Theta} = (\bm{W}_l)_{l=1}^L$, which represents a map\footnote{Here, we omit the bias terms of the network for simplicity.} 
\[
    \psi_{\bm{\Theta}}: \bm{x} \mapsto \bm{W}_{L}\cdot\rho \left(\bm{W}_{L-1} \ldots \rho(\bm{W}_1\bm{x}) \right)
\]
where $\rho(\cdot)$ is a predetermined nonlinear activation function such as ReLU. 
Huh et al.~\cite{huh2023lowrank} showed that adding linear layers results in networks with improved generalization, namely, 
\[
    \psi'_{\bm{\Theta}}: \bm{x} \mapsto \bm{W}_{L+2}\cdot\rho \left(\bm{W}_{L+1}\bm{W}_{L}\bm{W}_{L-1} \ldots \rho(\bm{W}_1\bm{x}) \right).
\]
Observe that this amounts to overparameterizing the penultimate layer of the deep nonlinear network using a 3-layer DLN $\bm{W}_{L+1:L-1} = \bm{W}_{L+1}\bm{W}_{L}\bm{W}_{L-1}$.

Motivated by this, we explore enhancing the performance of deep nonlinear networks across a variety of architectures (e.g., MLP and vision transformers (ViT)) by (1) initially overparameterizing their penultimate layer with a DLN, followed by (2) compressing the DLN using our proposed technique. This modification is anticipated to enjoy the advantages of overparameterization through DLNs, while mitigating the increase in computational demands, thanks to our compression approach. 
However, we note that determining initial subspaces for $\widetilde{\bm{W}}_{L}(0)$ and $\widetilde{\bm{W}}_{l}(0)$ in this setting is not as straightforward as in the low-rank matrix recovery context. 
As a first step towards applying compression techniques to deep nonlinear networks, we propose initializing the subspaces by using the singular subspaces of the cross-correlation matrix, $\bm{M}^{\text{corr}} \coloneqq \bm{YX}^{\top} \in \R^{d_y \times d_x}$ where applicable and otherwise using random subspaces. When the number of classes $d_y$ is smaller than the feature dimension $d_x$, this matrix $\bm{M}^{\text{corr}}$ is a low-rank matrix with $\rank \bm{M}^{\text{corr}} \leq d_y \ll d_x$, permitting the choice $\hat{r} = d_y$. For the setting of random subspaces, we generally observe that a choice of $\hat{r} \geq d_y$ suffices. %\qq{maybe we should cite and discuss Can's work, about the class number for determining the rank}

\section{Theoretical Investigations}
\label{sec:theory}

In this section, we provide theoretical insights into the superior performance of our compressed DLN compared to the original overparameterized network based upon the simplified deep matrix factorization setting. 
We reveal the benefits of the spectral initialization and the incremental learning phenomenon in achieving accelerated convergence (in iteration complexity) for our compressed network.
%\ZK{In this section we are just trying to show that under matrix factorization tasks, if we align the singular vectors of our end to end matrix and the target's, we achieve lower error at initialization and each iteration; however I don't know if we should separate it into two subsections.}

\subsection{The Benefits of Spectral Initialization}
Let $\bM^* \in \R^{d\times d}$ be a matrix of rank $r$ and $\bM^* = \bm{U}^* \bm{\Sigma}^* \bm{V}^{*\top}$ be a singular value decomposition (SVD) of $\bM^*$. 
We consider the deep matrix factorization setting, where $\cA = \Id$, i.e., we have full observation of $\bM^*$. Here, $\bMsurr = \bM^*$ and so 
Algorithm \ref{alg:alg} initializes $\widetilde{\bm{W}}_{L}(0) = \epsilon \cdot \bm{U}^*_{\hat{r}}$ and $\widetilde{\bm{W}}_{1}(0) = \epsilon \cdot \bm{V}^{*\top}_{\hat{r}}$, the top-$\hat{r}$ singular subspaces of $\bM^*$ themselves. This leads to the compressed deep matrix factorization problem
\begin{align}
\label{eq:comp_deep_mf}
    \underset{\widetilde{\bm{\Theta}}}{\min} \, \ell(\widetilde{\bm{\Theta}}(t)) = \frac{1}{2} \|\widetilde{\bm{W}}_{L:1}(t) - \bM^*\|^2_{\mathsf{F}},
\end{align}
where the intermediate layers are initialized to $\widetilde{\bm{W}}_l(0) = \epsilon \cdot \bm{I}_{\hat{r}}$ for $2 \leq l \leq L-1$.
In Figure~\ref{fig:pc_dist}, we showed that the left and rightmost factors of the original DLN align with the left and right singular vectors of the target matrix $\bM^*$ more closely throughout GD. 
%This observation implies that our particular choice of initialization is optimal. 
This observation implies that our particular choice of initialization could accelerate the training process. 
We substantiate this observation by proving that this initialization has two advantages: (1) the compressed DLN has a low-dimensional structure as outlined in Theorem~\ref{thm:parsimony} and (2) the compressed DLN incurs a lower recovery error at initialization than the original DLN, as shown in Corollary~\ref{cor:init}.

\begin{theorem}
\label{thm:parsimony} 
Let $\bM^* \in \R^{d\times d}$ be a matrix of rank $r$ and $\bM^* = \bm{U}^* \bm{\Sigma}^* \bm{V}^{*\top}$ be a SVD of $\bM^*$. Suppose we run Algorithm~\ref{alg:alg} to update all weights $\left( \widetilde{\bm{W}}_l \right)_{l=1}^L$ of Equation~(\ref{eq:comp_deep_mf}), where $\mathcal{A} = \text{Id}$. Then, the end-to-end compressed DLN possesses low-dimensional structures, in the sense that for all $t\geq 1$,  $\widetilde{\bm{W}}_{L:1}(t)$ admits the following decomposition:
\begin{align}
    \widetilde{\bm{W}}_{L:1}(t) = \bm{U}^*_{\hat{r}} 
    \begin{bmatrix}
    \bm{\Lambda}(t)  &\bm{0} \\
        \bm{0} & \beta(t)^L\cdot\bm{I}_{\hat{r} - r}
    \end{bmatrix}
    \bm{V}_{\hat{r}}^{*\top},
\end{align}
where $\bm{\Lambda}(t) \in \R^{r\times r}$ is a diagonal matrix with entries $\lambda_i(t)^L$, where
\begin{align}
    \lambda_i(t) = \lambda_i(t-1) \cdot \left(1 - \eta \cdot (\lambda_i(t-1)^L - \sigma_i^*) \cdot \lambda_i(t-1)^{L-2} \right), \quad 1 \leq i \leq r,
\end{align}
with $\lambda_i(0) = \epsilon$ and $\sigma^*_i$ is the $i$-th diagonal entry of $\bm{\Sigma}^*$ and
\begin{align}
    \beta(t) = \beta(t-1) \cdot \left( 1 - \eta \cdot \beta(t-1)^{2(L-1)} \right), 
\end{align}
with $\beta(0) = \epsilon$.
\end{theorem}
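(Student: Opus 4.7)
The plan is to proceed by induction on $t$, maintaining the structural invariant that each weight factor is aligned with the singular bases of $\bM^*$ and is otherwise diagonal. Concretely, I will show that for every $t \geq 0$,
\[
    \widetilde{\bm{W}}_L(t) = \bm{U}^*_{\hat{r}}\bm{D}(t),
    \quad \widetilde{\bm{W}}_l(t) = \bm{D}(t) \text{ for } 2 \leq l \leq L-1,
    \quad \widetilde{\bm{W}}_1(t) = \bm{D}(t)\bm{V}^{*\top}_{\hat{r}},
\]
where $\bm{D}(t)\in\R^{\hat{r}\times\hat{r}}$ is a common diagonal matrix whose first $r$ entries are $\lambda_i(t)$ and whose remaining $\hat{r}-r$ entries equal $\beta(t)$. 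The base case $t=0$ is immediate from the spectral initialization prescribed in Algorithm~\ref{alg:alg}, which gives $\widetilde{\bm{W}}_L(0)=\epsilon\bm{U}^*_{\hat{r}}$, $\widetilde{\bm{W}}_1(0)=\epsilon\bm{V}^{*\top}_{\hat{r}}$, $\widetilde{\bm{W}}_l(0)=\epsilon\bm{I}_{\hat{r}}$ for intermediate $l$, corresponding to $\bm{D}(0)=\epsilon\bm{I}_{\hat{r}}$ and to the initial values $\lambda_i(0)=\beta(0)=\epsilon$.

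For the inductive step, I would extend $\bm{\Sigma}^*$ to $\widetilde{\bm{\Sigma}}^*\coloneqq\mathrm{diag}(\sigma_1^*,\ldots,\sigma_r^*,0,\ldots,0)\in\R^{\hat{r}\times\hat{r}}$, so that $\bM^* = \bm{U}^*_{\hat{r}}\widetilde{\bm{\Sigma}}^*\bm{V}^{*\top}_{\hat{r}}$. Under the hypothesis the end-to-end product becomes $\widetilde{\bm{W}}_{L:1}(t) = \bm{U}^*_{\hat{r}}\bm{D}(t)^L\bm{V}^{*\top}_{\hat{r}}$, and the residual simplifies to $\bm{U}^*_{\hat{r}}\bigl(\bm{D}(t)^L - \widetilde{\bm{\Sigma}}^*\bigr)\bm{V}^{*\top}_{\hat{r}}$, which is still ``diagonal'' in the singular bases. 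Plugging this into the gradient formula $\nabla_{\widetilde{\bm{W}}_l}\ell = \widetilde{\bm{W}}_{L:l+1}^\top(\widetilde{\bm{W}}_{L:1}-\bM^*)\widetilde{\bm{W}}_{l-1:1}^\top$ and using $\bm{U}^{*\top}_{\hat{r}}\bm{U}^*_{\hat{r}} = \bm{V}^{*\top}_{\hat{r}}\bm{V}^*_{\hat{r}} = \bm{I}_{\hat{r}}$, the inner-layer gradient collapses to the diagonal matrix $\bm{D}(t)^{L-1}\bigl(\bm{D}(t)^L - \widetilde{\bm{\Sigma}}^*\bigr)$, while the outermost gradients take the form $\bm{U}^*_{\hat{r}}\cdot G$ and $G\cdot\bm{V}^{*\top}_{\hat{r}}$ for the same diagonal $G$. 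This simultaneously shows (a) the column/row alignments with $\bm{U}^*_{\hat{r}}$ and $\bm{V}^*_{\hat{r}}$ are preserved, and (b) diagonality of all inner factors is maintained, so the structural invariant carries to iteration $t+1$.

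Since all $L$ gradients reduce to the same diagonal matrix and the initialization is balanced, setting $\alpha=1$ (consistent with the single learning rate $\eta$ appearing in the stated recursion) keeps the per-layer diagonals identical for every $t$. Reading off the scalar update in the $i$-th invariant direction with $u = \bm{D}(t-1)_{ii}$,
\[
    u - \eta(u^L - \sigma_i^*)u^{L-1} \;=\; u\bigl(1 - \eta(u^L - \sigma_i^*)u^{L-2}\bigr),
\]
which matches the $\lambda_i(t)$ recursion for $i\leq r$, and reduces to $u\bigl(1 - \eta u^{2(L-1)}\bigr)$ for $r<i\leq \hat{r}$ (where $\sigma_i^*=0$), matching the $\beta(t)$ recursion with the common starting value $\epsilon$. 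Raising $\bm{D}(t)$ to the $L$-th power then yields the block-diagonal end-to-end product stated in the theorem.

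The main obstacle is the gradient bookkeeping in the inductive step: one must verify that the residual's ``diagonal-in-singular-basis'' structure combines cleanly with the hypothesized factor structure and that no off-diagonal spillover appears when the adjoint terms absorb $\bm{U}^{*\top}_{\hat{r}}\bm{U}^*_{\hat{r}}$ and $\bm{V}^{*\top}_{\hat{r}}\bm{V}^*_{\hat{r}}$. Once this decoupling is confirmed, the rest is an elementary scalar induction running independently in each of the $\hat{r}$ invariant one-dimensional directions.
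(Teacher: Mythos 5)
Your proposal is correct and follows essentially the same route as the paper's proof: an induction that keeps every factor diagonal in the singular bases of $\bM^*$, so the gradient update decouples into $\hat{r}$ independent scalar recursions ($\lambda_i$ for $i \leq r$, $\beta$ for the remaining directions), whose $L$-th powers give the stated end-to-end decomposition. Your write-up is in fact slightly more explicit than the paper's: you carry the per-layer invariant and the gradient formula through the induction rather than only the end-to-end product, and you correctly flag that the stated recursion corresponds to a common learning rate (i.e., $\alpha = 1$), a point the paper leaves implicit.
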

% \ZK{should we note here that $\beta(t)$ is just the special case of $\lambda(t)$, where $\sigma_i^*=0$ for $r+1\leq i \leq \hat{r}$?}

\paragraph{Remarks.} We defer all proofs to Appendix~\ref{sec:proofs_app}. 
By using our initialization technique, Theorem~\ref{thm:parsimony} allows us to characterize the GD updates of the compressed DLN, which also constitutes a valid SVD. This also shows that the compressed DLN directly finds low-rank solutions of rank $r$, as the last $\hat{r} - r$ singular values is a decreasing function where $\beta(t) \leq \epsilon^L$ for a small constant $\epsilon > 0$.

We would also like to note a couple of points regarding the relationship between Theorem~\ref{thm:parsimony} and existing theory for the original DLN \cite{yaras2023law}. %Recently, Yaras et al.\cite{yaras2023law} established a similar theory, but . 
The main difference of our result from that of Yaras et al.~\cite{yaras2023law} is that, through the use of spectral initialization, we are able to characterize the dynamics of \emph{all} of the singular values, whereas Yaras et al. \cite{yaras2023law} only characterizes the behavior of the trailing $d-2r$ singular values. The result by Arora et al.~\cite{arora2019implicit} has a similar flavor — if we consider infinitesimal steps of their gradient flow result regarding the behavior of the singular values, we can recover the discrete steps as outlined in \Cref{thm:parsimony}.

With \Cref{thm:parsimony} in place, an immediate consequence is that the compressed DLN exhibits a lower recovery error than the original DLN at the initialization $t=0$, as outlined in Corollary~\ref{cor:init}.
\begin{corollary}
\label{cor:init}
Let $\bm{W}_{L:1}(0)$ denote the original DLN at $t=0$ initialized with orthogonal weights according to Equation~(\ref{eq:orth_init}) with $\mathcal{A} = \text{Id}$ and $\widetilde{\bm{W}}_{L:1}(0)$ denote the compressed DLN at $t=0$. Then, we have
\begin{align}
\label{eq:recovery_error}
        \| \bm{W}_{L:1}(0) - \bM^*\|^2_{\mathsf{F}} \geq  \| \widetilde{\bm{W}}_{L:1}(0) - \bM^*\|^2_{\mathsf{F}}.
\end{align}
\end{corollary}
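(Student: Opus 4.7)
The plan is to apply Theorem~\ref{thm:parsimony} at $t=0$ to obtain a closed-form expression for $\widetilde{\bm W}_{L:1}(0)$, and then compare it against the original DLN's initialization using the SVD of $\bm M^*$ together with Mirsky's inequality.

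First, specializing Theorem~\ref{thm:parsimony} to $t=0$ with $\lambda_i(0)=\beta(0)=\epsilon$ collapses the block-diagonal middle matrix to $\epsilon^L\bm I_{\hat r}$, yielding $\widetilde{\bm W}_{L:1}(0) = \epsilon^L\,\bm U^*_{\hat r}\bm V^{*\top}_{\hat r}$. The decisive structural feature is that the column and row spaces of this matrix already coincide with the top-$\hat r$ left and right singular subspaces of $\bm M^*$ by construction. Diagonalizing the difference $\widetilde{\bm W}_{L:1}(0)-\bm M^*$ in the SVD basis of $\bm M^*$ (which preserves the Frobenius norm by orthogonal invariance) produces the exact identity
\[
\|\widetilde{\bm W}_{L:1}(0)-\bm M^*\|_{\mathsf F}^2 \;=\; \sum_{i=1}^{r}(\epsilon^L-\sigma_i^*)^2 + (\hat r-r)\,\epsilon^{2L}.
\]

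Second, I would exploit the orthogonal initialization~\eqref{eq:orth_init} of the original DLN: each factor decomposes as $\bm W_l(0)=\sqrt\epsilon\,\bm Q_l$ with $\bm Q_l\in\mathcal O^{d\times d}$, hence $\bm W_{L:1}(0)=\epsilon^{L/2}\bm Q$ for some $\bm Q\in\mathcal O^{d\times d}$, and the singular-value sequence of $\bm W_{L:1}(0)$ consists of $d$ copies of $\epsilon^{L/2}$ while that of $\bm M^*$ is $(\sigma_1^*,\dots,\sigma_r^*,0,\dots,0)$. Mirsky's inequality (equivalently the Hoffman--Wielandt bound for singular values) applied to these two sequences delivers the universal lower bound
\[
\|\bm W_{L:1}(0)-\bm M^*\|_{\mathsf F}^2 \;\ge\; \sum_{i=1}^{r}(\epsilon^{L/2}-\sigma_i^*)^2 + (d-r)\,\epsilon^L.
\]

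Finally, the corollary reduces to comparing these two expressions. Subtracting and regrouping gives per-mode differences that factor as $(\epsilon^{L/2}-\sigma_i^*)^2-(\epsilon^L-\sigma_i^*)^2 = (\epsilon^{L/2}-\epsilon^L)(\epsilon^{L/2}+\epsilon^L-2\sigma_i^*)$, weighed against an ``excess-mass'' contribution $(d-\hat r)\epsilon^L-(\hat r-r)(\epsilon^{2L}-\epsilon^L)$ that captures the $d-\hat r$ trailing directions outside the top-$\hat r$ singular subspaces of $\bm M^*$ which the original DLN must carry at full rank but the compressed network prunes. I expect the main obstacle to be this closing scalar comparison: the per-mode alignment term and the excess-mass term sit at different orders of $\epsilon$ (namely $\epsilon^{L/2}$ against $\epsilon^L$), so certifying positivity of the difference will require careful bookkeeping, leveraging the structural relation $d \ge \hat r \ge r$ and a judicious ordering of $\epsilon$ against the spectrum of $\bm M^*$ to ensure that the excess-mass penalty of the original dominates the alignment benefit of the compressed initialization.
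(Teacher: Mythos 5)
Your overall strategy is the same as the paper's (Theorem~\ref{thm:parsimony} at $t=0$ for the compressed network, a Mirsky/von~Neumann singular-value bound as in Lemma~\ref{lem:von} for the original, then a scalar comparison), and your first two displays are fine. The gap is the final comparison, which you correctly flag as unresolved: with the scales you chose it does not close, and no bookkeeping or ordering of $\epsilon$ against the spectrum will fix it. You read Equation~(\ref{eq:orth_init}) literally, giving the original end-to-end singular values $\epsilon^{L/2}$, while the compressed network is initialized at per-factor scale $\epsilon$, giving end-to-end scale $\epsilon^{L}$. With this mismatch, the per-mode difference
\begin{align*}
\sum_{i=1}^{r}\bigl[(\epsilon^{L/2}-\sigma_i^*)^2-(\epsilon^{L}-\sigma_i^*)^2\bigr]
=(\epsilon^{L/2}-\epsilon^{L})\sum_{i=1}^{r}\bigl(\epsilon^{L/2}+\epsilon^{L}-2\sigma_i^*\bigr)
\approx -2\,\epsilon^{L/2}\sum_{i=1}^{r}\sigma_i^*
\end{align*}
is negative of order $\epsilon^{L/2}$, whereas your excess-mass surplus $(d-r)\epsilon^{L}-(\hat r-r)\epsilon^{2L}$ is only of order $\epsilon^{L}$. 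So the quantity you hope to certify as nonnegative is negative as $\epsilon\to 0$; you are comparing a lower bound on the original error against the exact compressed error, and the bound points the wrong way. Worse, the issue is not only with the bound: take $L=2$, $d=2$, $r=\hat r=1$, $\sigma_1^*=1$, $\epsilon=10^{-2}$, and an orthogonal initialization aligned with $\bm U^*\bm V^{*\top}$; under your scaling the original's error squared is about $0.98$ while the compressed one's is about $0.9998$, so the claimed inequality itself fails for that admissible initialization. The statement is only true when both networks are read at the same per-factor scale.

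The repair is to adopt the convention the paper actually uses in its proof (each $\bm W_l(0)$ is an orthogonal matrix scaled by $\epsilon$, matching Algorithm~\ref{alg:alg} and the treatment of the compressed layers in Theorem~\ref{thm:parsimony}), so that the original's end-to-end singular values are all $\epsilon^{L}$ as well. Then Lemma~\ref{lem:von} gives
$\|\bm W_{L:1}(0)-\bm M^*\|_{\mathsf F}^2 \ge \sum_{i=1}^{r}(\epsilon^{L}-\sigma_i^*)^2+(d-r)\epsilon^{2L}$,
your exact identity for the compressed network becomes $\sum_{i=1}^{r}(\epsilon^{L}-\sigma_i^*)^2+(\hat r-r)\epsilon^{2L}$, and the corollary follows term by term from $d\ge\hat r$, with no condition relating $\epsilon$ to the spectrum of $\bm M^*$. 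This is exactly the paper's three-step chain: apply Lemma~\ref{lem:von}, drop the $(d-\hat r)$ trailing terms, and invoke Theorem~\ref{thm:parsimony} at $t=0$.
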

This result can easily be established by applying Theorem~\ref{thm:parsimony} at GD iterate $t=0$. In the next section, we will demonstrate that the inequality in Equation~(\ref{eq:recovery_error}) holds for all GD iterations $t\geq 0$, which involves leveraging the incremental learning phenomenon along with an analysis using gradient flow.

\subsection{The Benefits of Incremental Learning}

\label{sec:benefits_inc}

In this section, we aim to establish that the inequality in Equation~(\ref{eq:recovery_error}) holds for all GD iterations $t\geq 0$ for the deep matrix factorization case. To prove such a result, we assume that both the original DLN and the compressed DLN undergo incremental learning, in the sense that the singular values of both networks and their respective singular vectors are fitted sequentially. This assumption is stated with more rigor in Assumption~\ref{ass:incremental}.

\begin{assumption}
\label{ass:incremental}
Let $\bm{W}_{L:1}(t) \in \R^{d\times d}$ denote the end-to-end weight matrix at GD iterate $t$ with respect to Equation~(\ref{eq:dln_setup}) with $\mathcal{A} = \text{Id}$ and $\bM^* \in \R^{d\times d}$ be the target matrix with rank $r$. Then, GD follows an incremental learning procedure in the sense that there exist small constants $c_{\text{val}}, c_{\text{vec}} \in [0, 1]$ and time points $t_1 \leq t_2 \leq \ldots \leq t_r \in \mathbb{R}$ such that
\begin{align}
    \left(\sigma_i(\bm{W}_{L:1}(t)) - \sigma_i(\bm{M}^*)\right)^2 &\leq c_{\text{val}}, 
    \\
    \langle \bm{u}_i(t), \bm{u}_i^* \rangle &\geq 1-c_{\text{vec}}, 
    \\
    \langle \bm{v}_i(t), \bm{v}_i^* \rangle &\geq 1-c_{\text{vec}}, 
    %\\
    %\textbf{or} 
    %\\
    %\|\bm{W}_{L:1}(t_i) - \bM_i\| < \epsilon (I think adding this might be confusing so I'll comment for now)
\end{align}
for all $t > t_i$ and $\lim_{\epsilon \to 0} \, \sigma_i(\bm{W}_{L:1}(t)) = 0$ for all $t < t_i$, where $\epsilon > 0$ is the initialization scale. 
\end{assumption}
\paragraph{Remarks.} %
Assumption~\ref{ass:incremental} states that there exists a sequence of time points where the $i$-th principal components of both networks are fitted to a certain precision,  %defined by the constant $c$, 
while the singular values associated with the remaining principal components remain close to zero, assuming that the initialization scale $\epsilon$ is small. This assumption has been widely studied and adopted for the two-layer case~\cite{incremental2, jiang2023algorithmic}, with some results extending to the deep matrix factorization case~\cite{li2020towards,jacot2021saddle, chou2023gradient, gidel}. 
%For example,~\cite{gidel} studies the incremental learning phenomenon for DLNs by rigorously determining the approximate occurrences of the time points $t_i$. %Moreover,~\cite{jiang2023algorithmic} proved that for the case of 2-layer case, the end-to-end matrix fits the first $r$ principal components of the target successively.
In Figure~\ref{fig:incre_mf_r5}, we further show the occurrence of this phenomenon and demonstrate its validity with extensive experimental results in Appendix~\ref{sec:incremental_app}. In Appendix~\ref{sec:incremental_app}, we also show that generally, we have $c_{\text{vec}} = c_{\text{val}} = 0$, which will play a role in our proofs as well. 
While we initially assume this phenomenon for deep matrix factorization, our extensive experiments also confirm that Assumption~\ref{ass:incremental} holds for low-rank matrix sensing and completion.
By using this phenomenon, we can analyze the singular values of the original and compressed DLN one at a time, and show that the singular values of the compressed network are fitted more quickly than those of the original network, leading to faster convergence in terms of iteration complexity. Our main result is established in Theorem~\ref{thm:recovery_mf}.
In Section~\ref{sec:incremental_app}, we also extensively demonstrate that Assumption~\ref{ass:incremental} holds for low-rank matrix sensing and completion.
%While we initially assume this phenomenon for the deep matrix factorization case, our extensive experiments also confirm that Assumption~\ref{ass:incremental} holds for low-rank matrix sensing and completion, as demonstrated in Section~\ref{sec:incremental_app}.

\begin{figure}[t!]
    \centering
    \includegraphics[width=0.95\textwidth]{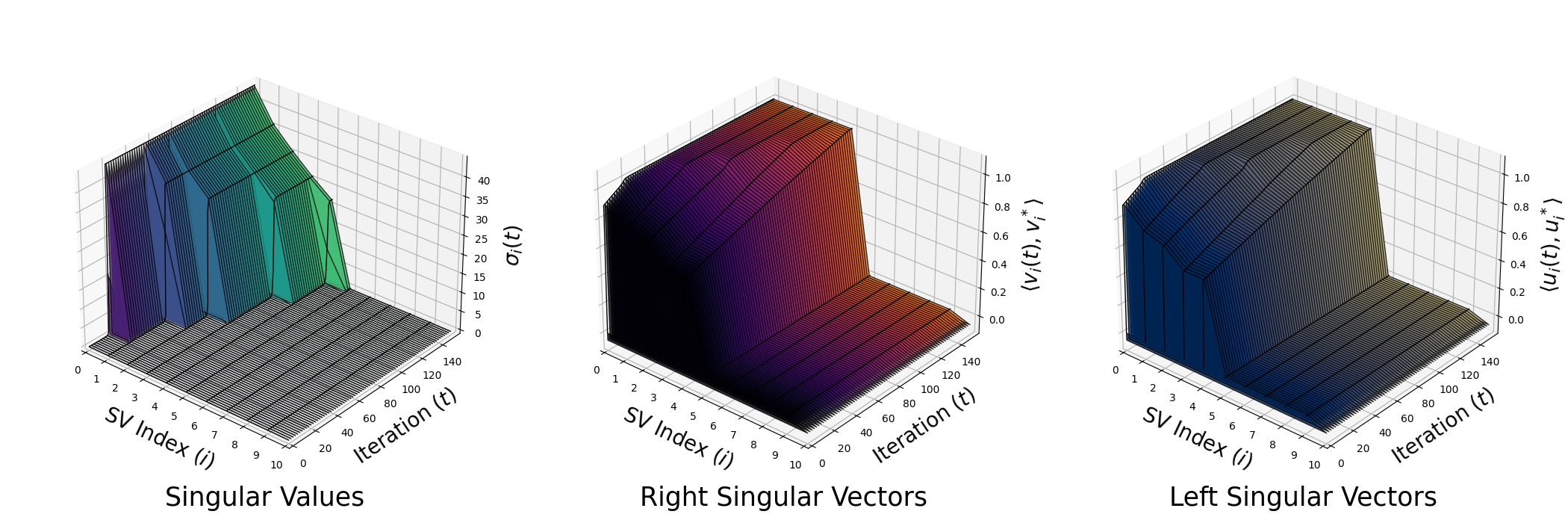}
    \caption{\textbf{Occurrence of the incremental learning phenomenon in deep matrix factorization.} We observe that the first $r=5$ singular values are fitted incrementally, along with their respective singular subspaces, corroborating Assumption~\ref{ass:incremental}.}
    \label{fig:incre_mf_r5}
\end{figure}

\begin{theorem}
\label{thm:recovery_mf}
      Let $\bM^* \in \R^{d\times d}$ be a rank-$r$ matrix and let $\hat{r} \in \mathbb{N}$ such that $\hat{r} \geq r$. Suppose that we run gradient flow with respect to the original DLN in Equation~(\ref{eq:dln_setup}) and with respect to the compressed network defined in Equation~(\ref{eq:comp_deep_mf}) with $\mathcal{A} = \text{Id}$. Then, if Assumption~\ref{ass:incremental} holds such that $c_{\text{vec}} = 0$, we have that $\forall t\geq 0$,
    \begin{align}
        \|\bm{W}_{L:1}(t) - \bM^*\|^2_{\mathsf{F}} \geq \|\widetilde{\bm{W}}_{L:1}(t) - \bM^*\|^2_{\mathsf{F}}.
    \end{align}
\end{theorem}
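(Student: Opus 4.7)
The plan is to reduce the Frobenius-norm comparison to a coordinate-wise comparison of individual singular values, and then to prove each coordinate inequality via an autonomous-ODE monotonicity argument. The core intuition is that, by spectral initialization and Theorem~\ref{thm:parsimony}, the compressed DLN begins to evolve every singular value at $t=0$ already aligned with the corresponding singular vectors of $\bm{M}^*$, whereas, by Assumption~\ref{ass:incremental}, the original DLN only begins to fit its $i$-th singular value at some later time $t_i$.

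To set up the comparison I would first obtain a clean spectral description of both sides. By the gradient-flow analog of Theorem~\ref{thm:parsimony},
\[
\widetilde{\bm{W}}_{L:1}(t) = \bm{U}^*_{\hat{r}} \cdot \mathrm{diag}\bigl(\lambda_1(t)^L,\ldots,\lambda_r(t)^L,\beta(t)^L,\ldots,\beta(t)^L\bigr) \cdot \bm{V}_{\hat{r}}^{*\top},
\]
so $\|\widetilde{\bm{W}}_{L:1}(t) - \bm{M}^*\|_{\sfF}^2 = \sum_{i=1}^r (\lambda_i(t)^L - \sigma_i^*)^2 + (\hat{r}-r)\beta(t)^{2L}$, while Mirsky's inequality gives $\|\bm{W}_{L:1}(t) - \bm{M}^*\|_{\sfF}^2 \geq \sum_{i=1}^r (\sigma_i(\bm{W}_{L:1}(t)) - \sigma_i^*)^2 + \sum_{i=r+1}^d \sigma_i(\bm{W}_{L:1}(t))^2$. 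Under gradient flow, the recursion of Theorem~\ref{thm:parsimony} becomes the autonomous scalar ODE $\dot{\lambda}_i = -\lambda_i^{L-1}(\lambda_i^L - \sigma_i^*)$ with $\lambda_i(0) = \epsilon$, so $\lambda_i(t)^L$ increases monotonically and stays inside $[\epsilon^L,\sigma_i^*]$. Under Assumption~\ref{ass:incremental} with $c_{\mathrm{vec}} = 0$, once $t \geq t_i$ the $i$-th singular direction of $\bm{W}_{L:1}(t)$ is exactly $(\bm{u}_i^*,\bm{v}_i^*)$; restricting gradient flow to this aligned mode and using the balanced property preserved by orthogonal initialization reduces the dynamics of $\sigma_i(\bm{W}_{L:1}(t))$ to the \emph{same} autonomous ODE starting from effective initial value $\epsilon^L$ at time $t_i$ in the $\epsilon \to 0$ limit. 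Since the compressed trajectory has had $t_i$ extra units of time to flow on a monotone ODE, we obtain $0 \leq \sigma_i(\bm{W}_{L:1}(t)) \leq \lambda_i(t)^L \leq \sigma_i^*$ for every $t \geq t_i$, which yields $(\sigma_i^* - \lambda_i(t)^L)^2 \leq (\sigma_i^* - \sigma_i(\bm{W}_{L:1}(t)))^2$.

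For $t < t_i$, Assumption~\ref{ass:incremental} forces $\sigma_i(\bm{W}_{L:1}(t)) \to 0$ in the $\epsilon \to 0$ limit, so the original contribution saturates at $(\sigma_i^*)^2$ while the compressed contribution remains at most $(\sigma_i^*)^2$. For the trailing block, $(\hat{r}-r)\beta(t)^{2L} \to 0$ as $\epsilon \to 0$ (since $\beta(t)$ is non-increasing from $\epsilon$), whereas the original's $\sum_{i=r+1}^d \sigma_i(\bm{W}_{L:1}(t))^2$ is non-negative, so summing the coordinate-wise inequalities over $i \in [r]$ and adding the trailing terms gives the claim. The main obstacle I expect is the reduction inside the second paragraph: rigorously showing that, once Assumption~\ref{ass:incremental} activates for index $i$, the scalar $\sigma_i(\bm{W}_{L:1}(t))$ really does satisfy the \emph{same} scalar ODE as $\lambda_i(t)^L$ with the right initial condition. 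This requires decoupling the gradient flow on the $i$-th mode from both the already-converged lower-index modes and the still-dormant higher-index ones, which is precisely where the $c_{\mathrm{vec}} = 0$ hypothesis together with the implicit small-$\epsilon$ limit do the heavy lifting.
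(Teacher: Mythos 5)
Your overall skeleton is the same as the paper's: reduce the Frobenius comparison, via the von Neumann/Mirsky-type inequality, to a coordinate-wise comparison of singular values, describe the compressed network exactly through the gradient-flow analog of Theorem~\ref{thm:parsimony}, and use incremental learning to handle the dormant phases and the trailing block. The genuine gap is precisely the step you flag yourself: the claim that, once mode $i$ activates, $\sigma_i(\bm{W}_{L:1}(t))$ obeys the \emph{same} autonomous ODE as $\lambda_i(t)^L$, restarted from $\epsilon^L$ at time $t_i$. Assumption~\ref{ass:incremental} does not give you this. With $c_{\text{vec}}=0$ it guarantees $\bm{u}_i(t)=\bm{u}_i^*$ and $\bm{v}_i(t)=\bm{v}_i^*$ only for $t>t_i$, i.e.\ \emph{after} the $i$-th component has been fitted (indeed for $t>t_i$ it also forces $(\sigma_i(t)-\sigma_i^*)^2\le c_{\text{val}}$, so ``starts at $\epsilon^L$ at time $t_i$'' is inconsistent with the assumption's indexing of $t_i$). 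During the interval in which mode $i$ is actually being learned, the assumption says nothing about the alignment of the $i$-th singular vectors, and nothing in your proposal supplies the decoupling from lower and higher modes that the exact-ODE reduction, and hence your time-translation monotonicity argument, would require.

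The paper closes exactly this hole with a weaker, one-sided statement that suffices. By Proposition~\ref{thm:arora} (valid here because $\epsilon$-orthogonal initialization is balanced), the singular values of the original network satisfy $\dot\sigma_{i+1}(t) = -L\,(\sigma_{i+1}^2(t))^{1-1/L}\big(\sigma_{i+1}(t) - \bm{u}_{i+1}^\top(t)\bM^*\bm{v}_{i+1}(t)\big)$. After $t_i^{\text{orig}}$, the hypothesis $c_{\text{vec}}=0$ pins the top-$i$ singular vectors of $\bm{W}_{L:1}(t)$ to $\bm{u}_j^*,\bm{v}_j^*$, so $\bm{u}_{i+1}(t),\bm{v}_{i+1}(t)$ are orthogonal to those directions and $\bm{u}_{i+1}^\top(t)\bM^*\bm{v}_{i+1}(t)\le\sigma_{i+1}^*$; this yields only the differential \emph{inequality} $\dot\sigma_{i+1}\le -L(\sigma_{i+1}^2)^{1-1/L}(\sigma_{i+1}-\sigma_{i+1}^*)$, never an equality. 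The compressed network satisfies the corresponding equality by Theorem~\ref{thm:parsimony}, and the ODE comparison theorem, together with the dormancy $\sigma_{i+1}=\widetilde{\sigma}_{i+1}=\epsilon^L$ before the respective activation times (which orders the initial conditions), gives $\epsilon^L\le\sigma_{i+1}(t)\le\widetilde{\sigma}_{i+1}(t)\le\sigma_{i+1}^*$ for all $t$, hence the coordinate-wise domination, by induction on $i$. The residual modes $j>r$ are handled by noting $\bm{u}_j^\top(t)\bM^*\bm{v}_j(t)=0$, so both networks' trailing values follow identical decay; your ``nonnegative versus vanishing as $\epsilon\to0$'' bound for that block is serviceable only in the small-$\epsilon$ limit. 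If you replace your exact-ODE/time-delay step with this differential-inequality-plus-comparison argument, your proof goes through and coincides with the paper's.
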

\paragraph{Remarks.} The proof relies on analyzing the evolution of singular values of $\bm{W}_{L:1}(t)$ and $\widetilde{\bm{W}}_{L:1}(t)$ using gradient flow. Through this analysis, we demonstrate that the singular values of the compressed network are fitted more quickly than those of the original DLN throughout all iterations of GD, thereby highlighting the benefits of spectral initialization and our compression technique.
We remark that there is a slight discrepancy between our algorithm and the analysis. Our algorithm employs discrete gradient steps to update the weight matrices. However, it is well-established that using differential equations (and hence gradient flow) for theoretical analysis has a rich history, and it is known that discrete gradient steps approximate the gradient flow trajectories as long as the step size is sufficiently small~\cite{history, arora2018optimization}. 

To extend our analysis to the deep matrix sensing case, we would need to prove a variant of Theorem~\ref{thm:parsimony}, where we characterize the dynamics of the singular values of the compressed network for all GD iterations. However, due to the sensing operator, we only obtain an estimate of the singular subspaces of the target matrix at initialization, and so we do not expect the compressed network to remain diagonal, making the characterization of the dynamics much more challenging. We leave it for future work to extend it to this case, and believe that some promising avenues might be to impose some structure on $\mathcal{A}(\cdot)$, such as RIP~\cite{rip}.

\section{Experiments}
\label{sec:experiments}

In this section, we showcase our experimental results for the problems discussed in Section~\ref{sec:problem}. In Section~\ref{sec:lowrank_exp}, we present results for solving low-rank matrix recovery problems, including matrix sensing and completion, on both synthetic and real data. For matrix factorization, our goal is to demonstrate practical validity of Theorem~\ref{thm:recovery_mf}.
In Section~\ref{sec:nonlinear_exp}, we show that adding linear layers to a deep network indeed improves generalization, and provide results on compressing the linear layers using our proposed method.
%Finally, in Section~\ref{sec:ablation}, we conduct ablation studies on the choice of $\alpha$ and the use of random subspaces for $\bm{U}$ and $\bm{V}$.
%Due to limited space, we briefly touch upon these ablation studies and provide more results in the Appendix.

\subsection{Matrix Recovery Problems}
\label{sec:lowrank_exp}
Throughout all of the experiments in this section, we use a DLN of depth $L=3$ and small initialization scale $\epsilon=10^{-3}$. To quantitatively measure the performance between the original and compressed DLN, we use the recovery error defined as
\begin{align*}
    \text{Recovery Error} = \frac{\|\widehat{\bm{W}} - \bM^*\|_{\mathsf{F}}}{\|\bM^*\|_{\mathsf{F}}},
\end{align*}
where $\widehat{\bm{W}}$ is an estimate of the target matrix. 

\paragraph{Deep Matrix Factorization.} For deep matrix factorization, we synthetically generate a data matrix $\bM^* \in \R^{d\times d}$ with $d=100$ and rank $r=10$. We use the left and right singular vectors of $\bM^*$ for $\widetilde{\bm{W}}_{L}(0) \in \mathcal{O}^{d \times \hat{r}}$ and $\widetilde{\bm{W}}_{1}(0) \in \mathcal{O}^{\hat{r} \times d}$, where we choose $\hat{r} = 20$ as our upper bound on the rank $r$. We run GD with $\eta = 10$ for the learning rate and $\alpha = 5$ for the scale. In Figure~\ref{fig:mf_r5}, we can observe that the compressed network maintains a lower recovery error throughout all iterations of GD, corroborating Theorem~\ref{thm:recovery_mf}. As a result, this leads to two advantages: (1) a reduction in the number of iterations to converge to a specific threshold when using the compressed DLN, and (2) a further reduction in time complexity, as each iteration of the compressed DLN is much faster than that of the original DLN.

%\begin{figure}[t!]
%    \centering
%    \includegraphics[width=0.75\textwidth]{}
%    \caption{Results for deep matrix completion on the MovieLens 100K dataset. We observe similar phenomena as seen in the deep matrix factorization case, with significantly reduced training time (more than $6\times$ speed up). \ZK{The drop on the test loss here seems weird, should I upload a newer version with smaller inner loops to better illustrate training dynamics?}}
%    \label{fig:movielens}
%\end{figure}

\begin{figure}[t!]
    \centering
    \includegraphics[width=0.75\textwidth]{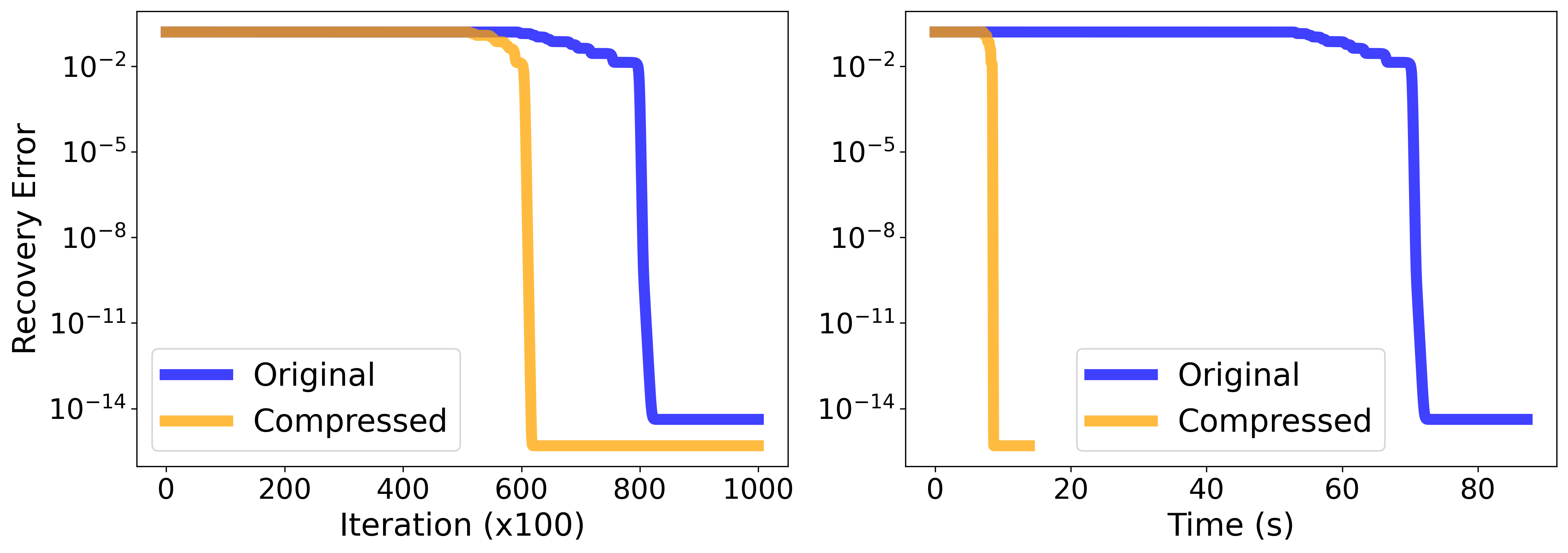}
    \caption{\textbf{Empirical results on deep linear matrix factorization.} Left: Shows that our compressed network achieves a lower recovery error than the original network, corroborating \Cref{thm:recovery_mf}. Right: Demonstrates the speed up over the original network.}
    \label{fig:mf_r5}
\end{figure}

\paragraph{Deep Matrix Completion.} Here, we present our results for deep matrix completion and present results for deep matrix sensing in Appendix~\ref{sec:deep_ms_app}.
Our goal is to show that choosing the singular subspaces of the surrogate matrix in Equation~(\ref{eq:surrogate_ms}) serve as good initial points for $\widetilde{\bm{W}}_L(0)$ and $\widetilde{\bm{W}}_1(0)$.

%We would like to note that for this case, the surrogate matrix simplifies to 
%\begin{align*}
%    \bMsurr = \frac{1}{p} \mathcal{P}_{\Omega}(\bM^*),
%\end{align*}
%as $\mathcal{P}_{\Omega}(\cdot) = \mathcal{A}^{\dagger}\mathcal{A}(\cdot)$ and $p$ denotes the number of observed entries. 
Given $\widetilde{\bm{W}}_L(0)$ and $\widetilde{\bm{W}}_1(0)$, we also compare the performance of our compressed network to AltMin~\cite{jain2012lowrank}, which involves alternatingly minimizing over just the factor matrices to construct $\widehat{\bm{W}} = \widetilde{\bm{W}}_L\widetilde{\bm{W}}_1$.
Firstly, we compare these algorithms using synthetic data, where we follow the setup in deep matrix factorization. For the observation set $\bm{\Omega}$, we consider the ``missing completely at random'' (MCAR) setting, where each entry of $\bm{\Omega}$ is Bernoulli with probability $p$.
We choose $p=0.3$ so that roughly $30\%$ of the observations are observed. We run GD with learning rate $\eta = 10$ and $\alpha=5$. In Figure~\ref{fig:mc_synthetic}, we observe the same trends as seen in the deep matrix factorization case, where our compressed DLN consistently exhibits lower recovery error than the original DLN. Additionally, it is evident that AltMin fails to recover the underlying matrix completely, as the rank is overspecified. We observe that while the training loss goes to zero, the recovery error does not decrease, as there are insufficient measurements for recovery using this parameterization. To efficiently use AltMin (or both networks for the $L=2$ case), one would need to obtain a more accurate estimate of the rank $\hat{r}$ for recovery.

Next, we compare these algorithms on the MovieLens 100K dataset~\cite{movielens}. Since the MovieLens dataset is not precisely a low-rank matrix, this experiment also serves to demonstrate the performance of DLNs (as well as compressed DLNs) on approximately low-rank matrix completion.
We randomly choose $80\%$ of the samples to train the network and test on the remaining $20\%$. For the hyperparameters, we choose $\hat{r} = 10$, $\eta = 0.5$, and $\alpha = 5$. As depicted in Figure~\ref{fig:mc_movielens}, we find that our compressed network achieves the same recovery error as the original DLN in less than $5\times$ the time. While AltMin initially finds solutions with lower recovery error at a faster rate, its recovery error eventually plateaus, whereas both DLNs can find solutions with overall lower recovery error.

% \ZK{maybe we should add one more experiment on real data, searching for some matrix completion challenges.}

%\paragraph{Deep Matrix Sensing.} Similarly, we generate measurements $\bm{y} = \mathcal{A}(\bM^*) \in \R^m$, where $\bM^* \in \R^{d\times d}$ with $d=100$ and rank $r=10$. We generate a total of $m=3000$ observations, where each entry of $\bm{A}_i$ were drawn from $\mathcal{N}(0, 1)$ for all $i \in [m]$. We choose $\hat{r} = 20$ and run GD with $\eta = 10$ and $\alpha = 0.185$.The results are shown in Figure~\ref{fig:mc_ms}, where we observe behavior similar to that of the previous low-rank matrix recovery problems.
%\begin{figure}[t!]
%     \centering
%     \begin{subfigure}{0.495\textwidth}
%         \centering
%         \includegraphics[width=\textwidth]{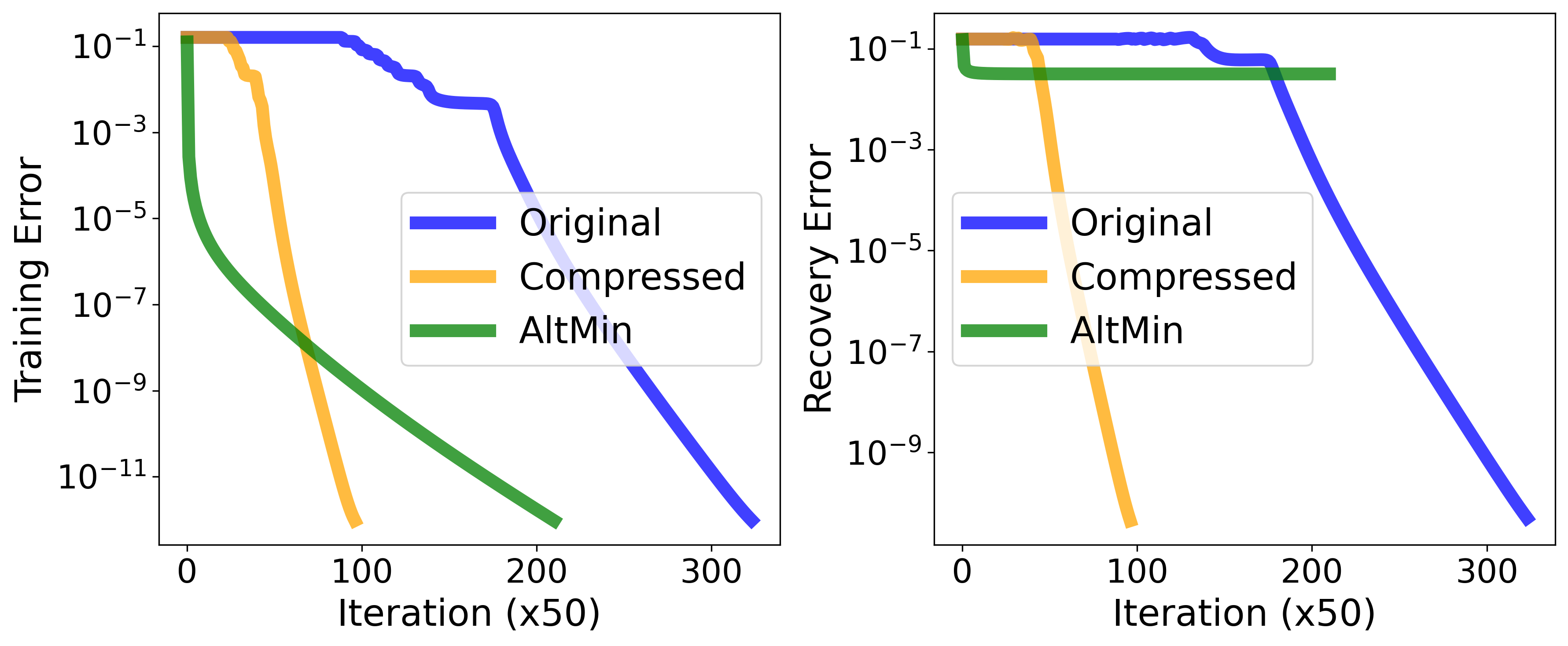}
%         \caption*{Results on Synthetic Data}
%     \end{subfigure}
%     \begin{subfigure}{0.495\textwidth}
%         \centering
%         \includegraphics[width=\textwidth]{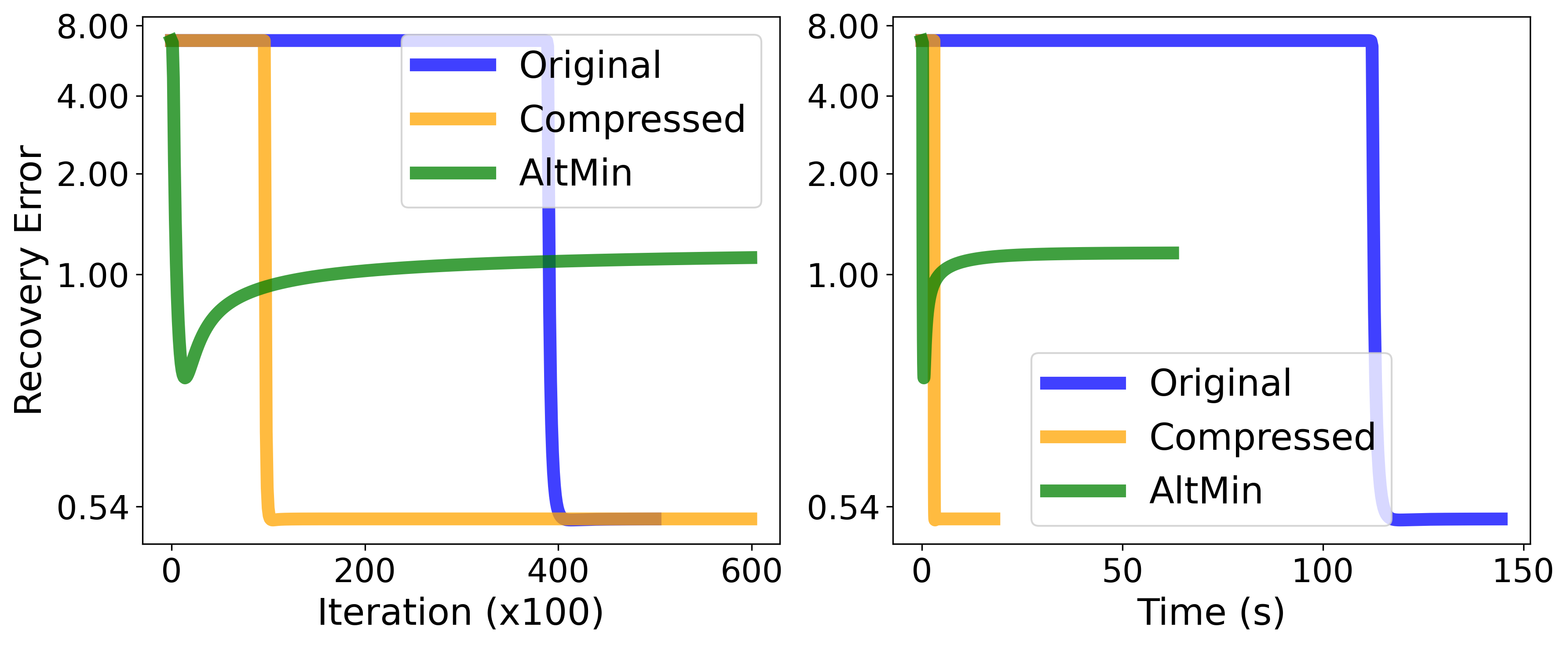}
%         \caption*{Results on MovieLens Data}
%     \end{subfigure}
%     \caption{
%     \textbf{Results on matrix completion with synthetic and real data.}
%     Left: Results on matrix completion with only $30\%$ observed entries for recovering a rank $r=10$ matrix. Right: Results on MovieLens dataset with $\hat{r} = 10$. For both experiments, we observe that our compressed network achieves faster convergence than the original network and also outperforms overparameterized AltMin~\cite{jain2012lowrank}.}
%     \label{fig:mc_ms}
%\end{figure}

\begin{figure}[t!]
    \centering
    \includegraphics[width=0.7\textwidth]{}
    \caption{\textbf{Results on matrix completion with synthetic data.} The left figure demonstrates the training error, whereas the right figure shows the recovery error for completing a rank $r=10$ matrix with only $30\%$ observed entries. This experiment demonstrates the superiority of our method over the original DLN and AltMin~\cite{jain2012lowrank}.}
    \label{fig:mc_synthetic}
\end{figure}

\begin{figure}[t!]
    \centering
    \includegraphics[width=0.7\textwidth]{}
    \caption{\textbf{Results on matrix completion with the MovieLens dataset~\cite{movielens}.} The left figure demonstrates the recovery error, whereas the right figure shows the training time with $\hat{r} = 10$. Similar to the synthetic case, this result highlights the effectiveness of our method.}
    \label{fig:mc_movielens}
\end{figure}

\subsection{Applications of DLNs for Deep Nonlinear Networks}
\label{sec:nonlinear_exp}

To demonstrate the application of our compressed DLN on deep nonlinear networks,
we consider the setting in Section~\ref{sec:nonlinear_extension}, where we overparameterize the penultimate layer using a DLN.
We consider this additional overparameterization using two network architectures: MLPs and ViTs.
For MLPs, we train two MLPs: (1) one MLP with 3 hidden layers and one linear penultimate layer and (2) one MLP with $3$ hidden layers and one $3$-layer DLN as the penultimate layer.
Mathematically, this amounts to the compressed network
\begin{align*}
    \psi_{\widetilde{\bm{\Theta}}}(\bm{x}) = \bm{W}_6\cdot\rho(\underbrace{\widetilde{\bm{W}}_5\widetilde{\bm{W}}_4\widetilde{\bm{W}}_3}_{\widetilde{\bm{W}}_{5:3}} \cdot\rho(\bm{W}_{2} \rho(\bm{W}_1\bm{x})).
\end{align*}
For ViT, we consider 
a smaller variant of ViT-base, which consists of $6$ alternating layers of multi-headed self-attention and MLP blocks. We set the token dimension as $512$ and the MLP dimension as $3072$ and compress last linear layer in the $6$-th MLP block, which can also be seen as the penultimate layer of the ViT.
Here, our objective is two-fold: (1) to demonstrate that overparameterizing the penultimate layer has better generalization capabilities and (2) to show that the compressed networks have similar performance to the over-parameterized networks while significantly reducing the training time.

We illustrate this by training the MLPs on the FashionMNIST dataset and training the ViTs on the CIFAR-10 dataset. For $\hat{r}$, we choose $\hat{r} = 4d_y$, where $d_y$ denotes the number of classes in the dataset. 
For the MLPs, we run GD with $\eta = 5\times10^{-3}$ and $\alpha = 1$ and used $\eta=1\times10^{-4}$, $\alpha=1$ with cosine annealing for the ViTs. Both models were tested $5$ times starting from random initialization to account for variability. In Table~\ref{table:quant_nonlinear}, we observe that our compressed network can achieve the highest accuracy for MLPs on average, with very competitive results for ViTs. However, the compressed network takes significantly less time to train while storing fewer parameters.
%In Figure~\ref{fig:mnist_cifar}, we present similar results, where we can see a reduction in training run-time by almost $2\times$. 
Overall, these results highlight the effectiveness of our approach, demonstrating that one can reduce runtime and memory without sacrificing the performance of deeper and wider models.
%This result also holds promising implications for training MLP, suggesting that low-rank training for specific architectures may be more advantageous than for others.
%In Table~\ref{table:quant_nonlinear}, we also demonstrate that the compressed network has advantages in memory complexity over both networks.

\begin{table}[t!]
\begin{center}

\resizebox{\textwidth}{!}{%
\begin{tabular}{c c|c c c c c } 
\hline
\multicolumn{2}{c|}{Method} & Test Accuracy ($\%$) & Time & MACs & Memory (Train) & \# of Parameters\\
\hline
&Original & $89.87\pm 0.216$ & \underline{$83.02 \pm 20.87$} sec  & $\underline{2.370 \times 10^7}$ & \underline{25.80} MB & \underline{1.850} M\\ 
MLP&DLN& $\underline{90.20\pm 0.040}$ & $95.82\pm 12.29$ sec & $3.940 \times 10^7$ & 35.90 MB & 3.080 M\\ 
&C-DLN & $\mathbf{90.28\pm 0.104}$ & $\mathbf{54.85\pm 2.950}$ sec & $\mathbf{1.670 \times 10^7}$ & \textbf{21.60} MB & \textbf{1.300} M\\ 
\hline
&Original & $84.72 \pm 0.120$ & \underline{$32.80 \pm 2.786$} min & $\underline{2.100 \times 10^{10}}$ & \underline{$7.110 $} GB & \underline{25.20} M\\ 
ViT&DLN & $\mathbf{84.90 \pm 0.230}$ & $45.80 \pm 2.638 $ min & $3.640 \times 10^{10}$ & $7.811$ GB & 44.10 M\\ 
&C-DLN & $\underline{84.89 \pm 0.187}$ & $\mathbf{31.40 \pm 1.855}$ min & $\mathbf{1.980 \times 10^{10}}$ & $\mathbf{7.090}$ GB & \textbf{23.80} M\\ 
\hline
\end{tabular}}
\end{center}
\caption{\textbf{Quantitative results for nonlinear networks.} DLN and C-DLN denote networks that are overparameterized using their respective models. The reported time is the amount of time taken for the models to achieve $99\%$ test accuracy. Note that the reduction in memory and MACs is not substantial as we only consider compressing the penultimate layer. Best results are in bold and second best results are underlined.}
\label{table:quant_nonlinear}
\end{table}

\begin{comment}
\begin{tabular}{c c|c c c c c } 
\hline
\multicolumn{2}{c|}{Method} & Test Accuracy & Time & MACs & Memory (Train) & \# of Parameters\\
\hline
&Original & $89.87\pm 0.2155$ & \underline{$83.024 \pm 20.872$ s}  & $\underline{2.37 \times 10^7}$ & \underline{25.8 MB} & \underline{1.85 M}\\ 
MLP&DLN& $\underline{90.204\pm 0.0398}$ & $95.820\pm 12.294$ s & $3.94 \times 10^7$ & 35.9 MB & 3.08 M\\ 
&C-DLN & $\mathbf{90.28\pm 0.1042}$ & $\mathbf{54.846\pm 2.946}$ \textbf{s} & $\mathbf{1.67 \times 10^7}$ & \textbf{21.6 MB} & \textbf{1.30 M}\\ 
\hline
&Original & $84.718 \pm 0.120$ & \underline{$32.8 \pm 2.786$ min} & $\underline{2.10 \times 10^{10}}$ & \underline{$7.11 $ GB} & \underline{25.2 M}\\ 
ViT&DLN & $\mathbf{84.898 \pm 0.230}$ & $45.8 \pm 2.638 $min & $3.64 \times 10^{10}$ & $7.81 10$ GB & 44.1 M\\ 
&C-DLN & $\underline{84.890 \pm 0.187}$ & $\mathbf{31.4 \pm 1.855}$ \textbf{min} & $\mathbf{1.98 \times 10^{10}}$ & $\mathbf{7.09}$ \textbf{GB} & \textbf{23.8 M}\\ 
\hline
\end{tabular}
%%%%%%================================

\begin{table}[h!]
    \begin{center}
    \resizebox{\textwidth}{!}{%
    \begin{tabular}{c|c c c | c c c}
         \multirow{2}{*}{Metric} & \multicolumn{3}{c|}{MLP} & \multicolumn{3}{c}{ViT} \\
         & Original& DLN& C-DLN&Original& DLN& C-DLN\\
         \hline
         Test Accuracy & $89.87\pm 0.22$ & $\underline{90.20\pm 0.04}$ & $\mathbf{90.28\pm 0.10}$ & $84.72 \pm 0.12$ & $\mathbf{84.90 \pm 0.23}$ & $\underline{84.89 \pm 0.19}$ \\
         
    \end{tabular}}
    \end{center}
    \caption{Caption}
    \label{tab:my_label}
\end{table}
\end{comment}

\section{Conclusion}

In this work, we proposed an efficient technique to compress overparameterized networks by studying the learning dynamics of DLNs. Our method involved reducing the width of the intermediate layers along with a spectral initialization scheme that improved convergence compared to its wider counterpart. We rigorously demonstrated that our compressed network has a smaller recovery error than the original network for the deep matrix factorization case,
and empirically verified this for deep matrix sensing. We also further demonstrated its applicability to deep nonlinear networks.
We believe that our work opens doors to many exciting questions: the theoretical analysis of the learning trajectory for deep matrix sensing, and the practical application of further improving low-rank training for deep nonlinear networks. Finally, we conclude with a survey of related works and subsequently discussing their connections to our work.

\paragraph{Deep Linear Networks.} Despite their simplicity, deep linear networks have been widely adopted for theoretical analysis, as it has been observed that they share similar behavioral characteristics as their nonlinear counterparts~\cite{saxe2014exact}. Some examples include the study of their optimization landscape~\cite{opt1, opt2, opt3, dln_benign} and the study of understanding feature representation in deep networks~\cite{nc1, nc2,jiang2023generalized, lr1,lr2,wang2022linear,zhou2022optimization,zhou2022all}. Our work highlights that these deep linear models are not only useful analytical tools but also powerful models for solving low-rank matrix recovery tasks. This observation is built upon some of the work done by Arora et al.~\cite{arora2019implicit}, where they show that deeper models tend towards more accurate solutions for these matrix recovery problems in settings where the number of observations are very limited. The most relevant works to this study are those conducted by Yaras et al.~\cite{yaras2023law} and Khodak et al.~\cite{khodak2021initialization}. We were unaware of the work by Khodak et al.~\cite{khodak2021initialization} at the time of writing this paper, where they explore the advantages of spectral initialization and weight decay for deep nonlinear networks. Their methods involve assuming a Gaussian prior on the weights of deep nonlinear networks, which they then apply spectral initialization with weight decay to train the weights of such networks. However, their methods are not directly applicable to the problems addressed in this paper, as a Gaussian prior alone is insufficient to realize the benefits of the compressed network as demonstrated in this study. This point is further emphasized by our theory -- the spectral initialization step must incorporate some function of the data for it to be meaningful. The study by Yaras et al.~\cite{yaras2023law} investigates the learning dynamics of DLNs starting from orthogonal initialization in the deep matrix factorization case. Their theory includes an additional weight decay parameter, which can also be extended to our theory.

\paragraph{Implicit Bias in Overparameterized Networks.} Recently, there has been an overwhelming amount of work devoted to understanding the generalization capabilities of deep networks~\cite{gen1, gen2, gen3, neyshabur2017implicit}. These overparameterized models have shown to produce solutions that generalize exceptionally well despite being overparameterized, seemingly contradicting traditional learning theory~\cite{allenzhu2020learning}. Their effectiveness is often attributed, at least in part, to the implicit bias (or regularization) inherent in their learning dynamics, which favors certain solutions. These solutions are often divided into two categories: ``simple’’ solutions, where GD tends to find solutions that satisfy known equations or are solutions to functions~\cite{simple1, simple2, simple3, shenouda2023vectorvalued}, and low-rank solutions, where GD is biased towards solutions that are inherently low-rank, assuming that the underlying problem is low-rank~\cite{arora2019implicit, bias1, bias2}. There also exists a line of work that leverages these implicit properties to improve robustness~\cite{robust1, robust2}. Our work builds upon the low-rank bias of GD and shows that the bias in GD for training DLNs can be leveraged to efficiently solve low-rank matrix recovery problems.

\paragraph{Low-Rank Training and Adaptation.} 

Low-rank training refers to modeling the weight updates of networks (whether shallow or deep) as a product of low-rank matrices, rather than updating the entire matrix itself. By updating the low-rank matrices, one can reduce training costs and enhance generalization by exploiting its instrinsic structure. This method has a long and rich history, dating back to the Burer-Monteiro factorization~\cite{burer} and including efficient implementations of alternating minimization~\cite{ma2023provably, spectral}. Our work can be viewed as an improvement upon these techniques, as Arora et al.~\cite{arora2019implicit} have observed that deeper models are more favorable for modeling low-rank matrices. There is also a body of work related to low-rank adaptation, which generally involves low-rank fine-tuning of large models~\cite{hu2021lora, wang2023cuttlefish, zhao2023inrank, lialin2023stack,zhai2023investigating}. For instance, Hu et al.~\cite{hu2021lora} proposed Low-Rank Adaptation (LoRA), which fine-tunes large language models (LLMs) by assuming that the weight updates have a low-dimensional structure and modeling them as a product of two matrices. This method has shown to significantly reduce memory complexity with only a minimal tradeoff in test accuracy.
There also have been attempts at extending LoRA by training low-rank matrices from scratch, as such techniques not only reduce the training costs but also leads to possible performance gain by restricting the training dynamics to a low-dimensional manifold~\cite{wang2023cuttlefish, zhao2023inrank, lialin2023stack,zhai2023investigating}.
We believe that our work can enhance the utility of algorithms of these algorithms,
%\ZK{and other low-rank training techniques, where we can have a more principled perspective on allocating the rank budget to the low-rank factorization of weight matrices.}
where we can model the weight updates using a DLN. By doing so, one can reduce the training time using our compression technique, while enjoying the benefits of deeper models.

\section*{Acknowledgement}

SMK, DS, and QQ acknowledge support from NSF CAREER CCF-2143904, NSF CCF-2212066, NSF CCF-2212326, and NSF IIS 2312842, ONR N00014-22-1-2529, an AWS AI Award, a gift grant from KLA, and MICDE Catalyst Grant. SMK and LB acknowledge support from DoE award DE-SC0022186, ARO YIP W911NF1910027, and NSF CAREER CCF-1845076. Results presented in this paper were obtained using CloudBank, which is supported by the NSF under Award \#1925001. SMK would also like to thank Can Yaras for fruitful discussions. ZZ would like to thank Zaicun Li for insights and justifications regarding mathematical proofs.

%\clearpage
% Reference
% For natbib users:
%\bibliography{reference}

{\small 
\bibliographystyle{unsrt}
\bibliography{reference}
}

%%%%%%%%%%%%%%%%%%%%%%%%%%%%%%%%%%%%%%%%%%%%%%%%%%%%%%%%%%%%

\clearpage

\appendix
\onecolumn
\par\noindent\rule{\textwidth}{1pt}
\begin{center}
{\Large \bf Appendices}
\end{center}
\vspace{-0.1in}
\par\noindent\rule{\textwidth}{1pt}

\section{Additional Results}
In this section, we present additional experimental results to supplement those presented in the main text. These results include extensive plots for the validity of Assumption~\ref{ass:incremental}, results for deep matrix sensing, ablation studies, and a discussion on the prevalence of low-rank updates for nonlinear networks.
Experiments were run using either a CPU with processor 3.0 GHz Intel Xeon Gold 6154 or a NVIDIA V100 GPU.
For clarity in notation throughout the Appendix, we provide a table of notation in Table~\ref{table:notation}.

\begin{table}[h!]
\begin{center}
\begin{tabular}{l|l}
% \hline\noalign{\smallskip}
\hline
 \multirow{1}{*}{Notation} & \multicolumn{1}{c}{Definition} \\
\hline
  $L \in \mathbb{N}$ & Depth of the DLN \\
\hline
  $\hat{r} \in \mathbb{N}$ & Estimated rank for the compressed DLN\\
\hline
  $\epsilon \in \mathbb{R}_+$ & Weight initialization scale \\
\hline
  $\eta \in \mathbb{R}_+$ & Learning rate \\
\hline
  $\alpha \in \mathbb{R}_+$ & Discrepant learning rate scale \\
\hline
  $\bm{W}_l \in \mathbb{R}^{d\times d}$ & $l$-th weight matrix of original DLN \\
\hline
  $\widetilde{\bm{W}}_l \in \mathbb{R}^{\hat{r}\times \hat{r}}$ & $l$-th weight matrix of compressed DLN \\
\hline
\end{tabular}
\end{center}
\caption{Summary of the notation used throughout this work.}
\label{table:notation}
\vspace{-8pt}
\end{table}

\subsection{Experimental Results on Incremental Learning}
\label{sec:incremental_app}

Here, we aim to present more empirical results to validate our observation of the incremental learning phenomenon. To that end, we synthetically generate a target matrix $\bM^* \in \R^{d\times d}$ with $d=100$ and rank $r=5$ and $r=10$ with $\hat{r} = 2r$. We consider deep matrix factorization, sensing, and completion, where our goal is to fit the target matrix $\bM^*$ using a DLN of $L=3$ with initialization scale $\epsilon = 10^{-3}$. To train the weights, we run GD with step size $\eta = 10$ and $\alpha = 5, 2, 5$ for matrix factorization, sensing, and completion, respectively. For deep matrix sensing, each sensing matrix $\bm{A}_i$ was filled with i.i.d. Gaussian entries $\mathcal{N}(0, 1)$, for all $i \in [m]$, where $m=2000$. For matrix completion, we consider the MCAR setting, with $20\%$ observed entries. The same setup was also used to generate Figure~\ref{fig:incre_mf_r5}. The results are displayed in Figures~\ref{fig:incre_mc_r5},~\ref{fig:incre_ms_r5}~\ref{fig:incre_mf_r10},~\ref{fig:incre_ms_r10},~\ref{fig:incre_mc_r10}. The same setup was also used to generate Figure~\ref{fig:incre_mf_r5}. In Figure~\ref{fig:incre_mf_r5_2d}, we display a two-dimensional plot of the change in singular values. These plots demonstrate that throughout deep matrix factorization and sensing problems, the principal components of the DLN are fitted incrementally, starting from the largest principal component to the next. The two-dimensional plot in Figure~\ref{fig:incre_mf_r5_2d} illustrates this more precisely. We can observe that the learning of the singular values occurs one at a time, while the other singular values remain close to their initial values until the previous one is adjusted to its target singular value. These findings serve to support our assumptions and observations as outlined in Assumption~\ref{ass:incremental}.

Furthermore, recall that in our definition of Assumption~\ref{ass:incremental}, there exist constants $c_{\text{val}}, c_{\text{vec}} \in [0, 1]$ that define the precision with which the singular values and vectors fit the target values, respectively. We perform a study on deep matrix factorization to demonstrate that both $c_{\text{val}}$ and $c_{\text{vec}}$ are (very) close to $0$. To this end, we synthetically generate a target matrix $\bM^* \in \R^{d\times d}$ with $d=100$ and rank $r=3$ with learning rate $\eta = 5$. We measure the difference in singular values and vectors throughout the course of GD, and display our results in Figure~\ref{fig:ass_cvals}.

\begin{figure}[h!]
    \centering
    \includegraphics[width=0.85\textwidth]{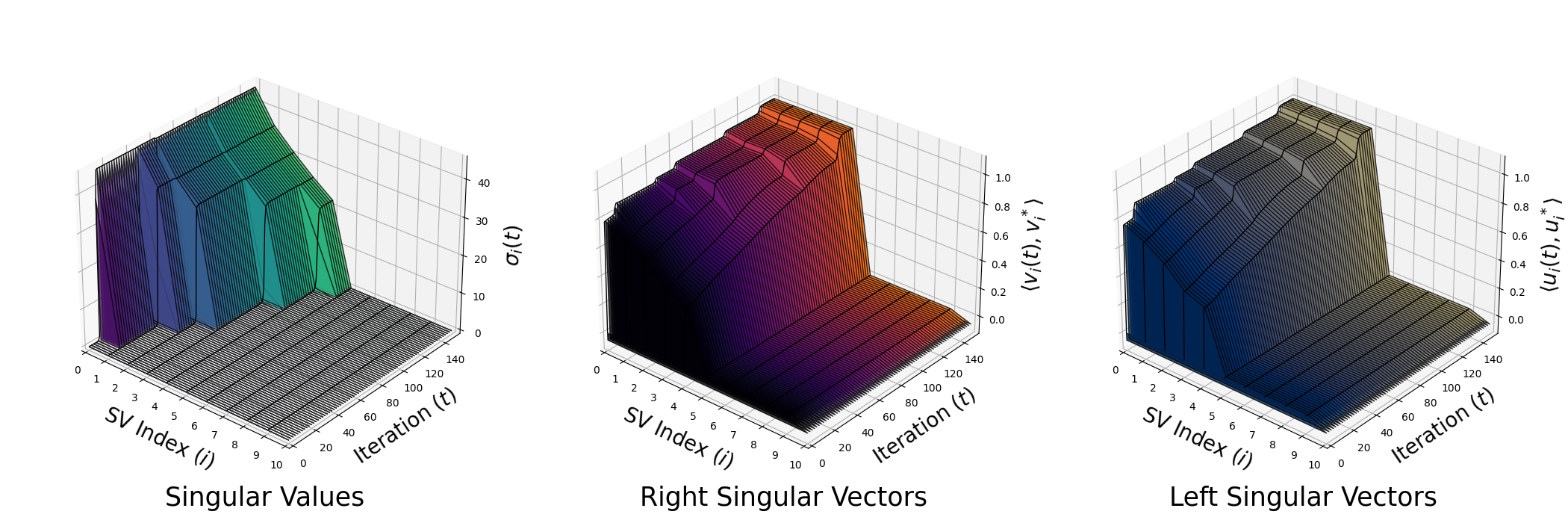}
    \caption{\textbf{Occurence of the incremental learning phenomenon in matrix completion.} We observe that the first $r=5$ singular values are fitted incrementally, along with their respective singular subspaces.}
    \label{fig:incre_mc_r5}
\end{figure}

\begin{figure}[h!]
    \centering
    \includegraphics[width=0.9\textwidth]{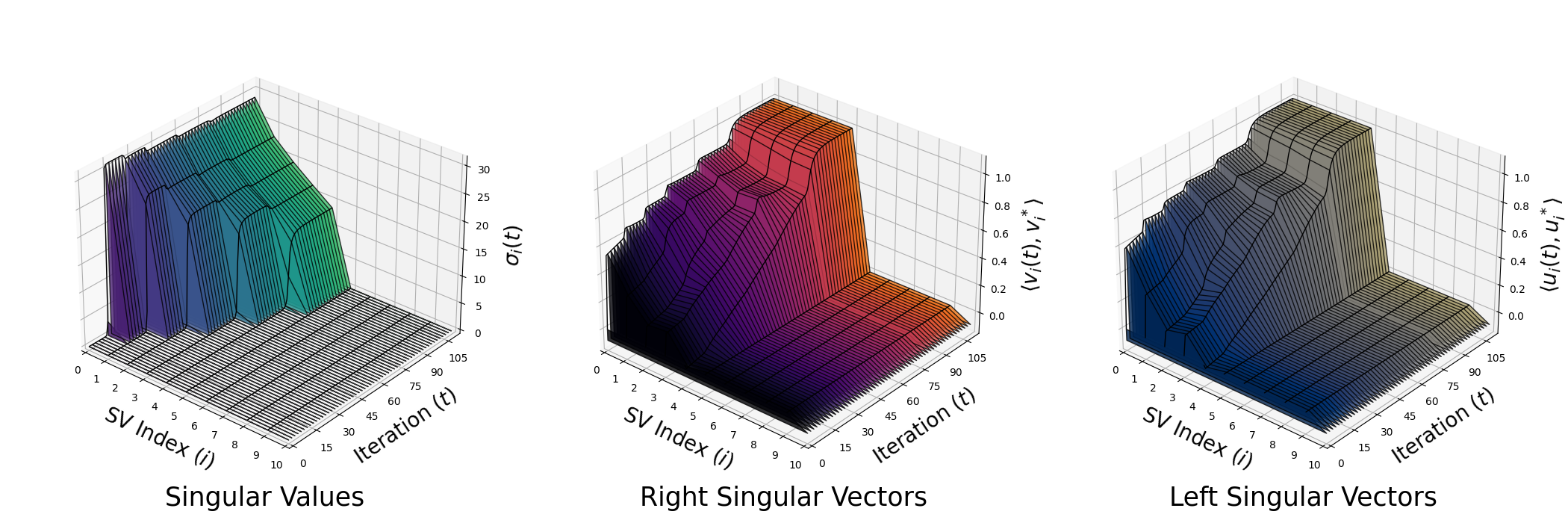}
    \caption{\textbf{Occurence of the incremental learning phenomenon in matrix sensing.} We observe that the first $r=5$ singular values are fitted incrementally, along with their respective singular subspaces.}
    \label{fig:incre_ms_r5}
\end{figure}

\begin{figure}[ht!]
    \centering
    \includegraphics[width=0.9\textwidth]{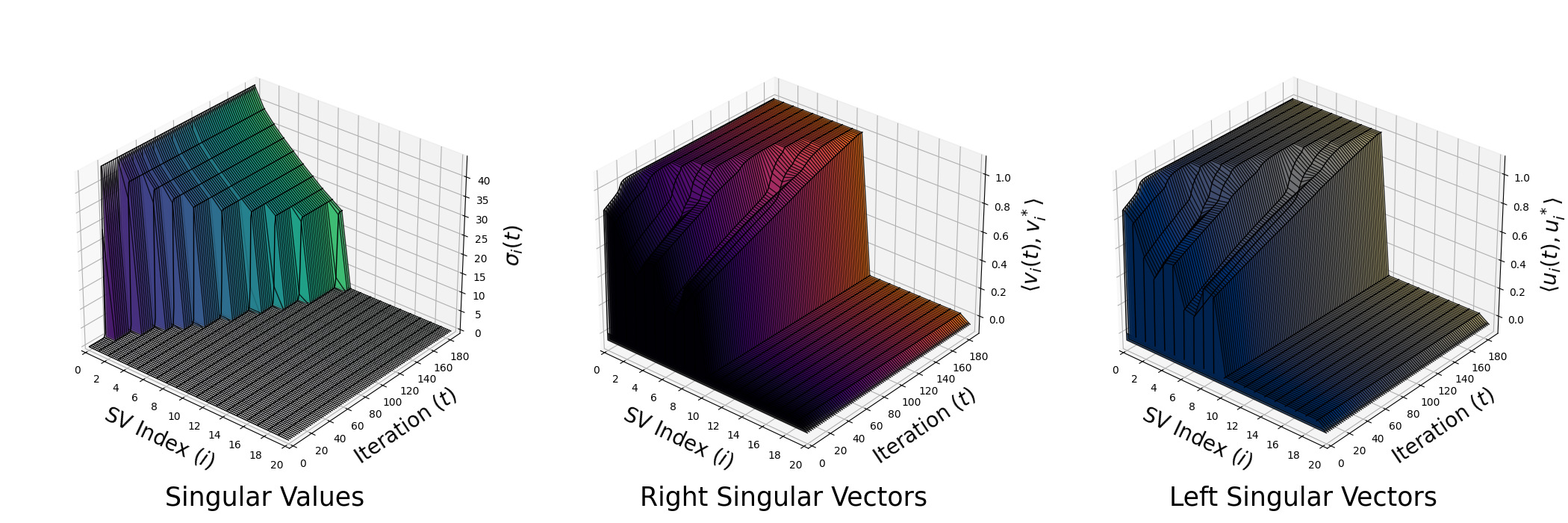}
    \caption{\textbf{Occurence of the incremental learning phenomenon in matrix factorization.} We observe that the first $r=10$ singular values are fitted incrementally, along with their respective singular subspaces.}
    \label{fig:incre_mf_r10}
\end{figure}

\begin{figure}[ht!]
    \centering
    \includegraphics[width=0.85\textwidth]{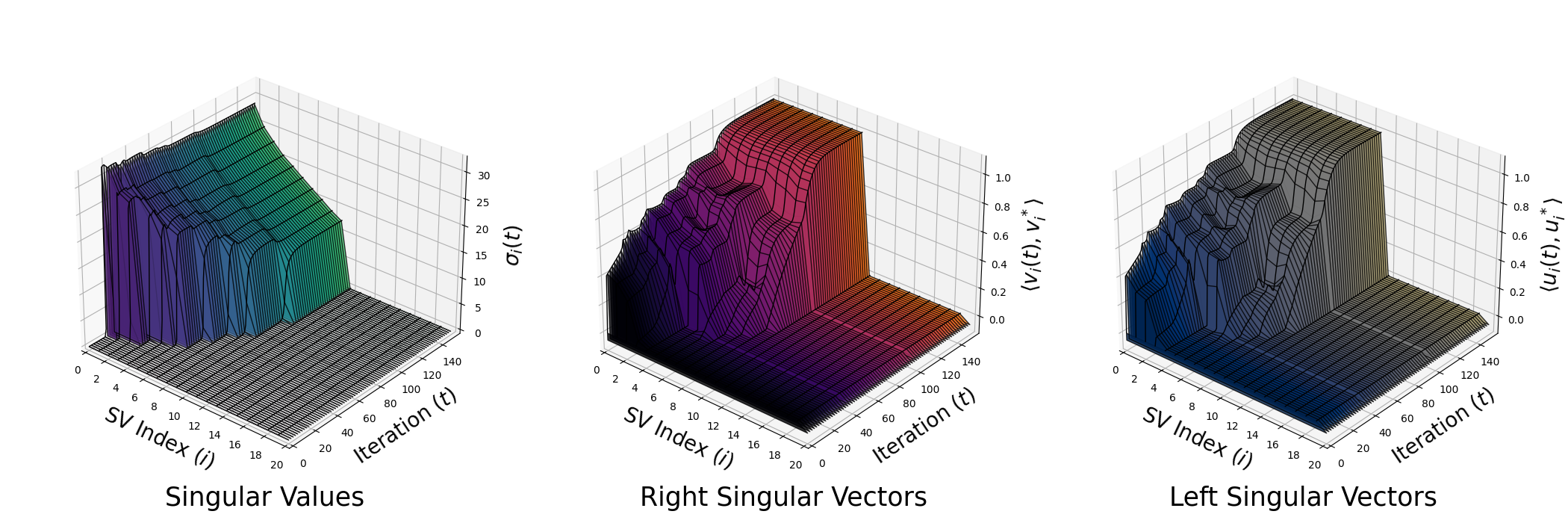}
    \caption{\textbf{Occurence of the incremental learning phenomenon in matrix sensing.} We observe that the first $r=10$ singular values are fitted incrementally, along with their respective singular subspaces.}
    \label{fig:incre_ms_r10}
\end{figure}

\begin{figure}[ht!]
    \centering
    \includegraphics[width=0.9\textwidth]{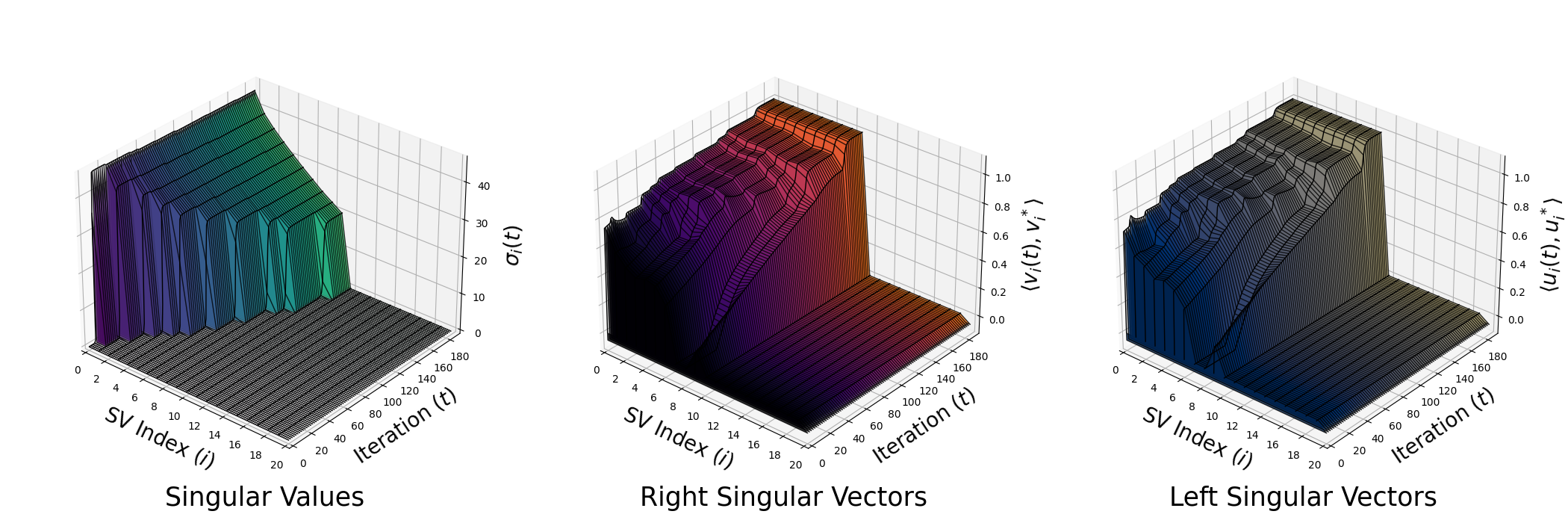}
    \caption{\textbf{Occurence of the incremental learning phenomenon in matrix completion.} We observe that the first $r=10$ singular values are fitted incrementally, along with their respective singular subspaces.}
    \label{fig:incre_mc_r10}
\end{figure}

\begin{figure}[ht!]
    \centering
    \includegraphics[width=0.45\textwidth]{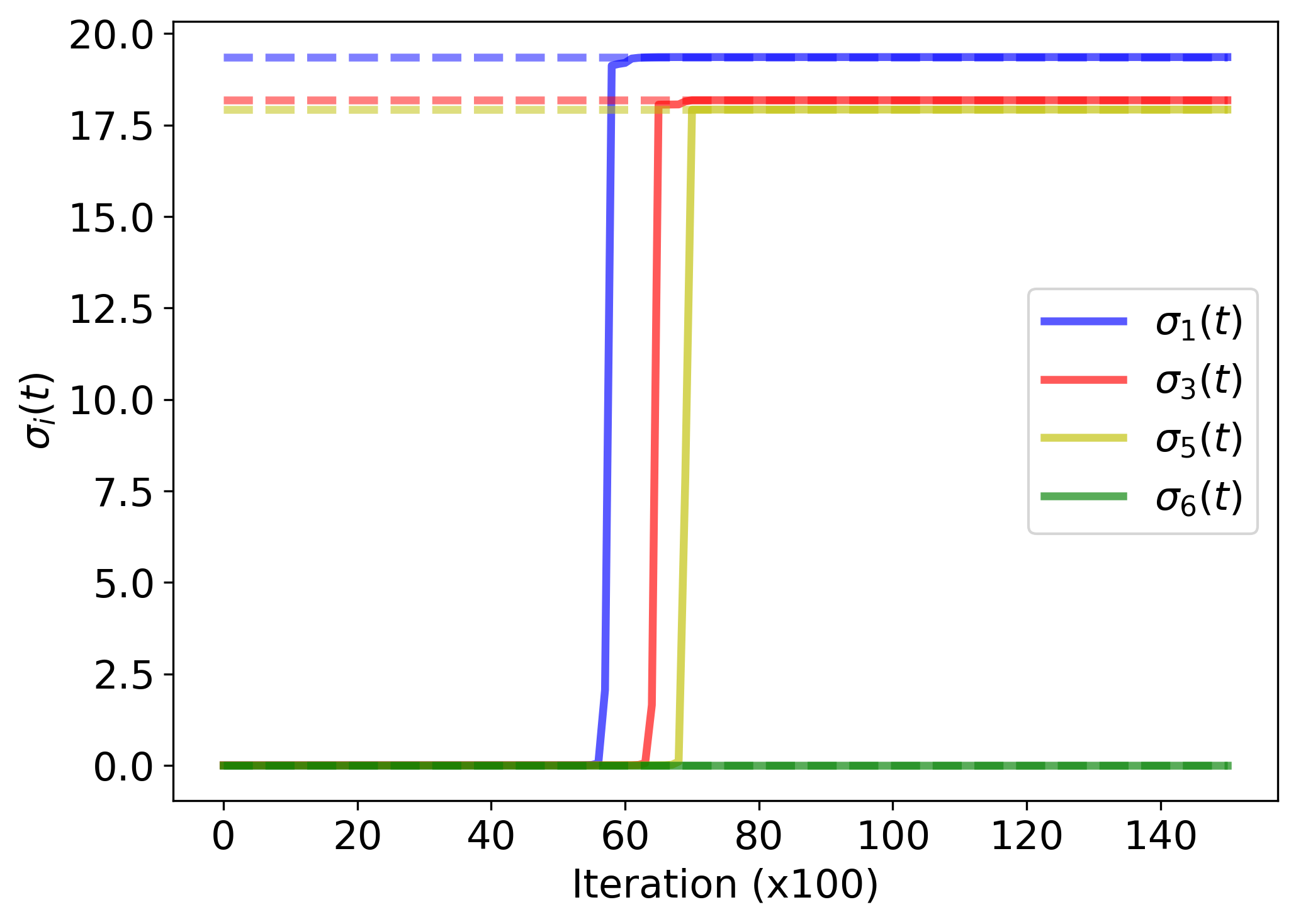}
    \caption{\textbf{Occurence of the incremental learning phenomenon in matrix factorization, with a more close up view on its singular values.} Since the underlying matrix is rank $r=5$, the sixth singular value and onwards stay (almost) unchanged from initialization. The dotted lines represent the magnitude of the target singular value ($\sigma_i^*$).}
    \label{fig:incre_mf_r5_2d}
\end{figure}

\begin{figure}[h!]
    \centering
    \includegraphics[width=\textwidth]{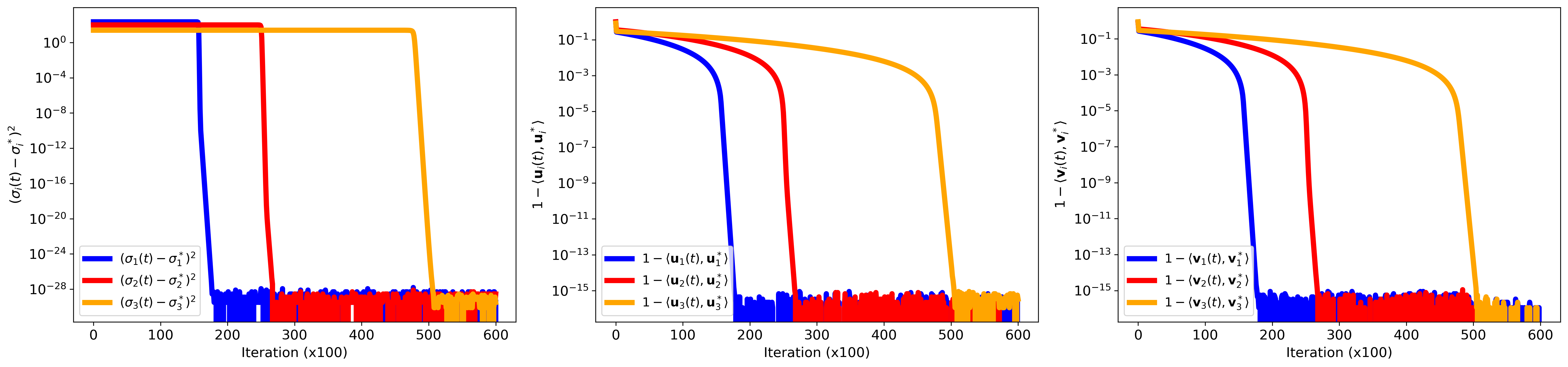}
    \caption{\textbf{Experiments observing the values of $c_{\text{val}}$ and $c_{\text{vec}}$ throughout the course of GD.} These plots demonstrate that after $t_i$, which indicate the time in which the $i$-th principal components are learned in Assumption~\ref{ass:incremental}, the values $c_{\text{val}}$ and $c_{\text{vec}}$ are very  close to $0$ (smaller than approximately $10^{-15}$).}
    \label{fig:ass_cvals}
\end{figure}

%\clearpage

\subsection{Results on Deep Matrix Sensing}
\label{sec:deep_ms_app}

In this section, we present the deferred results for deep matrix sensing. We consider modeling a low-rank matrix $\bM^* \in \R^{d\times d}$ with varying dimensions $d$ and rank $r$. For the sensing operator, we use random Gaussian matrices $\bm{A}_i \in \R^{d\times d}$ with varying values of measurements $m$.
As previously discussed, to initialize $\widetilde{\bm{W}}_L(0)$ and $\widetilde{\bm{W}}_1(0)$, we take the left and right singular subspaces of the surrogate matrix
\begin{align*}
    \bm{S} = \mathcal{A}^{\dagger}\mathcal{A}(\bM^*).
\end{align*}
For the hyperparameters, we use the same setup as described in Section~\ref{sec:incremental_app}. The results are displayed in Figure~\ref{fig:ms_ms}. In Figure~\ref{fig:ms_ms}, we observe that for deep matrix sensing, we observe the same phenomenon, where the compressed network consistently outperforms the original network in terms of recovery error for all iterations of GD. This empirically shows the benefits of spectral initialization and the use of DLNs.

\begin{figure}[t!]
     \centering
     \begin{subfigure}{0.495\textwidth}
         \centering
         \includegraphics[width=\textwidth]{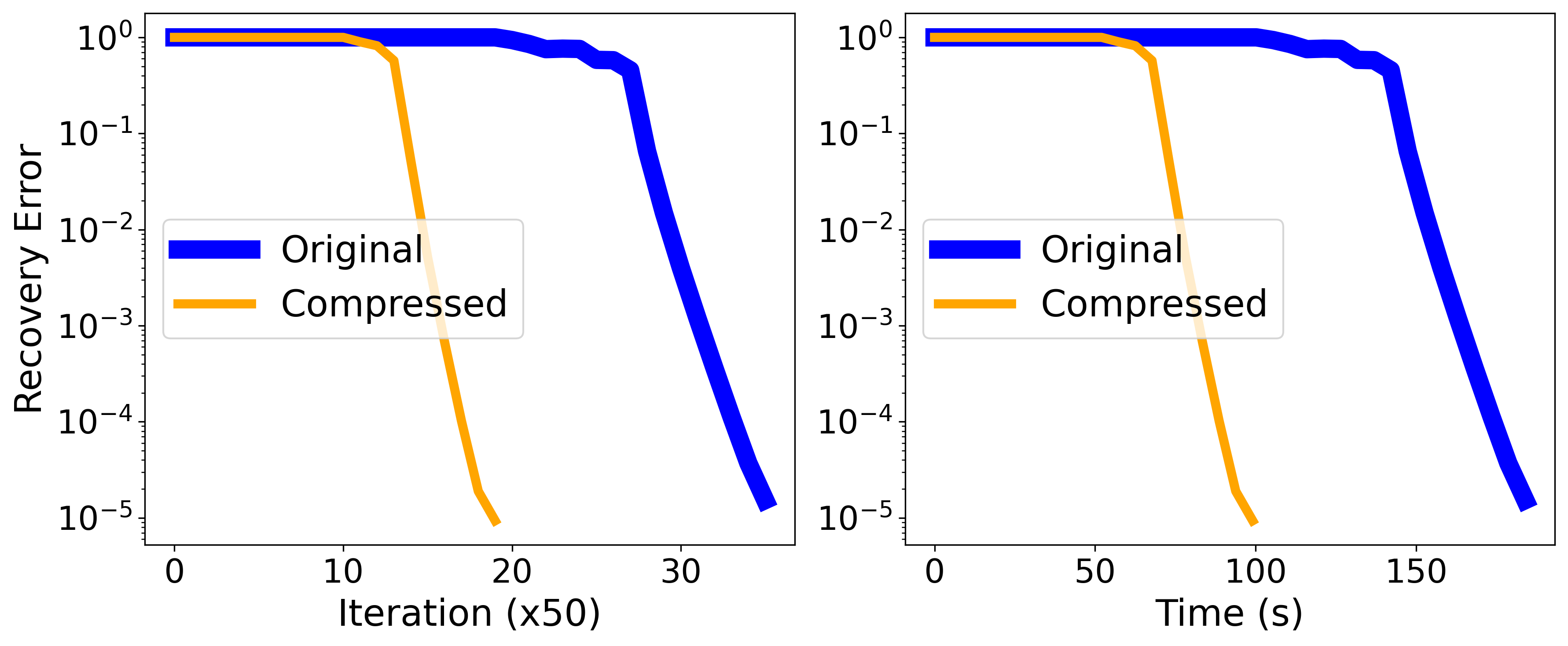}
         \caption*{$d=100$, $r=5$, $m=2000$}
     \end{subfigure}
     \begin{subfigure}{0.495\textwidth}
         \centering
         \includegraphics[width=\textwidth]{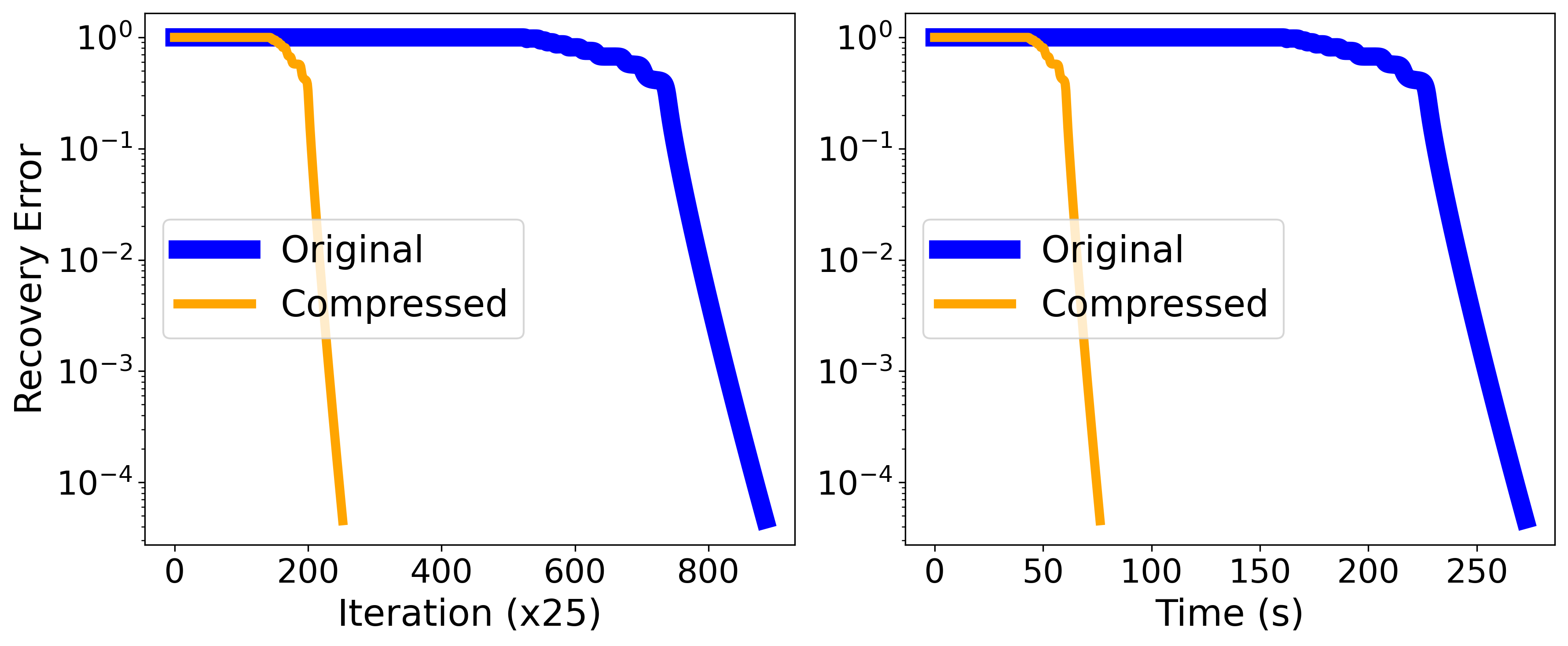}
         \caption*{$d=200$, $r=10$, $m=8000$}
     \end{subfigure}
     \caption{
     \textbf{Results on deep matrix sensing with random Gaussian measurements.} We observe the same phenomena in the matrix sensing setting similar to Figures~\ref{fig:mf_r5} and~\ref{fig:mc_synthetic}, where the compressed network consistently outperforms the original network.}
     \label{fig:ms_ms}
\end{figure}

\subsection{Ablation Studies}
\label{sec:ablation}

In this section, we conduct several ablation studies to demonstrate (1) the effect of the learning rate scale $\alpha$, (2) the performance of the compressed network with small \emph{random} initialization, and (3) the added time complexity of SVD.

\paragraph{The Effect of $\alpha$.}
In Section~\ref{sec:method}, we introduced the concept of a learning rate ratio (or scale), denoted as $\alpha$. Recall that in Figure~\ref{fig:pc_dist}, we showed that the factor matrices $\widetilde{\bm{W}}_L$ and $\widetilde{\bm{W}}_1$ ultimately align the the singular subspaces of the target matrix. By increasing $\alpha$ (i.e., by choosing $\alpha > 1$), we can hope to fit these target subspaces more quickly, leading to accelerated convergece in terms of iterations. To this end, we peform an ablation study where we vary alpha from $\alpha \in [0.5, 10]$ in increments of $0.25$, and report the recovery error on synthetic matrix completion. We present our results in Figure~\ref{fig:dlr_ablation}, where we observe for larger values of $\alpha$ significantly improves convergence. For values of $\alpha \leq 1$, we have slower convergence in terms of iterations than the original wide DLN, but since each iteration is much faster, it still takes much less time to train the compressed DLN. However, for large values of $\alpha$ (e.g. $\alpha \gg 10$), the training becomes unstable and often diverges. In Figure~\ref{fig:dlr_ablation_movielens}, we demonstrate a case of this, where for large values of $\alpha$, the compressed network overfits and incurs a large recovery error. For smaller values of $\alpha$ (e.g., $\alpha=0.1$), the networks takes significantly longer to converge.

\begin{figure}[h!]
    \centering
    \includegraphics[width=0.75\textwidth]{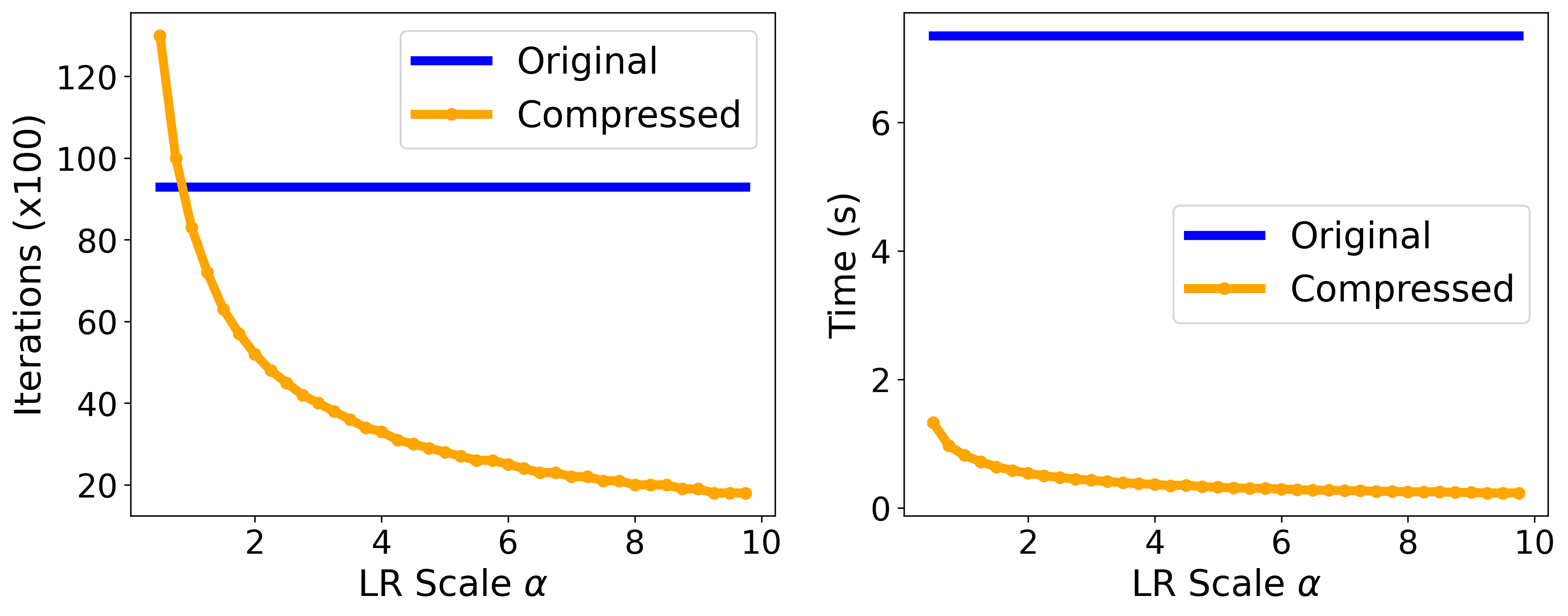}
    \caption{\textbf{Ablation study on the performance of the compressed DLN with varying values of $\alpha$.} For small values of $\alpha$, the compressed network takes more iterations to reach the convergence criteria, but since the time complexity of each iteration is smaller, it still takes much shorter time overall.}
    \label{fig:dlr_ablation}
\end{figure}

\begin{figure}[h!]
    \centering
    \includegraphics[width=0.75\textwidth]{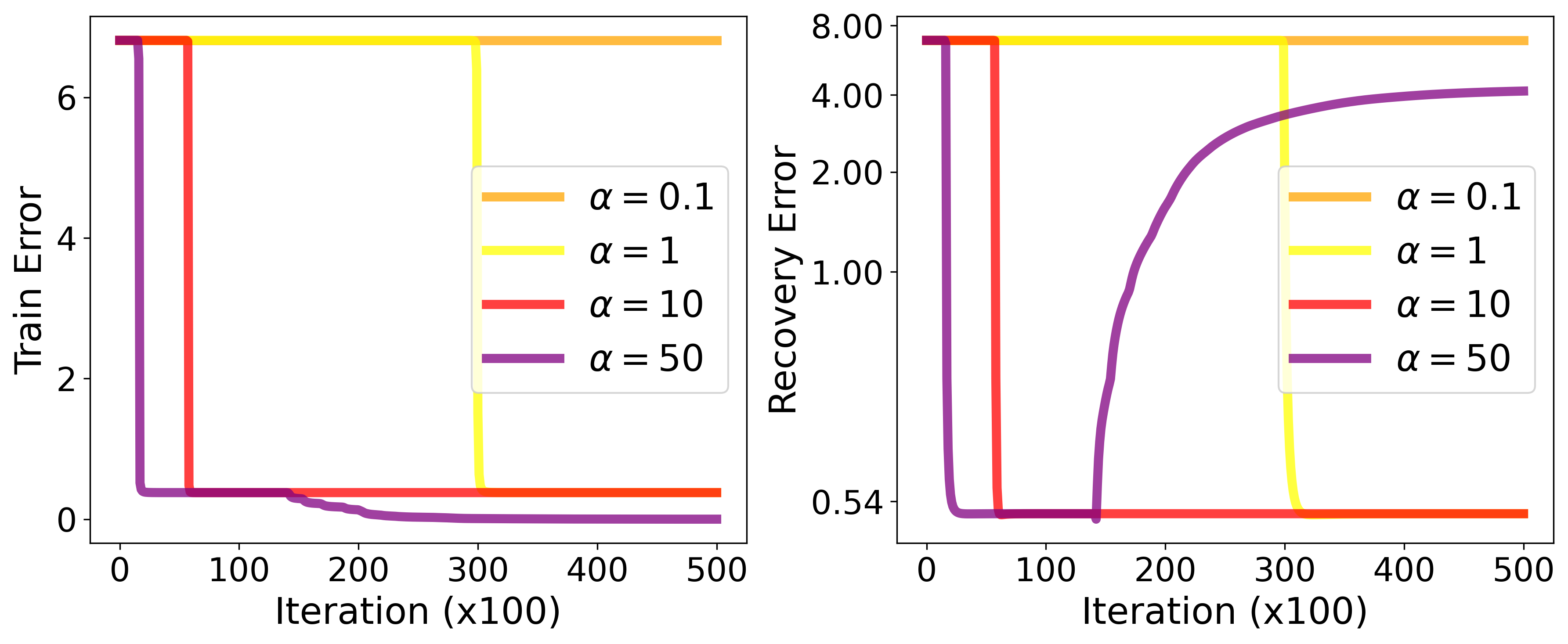}
    \caption{\textbf{Ablation study on $\alpha$ on the MovieLens dataset.} These plots demonstrate that for large values of $\alpha$, the compressed network can overfit, whereas smaller values of $\alpha$ increases the time it takes for the model to converge.}
    \label{fig:dlr_ablation_movielens}
\end{figure}

\paragraph{The Use of Small Random Initialization.} 

Thus far, all of our experiments and theory relied on the fact that the original network was initialized with orthogonal weights scaled by a small scale $\epsilon > 0$. In this section, we perform a study to show that our compressed network also outperforms the original network when the original network is initialized with random weights also scaled by $\epsilon$. We consider deep matrix factorization and matrix completion, following the setting of Section~\ref{sec:lowrank_exp}. We showcase our results in Figure~\ref{fig:rand_init}, where show that the same phenomenon persists in this setting as well.

\begin{figure}[h!]
     \centering
     \begin{subfigure}{0.495\textwidth}
         \centering
         \includegraphics[width=\textwidth]{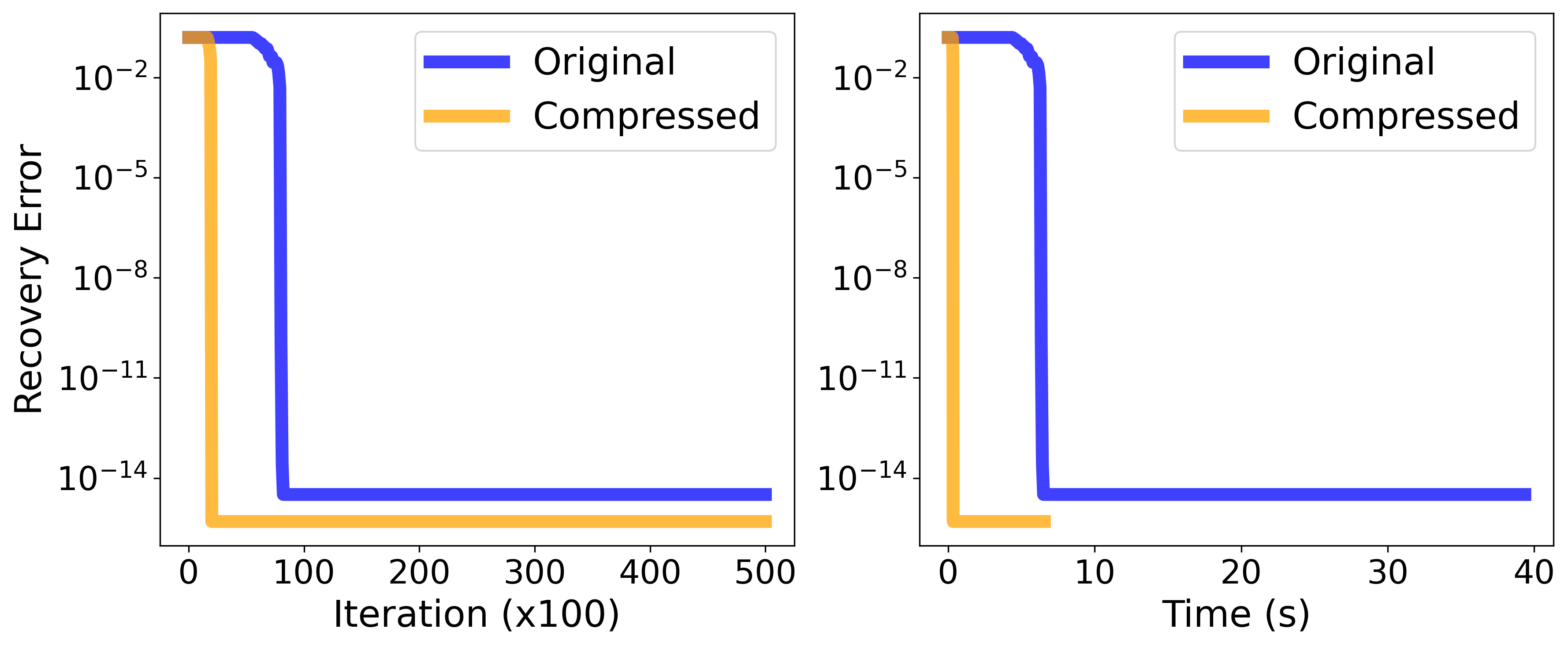}
         \caption*{Deep Matrix Factorization}
     \end{subfigure}
     \begin{subfigure}{0.495\textwidth}
         \centering
         \includegraphics[width=\textwidth]{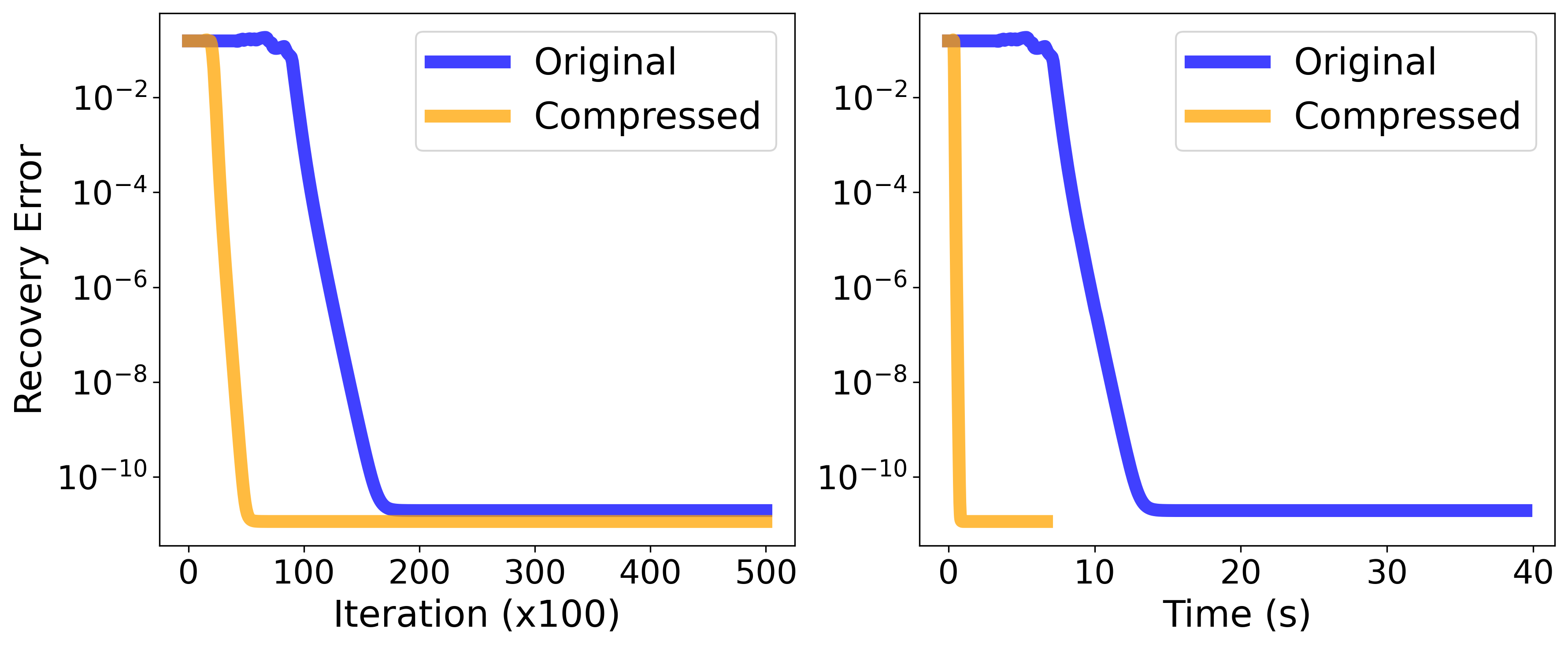}
         \caption*{Deep Matrix Completion}
     \end{subfigure}
     \caption{\textbf{Comparison of the performance of networks when the original network is initialized with near-zero random weights.} We observe that the compressed network still outperforms the original wide network in this setting for both deep matrix factorization and completion.}
     \label{fig:rand_init}
\end{figure}

\paragraph{Added Time Complexity of Taking SVD.}
Recall that our algorithm involves taking the SVD of a carefully constructed surrogate matrix to initialize the weights of the compressed network. In this study, we report the time (in seconds) it takes to train both the original network and the compressed network, including the time taken for the SVD step, for varying values of $d$ (the dimensions of the target matrix). Our results are presented in Table~\ref{table:svd_time}. We observe that the time taken for the SVD is almost negligible compared to the training time for synthetic data. For real larger-scale experiments, while we anticipate that computing the SVD may take a longer time, it will also significantly reduce the training time, resulting in a smaller overall time.
\begin{table}[h!]
\begin{center}
\caption{\textbf{The time (measured in seconds) required to train both the compressed and original networks, accounting for the SVD step.} We note that the time taken to compute the SVD is nearly negligible.}
\label{table:svd_time}
\begin{tabular}{ c|c c |c } 
\hline
Method & SVD & Training & Total \\
\hline
DLN ($d=100$) & - & 184.3 & 184.3 \\ 
C-DLN ($d=100$)& 0.009 & 109.7 & 109.7\\ 
\hline
DLN ($d=300$) & - & 511.2 & 511.2 \\ 
C-DLN ($d=300$)& 0.082 & 381.3 & 381.4 \\ 
\hline
\end{tabular}
\end{center}
\end{table}

\paragraph{The Effect of Depth.} In this section, we make a quick note on the effect of depth on matrix completion. We synthetically generate data according to Section~\ref{sec:lowrank_exp}.
Here, our objective is to answer whether the compressed network can still outperform the original network when the number of layers is small (e.g. $L=2$). The work by Arora et al.~\cite{arora2019implicit} suggests that the implicit regularization property is ``stronger'' as a function of depth, where deeper networks favor more low-rank solutions.
In Figure~\ref{fig:depth_effect}, we illustrate that when the number of observed measurements is very limited and the number of layers is small, both networks seem to perform considerably worse compared to the 3-layer case. While the training error is still smaller for the compressed DLN, the recovery error is slightly higher. This result implies that deeper networks are more favorable than shallow ones when the number of measurements is extremely limited, and the advantages of the compressed network become especially apparent when both networks can achieve a low recovery error.
\begin{figure}[h!]
    \centering
    \includegraphics[width=0.8\textwidth]{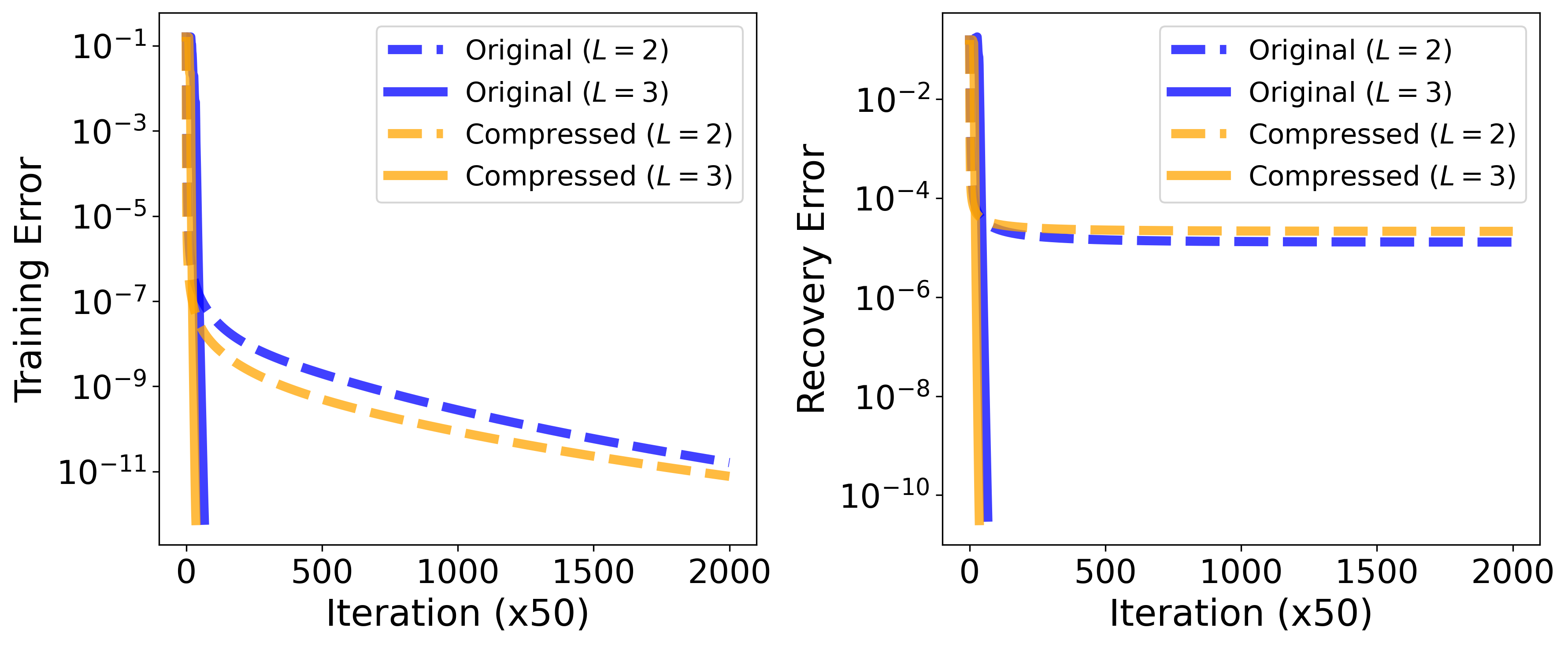}
    \caption{\textbf{The effect of depth on deep matrix completion with limited number of measurements.} We show that the compressed network only has advantages over the original network when the original network converges in terms of recovery (or test) error.}
    \label{fig:depth_effect}
\end{figure}

\paragraph{Choosing $\hat{r}$.}
Here, we would like to make a brief remark on choosing $\hat{r}$. Throughout our experiments thus far, we chose $\hat{r} = 2r$ to demonstrate that even when the rank is overspecified, one can obtain an acceleration in training time with significantly reduced recovery error.
For choosing $\hat{r}$ in practice, notice that for matrix recovery problems, Theorem~\ref{thm:recovery_mf} states that even with a choice of $\hat{r} = d$, the recovery error of the ``compressed'' network will have smaller recovery error than the original network due to the choice of initialization. This implies that if one knows beforehand that the rank of the target matrix is much smaller than its ambient dimension (i.e., $r \ll d$), then we can simply choose $\hat{r} = \frac{3}{4}d$ or $\hat{r} = \frac{1}{2} d$ and observe the benefits of our network as seen in Section~\ref{sec:experiments}.

\subsection{Additional Results and Details for Nonlinear Networks}
\label{sec:training_dets}

\paragraph{Additional Results.}
In Figure~\ref{fig:plot_svals}, we plotted the singular values of the change in weight updates from initialization of the penultimate layer matrix (i.e., $\bm{W}(t) - \bm{W}(0)$ where $\bm{W}$ denotes the penultimate layer matrix) and showed that the changes happen within a low-dimensional subspace. In Figure~\ref{fig:subspace_dist}, we plot the change in the top-$r$ singular vectors. We use the subspace distance defined as 
\begin{align}
\label{eq:subspace_dist}
    \text{Subspace Distance  } = r-\|\bm{U}_r(t)^\top\bm{U}_r(t+1)\|_{\mathsf{F}}^2,
\end{align}
where $\bm{U}_r$ denotes the singular vectors of $\bm{W}$ corresponding to the top-$r$ singular values. For this experiment, we choose $r=10$ and initialize the networks as described in the training details (next subsection). Here, we observe that the subspace distance is very small throughout training all network architectures, which hints that the subspaces greatly overlap throughout all $t$ (i.e., the training occurs within an invariant subspace). 

%\ZK{We measure the distance between the subspaces spanned by the top-r singular vectors of consecutive weight matrices with
%\begin{align*}
%    d(\bm{U}_r(t), \bm{U}_r(t+1))=r-\|\bm{U}_r(t)^\top\bm{U}_r(t+1)\|_F^2
%\end{align*}
%where $\bm{U}_r(t)$ consist of the top-r left singular vectors of the penultimate weight matrix $\bm{W}(t)$ at iteration $t$. Here we chose $r=10$ corresponding to Figure~\ref{fig:plot_svals}, and we found that the distance is consistently small(1e-5~1e-4), which means the subspaces greatly overlap.}

In Figure~\ref{fig:mnist_cifar}, we present additional plots for experiments on deep nonlinear networks. Here, we depict the change in test accuracy over GD iterations for MLP and ViT trained on FashionMNIST and CIFAR-10, respectively. We display the test accuracies for the original (in green), original network whose penultimate layer is a DLN (Original + DLN in blue), and the original network whose penultimate layer is a compressed DLN (in orange) throughout GD and their respective training times in seconds.
We observe that networks whose penultimate layer is modeled as a DLN exhibit the best performance in terms of test accuracy, but they take the longest time to train due to the additional overparameterization. By using our compression approach, we can reduce this training time by almost $2\times$, while achieving very similar test accuracy. Interestingly, Figure~\ref{fig:mnist_cifar} shows that for ViT, we can outperform the other networks using our compressed DLN, while having the shortest training time.

\begin{figure}
    \centering
    \includegraphics[width=\textwidth]{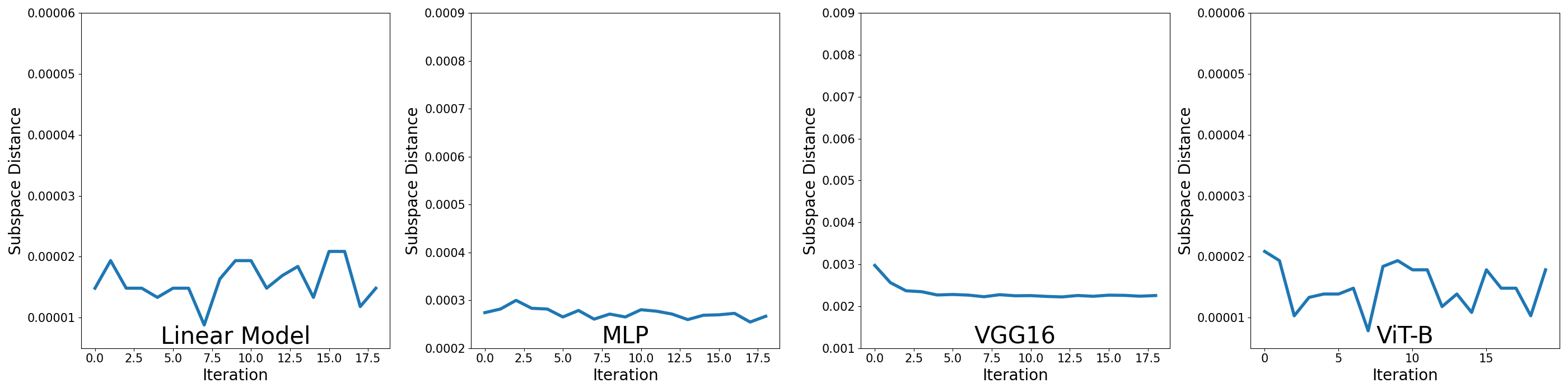}
    \caption{\textbf{Distance between consecutive singular subspaces of the penultimate weight matrices.} These plots show that the subspace distance defined in Equation~(\ref{eq:subspace_dist}) is small, hinting that the training happens within an invariant subspace.}
    \label{fig:subspace_dist}
\end{figure}

\begin{figure}[h!]
     \centering
     \begin{subfigure}{0.495\textwidth}
         \centering
         \includegraphics[width=\textwidth]{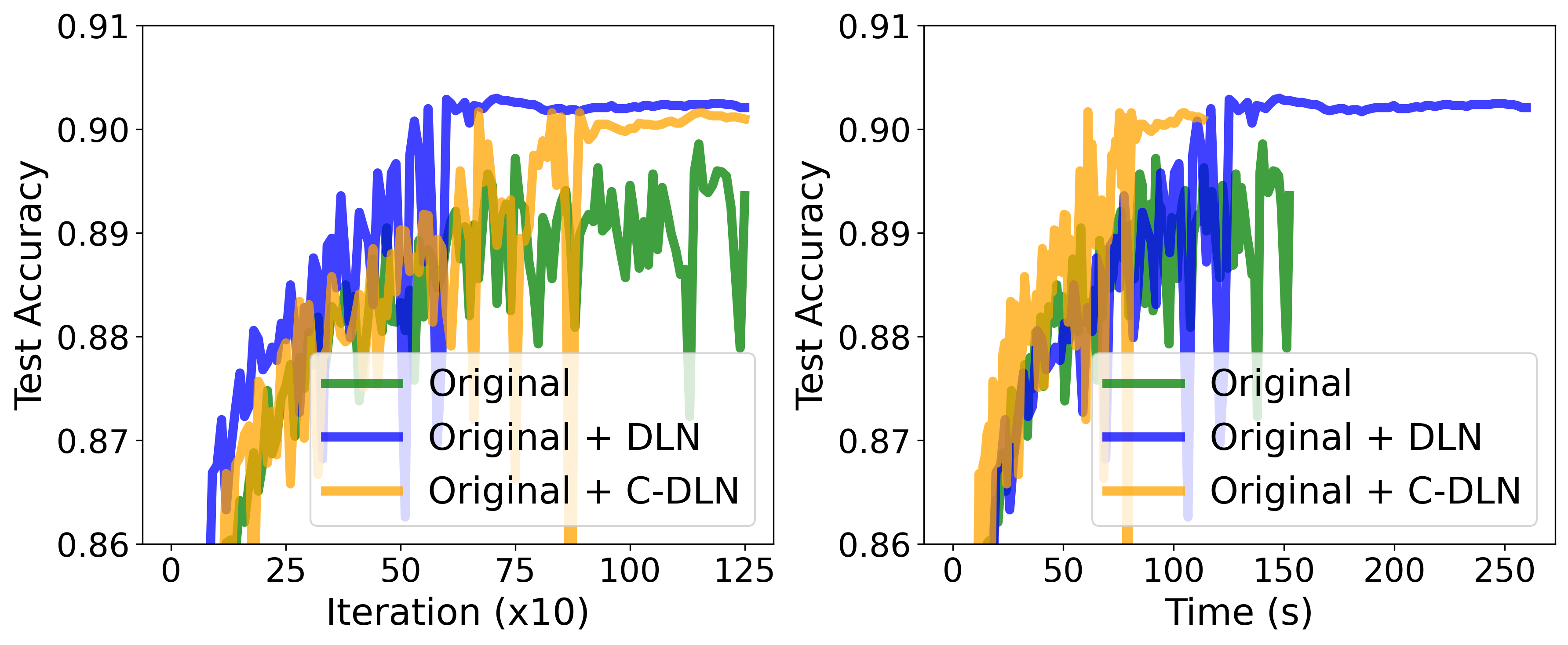}
         \caption*{Results on FashionMNIST}
     \end{subfigure}
     \begin{subfigure}{0.495\textwidth}
         \centering
         \includegraphics[width=\textwidth]{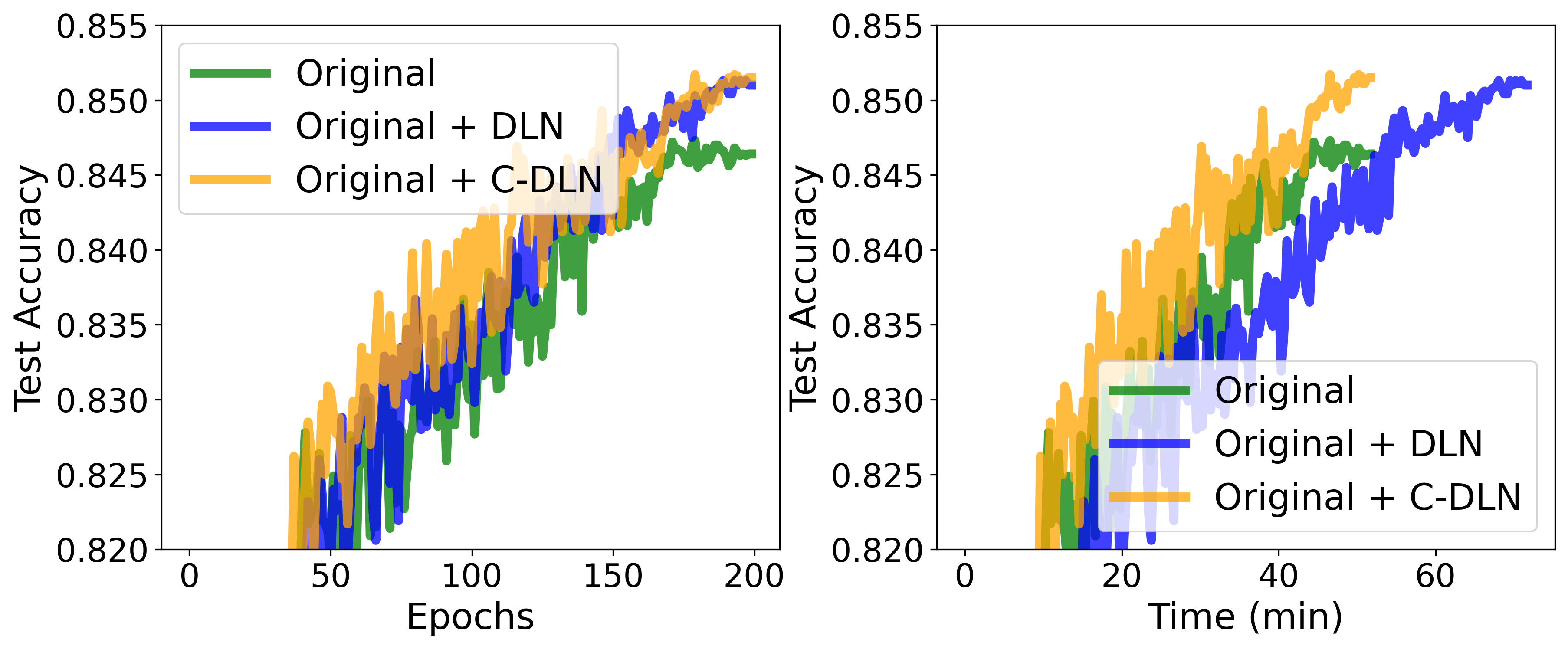}
         \caption*{Results on CIFAR-10}
     \end{subfigure}

     \caption{
     \textbf{Results on classification tasks with nonlinear networks with overparameterized penultimate layers.}
     Left: Results with FashionMNIST dataset with MLP. Right: Results on CIFAR-10 dataset with ViT. For both experiments, we observe that the compressed networks (colored in orange) has similar test accuracy (if not better) than the original network (colored in blue) throughout training, while reducing run-time by almost $2\times$. The plot colored in green represents the network with a standard penultimate layer, which has lower test accuracy throughout all experiments.}
     \label{fig:mnist_cifar}
\end{figure}

\paragraph{Training Details.} To plot Figure~\ref{fig:plot_svals}, we observe the change in singular values of the penultimate weight matrix for DLNs, MLPs, VGG~\cite{DBLP:journals/corr/SimonyanZ14a}, and ViT-B~\cite{DBLP:conf/iclr/DosovitskiyB0WZ21}. We train the DLN starting from orthogonal initialization using SGD with learning rate $0.01$ and batch size $128$. All of the nonlinear networks were initialized using random uniform initialization.
For MLP, we train using the same parameters as the DLN. For VGG, we use batch size $128$ with learning rate $0.05$, weight decay parameter $5 \times 10^{-4}$, momentum $0.9$ and step size scheduler with cosine annealing. Lastly, for ViT-B, we train using ADAM with a batch size of $512$ and learning rate $10^{-5}$ with cosine annealing. The ViT-B architecture is the same as the one presented by Dosovitskiy et al.~\cite{DBLP:conf/iclr/DosovitskiyB0WZ21}.
%\clearpage
\section{Theoretical Results}
\label{sec:proofs_app}

\subsection{Deferred Proofs for Spectral Initialization}
To keep this section self-contained, we will first restate all of our results. Throughout this section, we will consider the compressed deep matrix factorization problem, which at GD iterate $t$ is defined as
\begin{align*}
    \ell(\widetilde{\bm{\Theta}}(t)) = \frac{1}{2} \|\widetilde{\bm{W}}_{L:1}(t) -\bm{M}^*\|^2_{\mathsf{F}},
\end{align*}
where $\widetilde{\bm{W}}_L(0) = \epsilon \cdot \bm{U}^*_{\hat{r}}$, $\widetilde{\bm{W}}_1(0) = \epsilon \cdot \bm{V}^{*\top}_{\hat{r}}$, and $\widetilde{\bm{W}}_l(0) =  \epsilon\cdot \bm{I}_{\hat{r}}$ for $2 \leq l \leq L-1$,
for some small initialization scale $\epsilon > 0$.

\begin{manualtheoreminner}
Let $\bM^* \in \R^{d\times d}$ be a matrix of rank $r$ and $\bM^* = \bm{U}^* \bm{\Sigma}^* \bm{V}^{*\top}$ be a SVD of $\bM^*$. Suppose we run Algorithm~\ref{alg:alg} to update all weights $\left( \widetilde{\bm{W}}_l \right)_{l=1}^L$ of Equation~(\ref{eq:comp_deep_mf}), where $\mathcal{A} = \text{Id}$. Then, the end-to-end compressed DLN possesses low-dimensional structures, in the sense that for all $t\geq 1$,  $\widetilde{\bm{W}}_{L:1}(t)$ admits the following decomposition:
\begin{align}
    \widetilde{\bm{W}}_{L:1}(t) = \bm{U}^*_{\hat{r}} 
    \begin{bmatrix}
    \bm{\Lambda}(t)  &\bm{0} \\
        \bm{0} & \beta(t)^L\cdot\bm{I}_{\hat{r} - r}
    \end{bmatrix}
    \bm{V}_{\hat{r}}^{*\top},
\end{align}
where $\bm{\Lambda}(t) \in \R^{r\times r}$ is a diagonal matrix with entries $\lambda_i(t)^L$, where
\begin{align}
    \lambda_i(t) = \lambda_i(t-1) \cdot \left(1 - \eta \cdot (\lambda_i(t-1)^L - \sigma_i^*) \cdot \lambda_i(t-1)^{L-2} \right), \quad 1 \leq i \leq r,
\end{align}
with $\lambda_i(0) = \epsilon$ and $\sigma^*_i$ is the $i$-th diagonal entry of $\bm{\Sigma}^*$ and
\begin{align}
    \beta(t) = \beta(t-1) \cdot \left( 1 - \eta \cdot \beta(t-1)^{2(L-1)} \right), 
\end{align}
with $\beta(0) = \epsilon$.
\end{manualtheoreminner}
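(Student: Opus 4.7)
My plan is to prove the result by strong induction on the iterate $t \geq 0$, maintaining a joint invariant that is \emph{stronger} than the endpoint statement: each individual factor, not merely the product, is aligned with $\bm{U}^{*}_{\hat{r}}$ and $\bm{V}^{*}_{\hat{r}}$ and has the same diagonal ``core.'' Concretely, I propose the following inductive hypothesis: there exists a single diagonal matrix $\bm{D}(t) \in \mathbb{R}^{\hat{r}\times\hat{r}}$ whose first $r$ entries equal $\lambda_1(t),\ldots,\lambda_r(t)$ and whose last $\hat{r}-r$ entries all equal $\beta(t)$, such that $\widetilde{\bm{W}}_L(t) = \bm{U}^{*}_{\hat{r}}\,\bm{D}(t)$, $\widetilde{\bm{W}}_l(t) = \bm{D}(t)$ for $2 \le l \le L-1$, and $\widetilde{\bm{W}}_1(t) = \bm{D}(t)\,\bm{V}^{*\top}_{\hat{r}}$. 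At $t=0$ this holds with $\bm{D}(0) = \epsilon\, \bm{I}_{\hat{r}}$, matching $\lambda_i(0)=\epsilon$ and $\beta(0)=\epsilon$.

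Given this invariant at time $t$, the end-to-end product telescopes using $\bm{U}^{*\top}_{\hat{r}}\bm{U}^{*}_{\hat{r}} = \bm{V}^{*\top}_{\hat{r}}\bm{V}^{*}_{\hat{r}} = \bm{I}_{\hat{r}}$, yielding $\widetilde{\bm{W}}_{L:1}(t) = \bm{U}^{*}_{\hat{r}}\,\bm{D}(t)^L\,\bm{V}^{*\top}_{\hat{r}}$, and hence the residual takes the clean form
\begin{equation*}
\widetilde{\bm{W}}_{L:1}(t) - \bm{M}^{*} = \bm{U}^{*}_{\hat{r}}\bigl(\bm{D}(t)^L - \bm{\Sigma}^{*}_{\hat{r}}\bigr)\bm{V}^{*\top}_{\hat{r}},
\end{equation*}
where $\bm{\Sigma}^{*}_{\hat{r}} \in \mathbb{R}^{\hat{r}\times\hat{r}}$ denotes the leading principal submatrix of $\bm{\Sigma}^{*}$ (which has $r$ nonzero entries $\sigma_i^{*}$ and zeros elsewhere). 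Next I would substitute into the standard gradient identity $\nabla_{\widetilde{\bm{W}}_l}\ell = \widetilde{\bm{W}}_{L:l+1}^{\top}(\widetilde{\bm{W}}_{L:1} - \bm{M}^{*})\widetilde{\bm{W}}_{l-1:1}^{\top}$. The orthonormality of $\bm{U}^{*}_{\hat{r}},\bm{V}^{*}_{\hat{r}}$ causes every inner $\bm{U}^{*\top}_{\hat{r}}\bm{U}^{*}_{\hat{r}}$ and $\bm{V}^{*\top}_{\hat{r}}\bm{V}^{*}_{\hat{r}}$ pair to collapse to $\bm{I}_{\hat{r}}$, leaving a product of diagonals. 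A short calculation then shows that for every $l$,
\begin{equation*}
\nabla_{\widetilde{\bm{W}}_l}\ell(\widetilde{\bm{\Theta}}(t)) = \bm{P}_l\cdot \bm{D}(t)^{L-1}\bigl(\bm{D}(t)^L - \bm{\Sigma}^{*}_{\hat{r}}\bigr),
\end{equation*}
where $\bm{P}_l \in \{\bm{U}^{*}_{\hat{r}},\bm{I}_{\hat{r}}\}$ on the left and a corresponding $\bm{V}^{*\top}_{\hat{r}}$ or identity on the right, matching the alignment of $\widetilde{\bm{W}}_l$ itself. This closure under the gradient is the heart of the argument.

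From here the inductive step is immediate: each GD update preserves the alignment, and the ``core'' diagonal evolves as $\bm{D}(t+1) = \bm{D}(t)\bigl[\bm{I}_{\hat{r}} - \eta\, \bm{D}(t)^{L-2}(\bm{D}(t)^L - \bm{\Sigma}^{*}_{\hat{r}})\bigr]$. Reading this entrywise on the first $r$ coordinates gives the claimed recursion for $\lambda_i(t)$, and on the last $\hat{r}-r$ coordinates it collapses (since $\sigma_i^{*}=0$ there) to $\beta(t+1) = \beta(t)(1 - \eta\,\beta(t)^{2(L-1)})$. Finally, unpacking $\widetilde{\bm{W}}_{L:1}(t) = \bm{U}^{*}_{\hat{r}}\bm{D}(t)^L\bm{V}^{*\top}_{\hat{r}}$ into the $r$-block and $(\hat{r}-r)$-block yields exactly the block-diagonal representation in the statement, with $\bm{\Lambda}(t)_{ii} = \lambda_i(t)^L$.

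The only potentially delicate point is the base case's compatibility with the intermediate factors: because $\widetilde{\bm{W}}_L(0)=\epsilon\bm{U}^{*}_{\hat{r}}$ and $\widetilde{\bm{W}}_1(0)=\epsilon\bm{V}^{*\top}_{\hat{r}}$ already ``carry'' the alignment, the intermediate factors need to be scalar multiples of $\bm{I}_{\hat{r}}$ so that the invariant closes; this is exactly Algorithm~\ref{alg:alg}'s initialization $\widetilde{\bm{W}}_l(0)=\epsilon\bm{I}_{\hat{r}}$. The real obstacle is simply verifying that the gradient calculation closes diagonally at every layer $l$, which relies crucially on the fact that all intermediate diagonals agree (are equal to $\bm{D}(t)$) so that every telescoped product can be written as a single power $\bm{D}(t)^k$. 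This uniformity across layers is what allows the per-entry recursions for $\lambda_i(t)$ and $\beta(t)$ to not depend on $l$, and is what makes the $L-2$ exponent in the recursion appear naturally.
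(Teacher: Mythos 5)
Your proposal is correct and takes essentially the same route as the paper: induction over GD iterates, with the spectral initialization forcing every factor to stay diagonal in the $\bm{U}^*_{\hat{r}},\bm{V}^*_{\hat{r}}$ bases so the dynamics reduce to the scalar recursions for $\lambda_i(t)$ and $\beta(t)$. The per-factor invariant and explicit gradient closure you spell out are exactly what the paper's proof uses implicitly when it invokes rotational invariance and updates the diagonal entries layer by layer, so your write-up is if anything a more explicit version of the same argument.
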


\begin{proof}
    We will prove using mathematical induction. 
    
    \paragraph{Base Case.} For the base case at $t=0$, the decomposition holds immediately by the choice of initialization:
    \begin{align*}
        \widetilde{\bm{W}}_{L:1}(0) &= \underbrace{\bm{U}^*_{\hat{r}} \cdot \epsilon \bm{I}_{\hat{r}}}_{\widetilde{\bm{W}}_L(0)} \cdot \underbrace{\epsilon^{(L-2)} \bm{I}_{\hat{r}}}_{\widetilde{\bm{W}}_{2:L-1}(0)} \cdot \underbrace{\epsilon \bm{I}_{\hat{r}} \cdot \bm{V}^{*\top}_{\hat{r}}}_{\widetilde{\bm{W}}_{1}(0)} \\
        &= \bm{U}^*_{\hat{r}} \cdot \begin{bmatrix}
            \epsilon^L \bm{I}_r & \bm{0} \\
            \bm{0} & \epsilon^L \bm{I}_{\hat{r} - r}
        \end{bmatrix}
        \cdot \bm{V}^{*\top}_{\hat{r}} \\
        &=\bm{U}^*_{\hat{r}} \cdot \begin{bmatrix}
            \bm{\Lambda}(0) & \bm{0} \\
            \bm{0} & \beta(0)^L \bm{I}_{\hat{r} - r}
        \end{bmatrix}
        \cdot \bm{V}^{*\top}_{\hat{r}}.
    \end{align*}

\paragraph{Inductive Hypothesis.}
Now, by the inductive hypothesis, suppose that the decomposition holds for some $t \geq 0$. Notice that by the rotational invariance of the Frobenius norm, we can rewrite the objective function into
\begin{align}
    \ell(\widetilde{\bm{\Theta}}(t)) &= \frac{1}{2} \|\widetilde{\bm{W}}_{L:1}(t) -\bm{M}^*\|^2_{\mathsf{F}} \\
    &= \frac{1}{2}  \left\|\bm{U}^*_{\hat{r}}\begin{bmatrix}
       \bm{\Lambda}(t) & \bm{0} \\
        \bm{0} & \beta(t)^L\bm{I}_{\hat{r} - r}
    \end{bmatrix} \bm{V}^{*\top}_{\hat{r}} -  \bm{U}^*_{\hat{r}}\begin{bmatrix}
        \bm{\Sigma}_{r}^* & \bm{0} \\
        \bm{0} & \bm{0}
    \end{bmatrix} \bm{V}^{*\top}_{\hat{r}}\right\|^2_{\mathsf{F}} \\
        \label{eq:compressed_update}
    &= \frac{1}{2}  \left\|\begin{bmatrix}
       \bm{\Lambda}(t) & \bm{0} \\
        \bm{0} & \beta(t)^L\bm{I}_{\hat{r} - r}
    \end{bmatrix} -\begin{bmatrix}
        \bm{\Sigma}_{r}^* & \bm{0} \\
        \bm{0} & \bm{0}
    \end{bmatrix} \right\|^2_{\mathsf{F}}
\end{align}
where $\bm{\Sigma}_{r}^*$ is the leading $r$ principal submatrix $\bm{\Sigma}^*$. Then, Equation~(\ref{eq:compressed_update}) implies that updating the factor $\widetilde{\bm{W}}_L(t)$ is equivalent to updating a diagonal matrix of dimensions $\hat{r}$ and multiplying by $\bm{U}^*_{\hat{r}}$ (and respectively $\widetilde{\bm{W}}_1(t)$ with $\bm{V}^{*\top}_{\hat{r}}$). 
To this end, we will consider updating the diagonal entries to show that the decomposition holds for $t+1$. 

\paragraph{Inductive Step.} We will first consider the top-$r$ components. 
Notice that the $i$-th diagonal entry $\forall i\in [r]$ of the $l$-th layer matrix $\forall l \in [L]$ are all scalars, and so the update steps for the $i$-th entry are given by 
\begin{align*}
    \lambda_i(t+1) &= \lambda_i(t) - \eta \cdot\left(\lambda_i(t)^L - \sigma_i^*\right) \cdot \lambda_i(t)^{L-1} \\
    &= \lambda_i(t) \cdot \left(1 - \eta \cdot (\lambda_i(t)^{L} - \sigma_i^*)   \cdot \lambda(t)_i^{L-2} \right).
\end{align*}
Thus, the $i$-th entry of $\bm{\Lambda}(t)$ is given by $\lambda_i(t)^L$ for $1 \leq i \leq r$.
Similarly, the update steps for the last $\hat{r} - r$ elements are given by
\begin{align*}
    \beta(t+1) &= \beta(t) - \eta \cdot \left(\beta(t)^L \cdot \beta(t)^{L-1} \right) \\
    &= \beta(t) \cdot \left(1 - \eta \cdot \beta(t)^{2(L-1)} \right),
\end{align*}
which gives us $\beta(t)^L$ for the last $\hat{r} - r$ elements.
This completes the proof. 
\end{proof}

\begin{manualcorollaryinner}
    Let $\bm{W}_{L:1}(0)$ denote the original DLN at $t=0$ initialized with orthogonal weights according to Equation~(\ref{eq:orth_init}) and $\widetilde{\bm{W}}_{L:1}(0)$ denote the compressed DLN at $t=0$. Then, we have
\begin{align}
        \| \bm{W}_{L:1}(0) - \bM^*\|^2_{\mathsf{F}} \geq  \| \widetilde{\bm{W}}_{L:1}(0) - \bM^*\|^2_{\mathsf{F}}.
\end{align}
\end{manualcorollaryinner}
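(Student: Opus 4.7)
The plan is to combine two complementary ingredients: Theorem~\ref{thm:parsimony} hands us the compressed error \emph{exactly}, while a singular-value inequality (Hoffman--Wielandt--Mirsky) \emph{lower-bounds} the original error by the same quantity plus extra dimensional slack. The required inequality then reduces to the trivial fact $(d - \hat r)\,\epsilon^{2L} \geq 0$.

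First, I would apply Theorem~\ref{thm:parsimony} at $t = 0$. The recursions there give $\lambda_i(0) = \beta(0) = \epsilon$, so
\[
\widetilde{\bm{W}}_{L:1}(0) \;=\; \bm{U}^*_{\hat r} \begin{bmatrix} \epsilon^L \bm{I}_r & \bm{0} \\ \bm{0} & \epsilon^L \bm{I}_{\hat r - r} \end{bmatrix} \bm{V}^{*\top}_{\hat r} \;=\; \epsilon^L\, \bm{U}^*_{\hat r} \bm{V}^{*\top}_{\hat r}.
\]
Since this SVD is diagonal in the same basis $(\bm{U}^*, \bm{V}^*)$ that diagonalizes $\bm{M}^*$, and since $\bm{M}^* = \bm{U}^*_{\hat r}\,\mathrm{diag}(\bm{\Sigma}^*_r, \bm{0})\,\bm{V}^{*\top}_{\hat r}$ under the standing hypothesis $\hat r \geq r$, Frobenius-norm preservation under semi-orthogonal conjugation yields the closed form
\[
\| \widetilde{\bm{W}}_{L:1}(0) - \bm{M}^* \|_{\sfF}^2 \;=\; \sum_{i=1}^{r} (\epsilon^L - \sigma_i^*)^2 \;+\; (\hat r - r)\,\epsilon^{2L}.
\]

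For the original DLN, the orthogonal initialization~\eqref{eq:orth_init} forces every singular value of each $\bm{W}_l(0)$ to equal a common scalar, so \emph{all} singular values of $\bm{W}_{L:1}(0)$ share one common value $\tau$ (taken so that $\tau = \epsilon^L$ in order to match the compressed end-to-end scale). Mirsky's inequality then gives
\[
\| \bm{W}_{L:1}(0) - \bm{M}^* \|_{\sfF}^2 \;\geq\; \sum_{i=1}^{d}\bigl(\sigma_i(\bm{W}_{L:1}(0)) - \sigma_i(\bm{M}^*)\bigr)^2 \;=\; \sum_{i=1}^{r}(\epsilon^L - \sigma_i^*)^2 \;+\; (d - r)\,\epsilon^{2L}.
\]
Subtracting the compressed identity from this lower bound leaves exactly $(d - \hat r)\,\epsilon^{2L} \geq 0$, since $\hat r \leq d$, which delivers the claim.

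I expect the main subtlety to be \emph{bookkeeping} rather than analysis: one must carefully reconcile the two initialization conventions --- equation~\eqref{eq:orth_init} parameterizes the original via $\bm{W}_l^\top \bm{W}_l = \epsilon\bm{I}$, while the compressed network is defined layerwise with its own $\epsilon$ --- so that both end-to-end products carry the same singular-value scale. Once that is in place, the structural content of the argument is that the compressed DLN is \emph{singular-subspace aligned} with $\bm{M}^*$ by construction, making Mirsky's bound tight on the compressed side but merely slack on the original side, with the slack quantified exactly by the dimension gap $d - \hat r$.
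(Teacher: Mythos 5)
Your proposal is correct and follows essentially the same route as the paper: the paper likewise invokes Theorem~\ref{thm:parsimony} at $t=0$ for the compressed network and its Lemma~\ref{lem:von} (a von Neumann/Mirsky-type singular-value inequality) for the original network, whose end-to-end singular values are all $\epsilon^L$, then drops the trailing $d-\hat r$ diagonal terms --- exactly your $(d-\hat r)\,\epsilon^{2L}\geq 0$ slack, just written as an inequality chain rather than closed forms. The scale bookkeeping you flag is also resolved the same way the paper does, by taking the end-to-end singular values of both networks to be $\epsilon^L$.
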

\begin{proof}
    Let $\bm{W}_{L:1}(0) = \bm{U}_L \bm{\Sigma}_{L:1}\bm{V}_1^{\top}$ and $\bM^* = \bm{U}^* \bm{\Sigma}^* \bm{V}^{*\top}$ denote the SVD of $\bm{W}_{L:1}(0)$ and $\bM^*$, respectively. Since $\bm{W}_{L:1}(0)$ is a product of orthogonal matrices scaled by $\epsilon$, its singular values are $\bm{\Sigma}_{L:1} = \epsilon^L \bm{I}_d$. Then, we have
    \begin{align*}
        \|\bm{W}_{L:1}(0) - \bM^*\|^2_{\mathsf{F}} &\geq \|\bm{\Sigma}_{L:1} - \bM^*\|^2_{\mathsf{F}} \\
        &= \|\epsilon^L \bm{I}_d - \bm{\Sigma}^*\|^2_{\mathsf{F}} \tag{By Lemma~\ref{lem:von}}\\
        &\geq \|\epsilon^L \bm{I}_{\hat{r}} - \bm{\Sigma}_{\hat{r}}^*\|^2_{\mathsf{F}} \tag{$\hat{r} \leq d$} \\
        &= \| \widetilde{\bm{W}}_{L:1}(0) - \bM^*\|^2_{\mathsf{F}} \tag{Theorem~\ref{thm:parsimony} at $t=0$}.
    \end{align*}
This proves the result.
\end{proof}

\subsection{Deferred Proofs for Incremental Learning}

In this section, we provide the deferred proofs from Section~\ref{sec:benefits_inc}. Recall that in this analysis, we will consider running gradient flow over the original network in Equation~(\ref{eq:deep_mf}) and over the compressed network in Equation~(\ref{eq:comp_deep_mf}).
Before we present our proof for our main result, we first restate a result from Arora et al.~\cite{arora2019implicit}, which characterizes change in singular values of the original network $\bm{W}_{L:1}(t)$.

\begin{proposition}[Arora et al.~\cite{arora2019implicit}]
\label{thm:arora}
    Consider running gradient flow over the deep matrix factorization problem defined in Equation~(\ref{eq:dln_setup}) with $\mathcal{A} = Id$ and $L\geq 2$. The end-to-end weight matrix can be expressed as 
\begin{align}
    \bm{W}_{L:1}(t) = \bm{U}(t)\bm{S}(t)\bm{V}^{\top}(t),
\end{align}
where $\bm{U}(t)$ and $\bm{V}(t)$ are orthonormal matrices and $\bm{S}(t)$ is a diagonal matrix. Denote $\sigma_i(t)$ as the $i$-th entry of the matrix $\bm{S}(t)$. Then, each
$\sigma_i(t)$ for all $i=1,\dots, d$ has the following gradient flow trajectory:
\begin{align}
    \dot{\sigma}_{i}(t) = -L \cdot (\sigma^2_i(t))^{1-1/L} \cdot \langle \nabla_{\bm{W}_{L:1}}\ell(\bm{\Theta}(t)), \bm{u}_i(t)\bm{v}_i^{\top}(t)  \rangle, \label{eq.dynamic_weihu}
\end{align}
where $\bm{u}_i(t)$ and $\bm{u}_i(t)$ denotes the $i$-th column of $\bm{U}(t)$ and $\bm{V}(t)$ respectively and
\begin{align}
    \nabla_{\bm{W}_{L:1}} \ell(\bm{\Theta}) = \bm{W}_{L:1} - \bm{M}^*.
\end{align}
\end{proposition}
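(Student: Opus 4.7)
The plan is to derive the claimed dynamics in three stages: (i) establish a balancedness conservation law that holds throughout the gradient flow, (ii) use it to obtain a closed-form evolution for the end-to-end product $\bm{W}_{L:1}(t)$, and then (iii) project this evolution onto the rank-one direction $\bm{u}_i(t)\bm{v}_i^\top(t)$. The gradient flow on each factor reads
\begin{equation*}
\dot{\bm{W}}_l(t) \;=\; -\nabla_{\bm{W}_l}\ell(\bm{\Theta}(t)) \;=\; -\bm{W}_{L:l+1}^\top(t)\,\bigl(\bm{W}_{L:1}(t)-\bm{M}^*\bigr)\,\bm{W}_{l-1:1}^\top(t).
\end{equation*}
A direct computation of $\frac{d}{dt}\bigl(\bm{W}_{l+1}^\top\bm{W}_{l+1}-\bm{W}_l\bm{W}_l^\top\bigr)$ using this expression shows the two contributions cancel, so this difference is conserved. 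Under the orthogonal initialization in Equation~(\ref{eq:orth_init}), $\bm{W}_{l+1}^\top(0)\bm{W}_{l+1}(0)=\bm{W}_l(0)\bm{W}_l(0)^\top=\epsilon\bm{I}$, hence balancedness $\bm{W}_{l+1}^\top(t)\bm{W}_{l+1}(t)=\bm{W}_l(t)\bm{W}_l(t)^\top$ persists for all $l$ and all $t\ge 0$.

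For Stage (ii), the product rule gives $\dot{\bm{W}}_{L:1}=\sum_{l=1}^L \bm{W}_{L:l+1}\dot{\bm{W}}_l\bm{W}_{l-1:1}$. Substituting the gradient and applying balancedness iteratively across layers, the Gram matrices $\bm{W}_l\bm{W}_l^\top$ are simultaneously diagonalisable in a compatible way and one obtains the fractional-power identities
\begin{equation*}
\bm{W}_{L:l+1}(t)\bm{W}_{L:l+1}^\top(t)=\bigl(\bm{W}_{L:1}\bm{W}_{L:1}^\top\bigr)^{(L-l)/L},\qquad \bm{W}_{l-1:1}^\top(t)\bm{W}_{l-1:1}(t)=\bigl(\bm{W}_{L:1}^\top\bm{W}_{L:1}\bigr)^{(l-1)/L}.
\end{equation*}
Plugging these into the product rule collapses the right-hand side to the compact end-to-end flow
\begin{equation*}
\dot{\bm{W}}_{L:1}(t)=-\sum_{l=1}^L\bigl(\bm{W}_{L:1}\bm{W}_{L:1}^\top\bigr)^{(L-l)/L}\,\nabla_{\bm{W}_{L:1}}\ell\,\bigl(\bm{W}_{L:1}^\top\bm{W}_{L:1}\bigr)^{(l-1)/L}.
\end{equation*}

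For Stage (iii), write $\bm{W}_{L:1}(t)=\bm{U}(t)\bm{S}(t)\bm{V}^\top(t)$ and take the inner product of the above with $\bm{u}_i(t)\bm{v}_i^\top(t)$. Differentiating $\sigma_i=\bm{u}_i^\top\bm{W}_{L:1}\bm{v}_i$ and using the orthonormality relations $\bm{u}_i^\top\dot{\bm{u}}_i=0=\bm{v}_i^\top\dot{\bm{v}}_i$ yields $\dot{\sigma}_i=\langle\dot{\bm{W}}_{L:1},\bm{u}_i\bm{v}_i^\top\rangle$. Functional calculus on the spectral decomposition gives $\bigl(\bm{W}_{L:1}\bm{W}_{L:1}^\top\bigr)^{(L-l)/L}\bm{u}_i=\sigma_i^{2(L-l)/L}\bm{u}_i$, and similarly on the right for $\bm{v}_i$, so every summand contributes the same scalar $\sigma_i^{2(L-1)/L}\langle\nabla_{\bm{W}_{L:1}}\ell,\bm{u}_i\bm{v}_i^\top\rangle$. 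Summing the $L$ identical terms recovers the claimed identity $\dot{\sigma}_i=-L\,(\sigma_i^2)^{1-1/L}\langle\nabla_{\bm{W}_{L:1}}\ell,\bm{u}_i\bm{v}_i^\top\rangle$.

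The main obstacle is the iterative deployment of balancedness in Stage (ii) to produce the fractional-power form $\bigl(\bm{W}_{L:1}\bm{W}_{L:1}^\top\bigr)^{(L-l)/L}$; this requires a careful inductive argument showing that the conserved relations force all the layer-wise Gram matrices into a consistent simultaneous diagonalisation, so that the $L$-th root is well defined and matches the product of adjacent factors. A secondary subtlety is that the SVD in Stage (iii) is not globally smooth when singular values coincide or cross zero, so the identity $\dot{\sigma}_i=\langle\dot{\bm{W}}_{L:1},\bm{u}_i\bm{v}_i^\top\rangle$ is first derived on open intervals where $\sigma_i(t)$ is simple and nonzero, and then extended by continuity; this is generic under gradient flow from the orthogonal initialization considered here.
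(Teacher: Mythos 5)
The paper does not actually prove this proposition: it is imported verbatim from Arora et al.\ and the only thing the authors verify themselves (in the remark immediately following the statement) is the balancedness precondition $\bm{W}_{l+1}^\top(0)\bm{W}_{l+1}(0)=\bm{W}_l(0)\bm{W}_l(0)^\top$, which the $\epsilon$-scaled orthogonal initialization of Equation~(\ref{eq:orth_init}) satisfies. Your proposal reconstructs the standard proof from the cited source, and it is correct: the conservation of $\bm{W}_{l+1}^\top\bm{W}_{l+1}-\bm{W}_l\bm{W}_l^\top$ under the factor-wise flow, the resulting fractional-power identities and the collapsed end-to-end ODE, and the projection onto $\bm{u}_i\bm{v}_i^\top$ using $\bm{u}_i^\top\dot{\bm{u}}_i=\bm{v}_i^\top\dot{\bm{v}}_i=0$ all check out, and the $L$ identical summands do sum to the claimed $-L(\sigma_i^2)^{1-1/L}\langle\nabla_{\bm{W}_{L:1}}\ell,\bm{u}_i\bm{v}_i^\top\rangle$. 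You also correctly identify balancedness at initialization as the hypothesis that must be checked, which is exactly the point the paper makes.

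The only places where your sketch is thinner than the original argument are (a) the inductive derivation of $\bm{W}_{L:l+1}\bm{W}_{L:l+1}^\top=(\bm{W}_{L:1}\bm{W}_{L:1}^\top)^{(L-l)/L}$ from balancedness, which is a genuine lemma (it requires showing that adjacent Gram matrices share singular vectors so that the fractional powers are consistently defined), and (b) the regularity of the SVD along the trajectory: Arora et al.\ handle this with an analytic SVD in which $\bm{S}(t)$ is diagonal but not necessarily ordered or sign-constrained, rather than a genericity-plus-continuity argument. Neither is a gap in the mathematics, but both deserve a citation or a worked-out step if you intend the proof to be self-contained.
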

We would like to remark that in order to directly use this result, we need to assume that the factor matrices are balanced at initialization, i.e.,
\begin{align*}
    \bm{W}^{\top}_{l+1}(0)\bm{W}_{l+1}(0) = \bm{W}_{l}(0)\bm{W}_{l}^{\top}(0), \quad l = 1,\ldots, L-1.
\end{align*}
Since we are using orthogonal initialization scaled with a small constant $\epsilon > 0$, we immediately satisfy this condition.
Equipped with this result, we introduce our main result.

\begin{manualtheoreminner}
    Let $\bM^* \in \R^{d\times d}$ be a rank-$r$ matrix and such that $\hat{r} \in \N$ with $\hat{r} \geq r$. Suppose that we run gradient flow with respect to the original DLN in Equation~(\ref{eq:dln_setup}) with $\mathcal{A} = Id$ and with respect to the compressed network defined in Equation~(\ref{eq:comp_deep_mf}). Then, if  Assumption~\ref{ass:incremental} holds such that $c_{\text{vec}} = 0$, we have that $\forall t\geq 0$, 
    \begin{align}
    \label{eq:recovery_error_app}
        \|\bm{W}_{L:1}(t) - \bM^*\|^2_{\mathsf{F}} \geq \|\widetilde{\bm{W}}_{L:1}(t) - \bM^*\|^2_{\mathsf{F}}.
    \end{align}
\end{manualtheoreminner}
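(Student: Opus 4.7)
Theorem~\ref{thm:parsimony} supplies an exact SVD of the compressed network, $\widetilde{\bm{W}}_{L:1}(t)=\bm{U}^*_{\hat{r}}\,\mathrm{diag}(\widetilde{\sigma}_1(t),\dots,\widetilde{\sigma}_r(t),\beta(t)^L,\dots,\beta(t)^L)\,\bm{V}^{*\top}_{\hat{r}}$ with $\widetilde{\sigma}_i(t)=\lambda_i(t)^L$, so
\begin{align*}
  \|\widetilde{\bm{W}}_{L:1}(t)-\bm{M}^*\|_{\mathsf{F}}^2 = \sum_{i=1}^r(\widetilde{\sigma}_i(t)-\sigma_i^*)^2 + (\hat{r}-r)\,\beta(t)^{2L}.
\end{align*}
Under Assumption~\ref{ass:incremental} with $c_{\text{vec}}=0$, the top-$r$ left/right singular vectors of the original network exactly equal $\bm{u}_i^*,\bm{v}_i^*$ once $t>t_i$, and the singular-value form of the Hoffman--Wielandt/Mirsky inequality lower-bounds the original error by $\sum_{i=1}^r(\sigma_i(t)-\sigma_i^*)^2+\sum_{i>r}\sigma_i(t)^2$. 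It therefore suffices to establish $(\widetilde{\sigma}_i(t)-\sigma_i^*)^2\le (\sigma_i(t)-\sigma_i^*)^2$ for each $i\le r$ and to control the tail sums.

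\paragraph{Step 2: Both singular values obey the same gradient-flow ODE.} From the recursion in Theorem~\ref{thm:parsimony}, the continuous-time limit of $\lambda_i(t)$ reads $\dot{\lambda}_i=-\lambda_i^{L-1}(\lambda_i^L-\sigma_i^*)$; the chain rule applied to $\widetilde{\sigma}_i=\lambda_i^L$ yields
\begin{align*}
  \dot{\widetilde{\sigma}}_i(t)=-L\,\widetilde{\sigma}_i(t)^{\,2-2/L}\bigl(\widetilde{\sigma}_i(t)-\sigma_i^*\bigr),\qquad \widetilde{\sigma}_i(0)=\epsilon^L.
\end{align*}
For the original network, Proposition~\ref{thm:arora} combined with the alignment $\bm{u}_i(t)=\bm{u}_i^*$, $\bm{v}_i(t)=\bm{v}_i^*$ provided by the assumption collapses $\langle \bm{W}_{L:1}(t)-\bm{M}^*,\bm{u}_i(t)\bm{v}_i(t)^\top\rangle$ to $\sigma_i(t)-\sigma_i^*$, giving the identical ODE $\dot{\sigma}_i(t)=-L\sigma_i(t)^{\,2-2/L}(\sigma_i(t)-\sigma_i^*)$ on $[t_i,\infty)$, with effective initial value $\sigma_i(t_i)\approx\epsilon^L$ (matching the orthogonal-initialization scaling and the requirement $\sigma_i(t)\to 0$ for $t<t_i$ as $\epsilon\to 0$).

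\paragraph{Step 3: Monotone-trajectory comparison and tails.} Let $g$ denote the unique solution of the common ODE on $[0,\infty)$ with $g(0)=\epsilon^L$. Since $\sigma_i^*$ is the only attracting equilibrium on $(0,\sigma_i^*]$ and $g$ is strictly monotone, $g(s)\in(\epsilon^L,\sigma_i^*)$ for all $s>0$. Hence $\widetilde{\sigma}_i(t)=g(t)$, while effectively $\sigma_i(t)\approx 0$ for $t<t_i$ and $\sigma_i(t)=g(t-t_i)$ for $t\ge t_i$. In both regimes $\sigma_i(t)\le \widetilde{\sigma}_i(t)\le \sigma_i^*$, and squaring $\sigma_i^*-\sigma_i(t)\ge \sigma_i^*-\widetilde{\sigma}_i(t)\ge 0$ gives the desired per-index inequality. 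For the trailing indices $i>r$, Theorem~\ref{thm:parsimony} gives $\beta(t)^L\le \epsilon^L$ monotonically in $t$, whereas the original network carries $d-r\ge \hat{r}-r$ trailing singular values, each of size at least on the order of $\epsilon^L$ by the implicit low-rank bias of deep gradient flow~\cite{arora2019implicit}; summing the resulting per-index inequalities yields the conclusion.

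\paragraph{Anticipated obstacle.} The principal delicate point is justifying the identification $\sigma_i(t)=g(t-t_i)$ on $[t_i,\infty)$ with an effective initial value $\epsilon^L$. Assumption~\ref{ass:incremental} only specifies an $(c_{\text{val}},c_{\text{vec}})$-tolerance around the target and not that the ODE is followed exactly after activation, so making this step rigorous likely requires appealing to the sharpened observation in Appendix~\ref{sec:incremental_app} that $c_{\text{val}}=c_{\text{vec}}=0$ and to the limit $\epsilon\to 0$ (in which the activation times satisfy $t_i=\Theta(\log(1/\epsilon))$ while the shifted dynamics are still governed by the relative clock $t-t_i$). A secondary subtlety is ensuring that the original network's trailing singular values do not decay strictly below $\beta(t)^L$; in practice this is benign because $(\hat{r}-r)\beta(t)^{2L}\to 0$ as $\epsilon\to 0$, so only the leading-$r$ contributions drive the inequality.
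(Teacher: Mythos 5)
Your overall strategy --- reduce to singular values via the von Neumann/Mirsky-type bound (Lemma~\ref{lem:von}), read off the compressed dynamics from Theorem~\ref{thm:parsimony}, and compare against the gradient-flow ODE of Proposition~\ref{thm:arora} for the original network --- is the same skeleton the paper uses. However, two steps do not go through as written, and you partly flag the first one yourself. In Step 2 you claim the original network's active singular value obeys the \emph{identical} ODE on $[t_i,\infty)$ because ``the alignment $\bm{u}_i(t)=\bm{u}_i^*$, $\bm{v}_i(t)=\bm{v}_i^*$ is provided by the assumption.'' Assumption~\ref{ass:incremental} with $c_{\text{vec}}=0$ only gives that alignment for $t>t_i$, i.e.\ \emph{after} the $i$-th component has been fitted; it says nothing about the singular vectors of the component currently being learned, so you cannot collapse $\bm{u}_{i+1}^\top(t)\bm{M}^*\bm{v}_{i+1}(t)$ to $\sigma_{i+1}^*$ during the fitting window, and the time-shifted identification $\sigma_i(t)=g(t-t_i)$ with ``effective initial value $\epsilon^L$'' is precisely the step that cannot be made rigorous from the assumption. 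The paper sidesteps this entirely: using only that $\bm{u}_{i+1}(t),\bm{v}_{i+1}(t)$ are orthogonal to the already-fitted directions (which is what $c_{\text{vec}}=0$ does give), it bounds $\bm{u}_{i+1}^\top(t)\bm{M}^*\bm{v}_{i+1}(t)\le\sigma_{i+1}^*$, obtains the differential \emph{inequality} $\dot{\sigma}_{i+1}(t)\le -L\,(\sigma_{i+1}^2(t))^{1-1/L}(\sigma_{i+1}(t)-\sigma_{i+1}^*)$, and then invokes the comparison theorem for ODEs~\cite{Teschl2012OrdinaryDE} against the compressed network's exact ODE, which yields $\epsilon^L\le\sigma_{i+1}(t)\le\widetilde{\sigma}_{i+1}(t)\le\sigma_{i+1}^*$ directly, with no time-shift bookkeeping and no appeal to an $\epsilon\to 0$ limit; the waiting phase $t\in[t_i^{\text{comp}},t_i^{\text{orig}}]$, during which the original value is frozen at $\epsilon^L$ while the compressed one already evolves, is handled separately by the incremental-learning assumption.

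The second gap is your treatment of the tail indices $j\in[r+1,\hat{r}]$. The claim that the original's trailing singular values are ``at least on the order of $\epsilon^L$'' is unsubstantiated (they decay), and the fallback that ``$(\hat{r}-r)\beta(t)^{2L}\to 0$ as $\epsilon\to 0$ so only the leading-$r$ terms drive the inequality'' does not prove the stated inequality at fixed $\epsilon$, which is what the theorem asserts for all $t\ge 0$. The paper closes this cleanly: once the top-$r$ components are fitted, orthogonality to the fitted subspaces forces $\bm{u}_j^\top(t)\bm{M}^*\bm{v}_j(t)=0$, so the original's trailing values obey $\dot{\sigma}_j(t)=-L\,(\sigma_j^2(t))^{1-1/L}\,\sigma_j(t)$ --- the same self-decay ODE that governs $\beta(t)$ in Theorem~\ref{thm:parsimony} --- while before that they remain at $\epsilon^L$; hence they never fall below the compressed trailing values and $(\sigma_j(t)-0)^2\ge(\widetilde{\sigma}_j(t)-0)^2$ term by term. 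With these two repairs your argument coincides with the paper's proof.
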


\begin{proof} For clarity, we organize the proof into subsections.

\paragraph{Notation \& Assumptions.}

By Assumption~\ref{ass:incremental}, let us define a sequence of time points $t_1^{\text{orig}} \leq t_2^{\text{orig}} \leq \ldots \leq t_r^{\text{orig}}$ and $t_1^{\text{comp}} \leq t_2^{\text{comp}} \leq \ldots \leq t_r^{\text{comp}}$ in which the $i$-th principal components are fitted up to some precision defined by  $c^{\text{orig}}_{\text{val}}, c^{\text{orig}}_{\text{vec}}$ and $c^{\text{comp}}_{\text{val}}, c^{\text{comp}}_{\text{vec}}$ for both the original and compressed network, respectively. Recall that $t_i$ denotes the time in which $i$-th principal components are fitted and when the ($i+1$)-th components start being learned. We assume that $c_{\text{vec}}^{\text{orig}} = 0$ and that for all $t < t_i$, 
    \begin{align*}
    \sigma_{i+1}(t) = \widetilde{\sigma}_{i+1}(t) = \epsilon^L,
\end{align*}
where $\sigma_{i}(t)$ and $\widetilde{\sigma}_{i}(t)$ denotes the $i$-th singular value of the original and compressed network, respectively. Notice that by Theorem~\ref{thm:parsimony}, we have that $\widetilde{\bm u}_i(t) = \bm u_i^*$ and $\widetilde{\bm v}_i(t) = \bm v_i^*$ for all $t$, where $\tilde{\bm u}_i(t)$ and $\widetilde{\bm v}_i(t)$ denote the $i$-th left and right singular vector of $\widetilde{\bm{W}}_{L:1}(t)$ respectively, and so $c_{\text{vec}}^{\text{comp}} = 0$ by definition. 
The second assumption states that the singular values are learned incrementally, and that the trailing $(i+1)$ to $d$ singular values stay at initialization until all the first $i$ singular values are learned. %%%%%

\paragraph{Roadmap of Proof.}  

Notice that by using Lemma~\ref{lem:von}, we can write the inequality in Equation~(\ref{eq:recovery_error_app}) as 
    \begin{align*}
        \|\bm{W}_{L:1}(t) - \bM^*\|^2_{\mathsf{F}} \geq \sum_{i=1}^{\hat{r}} (\sigma_i(t) - \sigma_i^*)^2 \underset{?}{\geq}  \sum_{i=1}^{\hat{r}} (\widetilde{\sigma}_i(t) - \sigma_i^*)^2 =   \|\widetilde{\bm{W}}_{L:1}(t) - \bM^*\|^2_{\mathsf{F}}.
    \end{align*}
Then, to establish the result, we will show that 
\begin{align*}
    \sum_{i=1}^{\hat{r}} (\sigma_i(t) - \sigma_i^*)^2 \geq \sum_{i=1}^{\hat{r}} (\widetilde{\sigma}_i(t) - \sigma_i^*)^2.
\end{align*}
To do this, we will first show that $t_i^{\text{comp}} \leq t_i^{\text{orig}}$ for all $i \in [r]$ using mathematical induction, as that would imply that the singular values of the compressed network are learned faster, leading to a lower recovery error. Then, we will show that the inequality also holds for all $j \in [r+1, \hat{r}]$.

    \paragraph{Base Case.} Let $t_0^{\text{orig}}$ and $ t_0^{\text{comp}}$ denote the initial time points. For the base case, the initialization gives us 
    $t_0^{\text{orig}} = t_0^{\text{comp}} = 0$.
    
    %Corollary~\ref{corollary:init}, we have
    %\begin{align*}
    %    \sum_{i=1}^{r} (\sigma_i(0) - \sigma_i^{*})^2 = \sum_{i=1}^{r} (\widetilde{\sigma}_i(0) - \sigma_i^{*})^2. 
    %\end{align*}

\paragraph{Inductive Hypothesis.} By the inductive hypothesis, suppose that $t_i^{\text{orig}} \geq t_i^{\text{comp}}$ such that the $i$-th principal components of the compressed network are fitted faster than the original network. 
%\textcolor{red}{\textbf{Notice that this implies that
%\begin{align*}
%    (\sigma_j(t) - \sigma_j^*)^2 \geq (\widetilde{\sigma}_j(t) - \sigma_j^*)^2 \quad  \forall j \in [i] \quad\implies \quad \sum_{j=1}^i(\sigma_j(t) - \sigma_j^*)^2 \geq \sum_{j=1}^i(\widetilde{\sigma}_j(t) - \sigma_j^*)^2.
%\end{align*}}}

\paragraph{Inductive Step (Top-$r$ Components).} For the inductive step, we need to show $t_{i+1}^{\text{orig}} \geq t_{i+1}^{\text{comp}}$. We will do this by analyzing the dynamics of the singular values of both networks.

Notice that by the inductive hypothesis and by assumption, for all $t < t_{i}^{\text{comp}} \leq t_{i}^{\text{orig}}$, we have
\begin{align}
\label{eq:sval_first}
    \sigma_{i+1}(t) = \widetilde{\sigma}_{i+1}(t) = \epsilon^L.
\end{align}
Then, for all $t > t_{i}^{\text{orig}} \geq t_{i}^{\text{comp}}$, by~\Cref{thm:arora}, 
the gradient dynamics of $\sigma_{i+1}(t)$ are governed by
\begin{align*}
\dot{\sigma}_{i+1}(t) 
&= -L \cdot (\sigma^2_{i+1}(t))^{1-1/L} \cdot \langle \nabla_{\bm{W}_{L:1}}\ell(\bm{\Theta}(t)), \bm{u}_{i+1}(t)\bm{v}_{i+1}^{\top}(t)  \rangle, 
\\
&= -L \cdot (\sigma^2_{i+1}(t))^{1-1/L} \cdot (\bm{u}_{i+1}^\top(t)\bm{W}_{L:1}(t)\bm{v}_{i+1}(t) - \bm{u}_{i+1}^\top(t)\bm{M}^*\bm{v}_{i+1}(t))
\\
&= -L \cdot (\sigma^2_{i+1}(t))^{1-1/L} \cdot (\sigma_{i+1}(t) - \bm{u}_{i+1}^\top(t)\bm{M}^*\bm{v}_{i+1}(t))
\\
&\leq -L \cdot (\sigma^2_{i+1}(t))^{1-1/L} \cdot (\sigma_{i+1}(t) - \sigma_{i+1}^*),
\end{align*}
where the last inequality follows from that $c_{\text{vec}}^{\text{orig}} = 0$ and so $\bm{u}_{i+1}^\top(t)$ and $\bm{v}_{i+1}^\top(t)$ are vectors that are orthogonal to $\text{span}\{\bm{u}_j^*,\, 1\leq j\leq i\}$ and $\text{span}\{\bm{v}_j^*,\, 1\leq j\leq i\}$, such that $\bm{u}_{i+1}^\top(t)\bm{M}^*\bm{v}_{i+1}(t)$ is at most $\sigma_{i+1}^*$.

%\ZK{\footnote{this is the case when $c_\text{vec}=0$, and when the target's singular vectors are not perfectly fitted we have $\bm{u}_{i+1}^\top(t)\bm{u}_j^* \le \sqrt{2c_\text{vec}-c_\text{vec}^2}$, therefore $\dot{\sigma}_{i+1}(t) \leq -L \cdot (\sigma^2_{i+1}(t))^{1-1/L} \cdot (\sigma_{i+1}(t) - \sigma_{i+1}^* -  \sqrt{2c_\text{vec}-c_\text{vec}^2} \sum_{j=1}^i{\sigma_j^*})$}, which leads to a small error to the last inequality bounded by $c_\text{vec}$}

\noindent Similarly, notice that for the compressed network, we have for times $t > t_{i}^{\text{comp}}$,
\begin{align}
\label{eq:comp_sval_dynamic}
\dot{\widetilde{\sigma}}_{i+1}(t) 
&= -L \cdot (\widetilde{\sigma}^2_{i+1}(t))^{1-1/L} \cdot \langle \nabla_{\widetilde{\bm{W}}_{L:1}}\ell(\widetilde{\bm{\Theta}}(t)), \bm{u}^*_{i+1}\bm{v}_{i+1}^{*\top}  \rangle,
\\
&= -L \cdot (\sigma^2_{i+1}(t))^{1-1/L} \cdot (\bm{u}_{i+1}^{*\top}\widetilde{\bm{W}}_{L:1}(t)\bm{v}^*_{i+1} - \bm{u}_{i+1}^{*\top}\bm{M}^*\bm{v}^*_{i+1})
\\
&= -L \cdot (\widetilde{\sigma}^2_{i+1}(t))^{1-1/L} \cdot (\widetilde{\sigma}_{i+1}(t) - \sigma_{i+1}^*).
\end{align}
Therefore by the comparison theorem for ODEs~(Theorem 1.3 of~\cite{Teschl2012OrdinaryDE}) we have that
\begin{align}
\label{eq:sval_last}
    \epsilon^L \leq \sigma_{i+1}(t) \leq \widetilde{\sigma}_{i+1}(t) \leq \sigma_{i+1}^*, \quad\,\,\,  \forall t > t_{i}^{\text{orig}},
\end{align}
Then, for all $t$ in $t_{i}^{\text{comp}} \leq t \leq t_{i}^{\text{orig}}$, the singular values of the  compressed network follow the dynamics outlined in Equation~(\ref{eq:comp_sval_dynamic}), whereas the singular value of the original network stays at initialization by assumption. Thus,
\begin{align}
\label{eq:sval_middle}
\epsilon^L = \sigma_{i+1}(t) \leq \widetilde{\sigma}_{i+1}(t) \leq \sigma_{i+1}^*.
\end{align}
Hence, by combining \Cref{eq:sval_first},~\Cref{eq:sval_last},~\Cref{eq:sval_middle}, we have that for  $t_{i+1}^{\text{orig}} \geq t_{i+1}^{\text{comp}}$ for all $i \in [r]$.

\begin{figure}[t]
    \centering
    \includegraphics[width=0.5\textwidth]{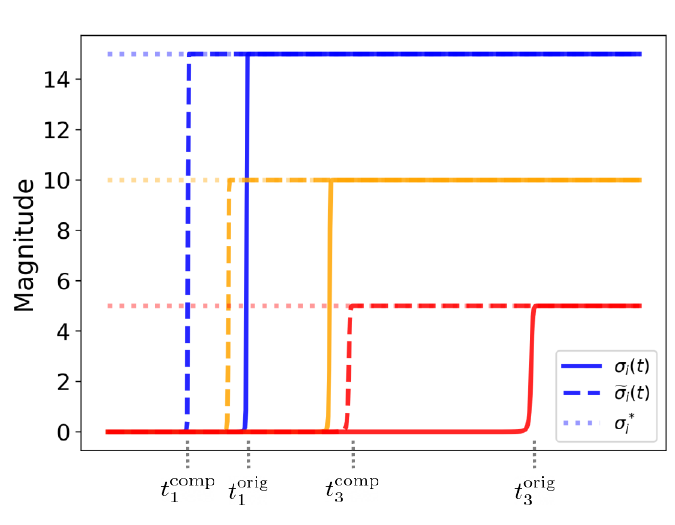}
    \caption{\textbf{Visual illustration of the proof technique for Theorem~\ref{thm:recovery_mf}. } The proof involves showing that $t_i^{\text{comp}} \leq t_i^{\text{orig}}$, which amounts to showing that the singular values of the compressed network fits the target value faster than the original network.}
    \label{fig:proof_helper}
\end{figure}

\paragraph{Inductive Step (Residual Components).}
%Hence, by combining \Cref{eq:sval_first},~\Cref{eq:sval_last},~\Cref{eq:sval_middle}, we have that 
%\begin{align}
%\label{eq:diff_sval}
%    (\sigma_i(t) - \sigma_i^{*})^2 \geq (\widetilde{\sigma}_i(t) - \sigma_i^{*})^2 \quad\,\,\,  \forall i \in [r],
%\end{align}
%which implies that $t_{i+1}^{\text{orig}} \geq t_{i+1}^{\text{comp}}$.
%\ZK{I think we can move the arguments on the $r+1\sim\hat{r}$-th singular values to the end of the proof? Because we only defined $t_0^\text{orig}\sim t_r^\text{orig}$, so the induction should be on the first $r$ singular values. For the $r+1\sim\hat{r}$-th singular values we only needs $t_r^\text{orig}\ge t_r^\text{comp}$ which is given by the previous induction} 
For the remaining residual singular values, 
%we need to show that~\Cref{eq:diff_sval} holds $\forall j \in [r+1, \hat{r}]$. 
notice that $\forall t > t_r^{\text{orig}}$, the dynamics of the residual singular values $\forall j \in [r+1, \hat{r}]$ are governed by 
\begin{align*}
\dot{\sigma}_{j}(t) 
&= -L \cdot (\sigma^2_{j}(t))^{1-1/L} \cdot \langle \nabla_{\bm{W}_{L:1}}\ell(\bm{\Theta}(t)), \bm{u}_{j}(t)\bm{v}_{j}^{\top}(t)  \rangle, 
\\
&= -L \cdot (\sigma^2_{j}(t))^{1-1/L} \cdot (\bm{u}_{j}^\top(t)\bm{W}_{L:1}(t)\bm{v}_{j}(t) - \bm{u}_{j}^\top(t)\bm{M}^*\bm{v}_{j}(t))
\\
&= -L \cdot (\sigma^2_{j}(t))^{1-1/L} \cdot (\sigma_{j}(t) - \bm{u}_{j}^\top(t)\bm{M}^*\bm{v}_{j}(t))
\\
&= -L \cdot (\sigma^2_{j}(t))^{1-1/L} \cdot \sigma_{j}(t),
\end{align*}
which is identical to that of the compressed network, and so $t_{i+1}^{\text{orig}} \geq t_{i+1}^{\text{comp}}$ for all $i \in [r+1, \hat{r}]$.
\noindent Thus,
\begin{align}
    (\sigma_i(t) - \sigma_i^{*})^2 \geq (\widetilde{\sigma}_i(t) - \sigma_j^{*})^2, \quad\,\,\,   \forall i \in [\hat{r}],
\end{align}
and so we have
    \begin{align*}
        \|\bm{W}_{L:1}(t) - \bm{M}^*\|^2_{\mathsf{F}}
        \geq \sum_{i=1}^{\hat{r}} (\sigma_i(t) - \sigma_i^{*})^2 \geq \sum_{i=1}^{\hat{r}} (\widetilde{\sigma}_i(t) - \sigma_i^{*})^2 
        = \|\widetilde{\bm{W}}_{L:1}(t) - \bm{M}^*\|^2_{\mathsf{F}},
    \end{align*}
which completes the proof.
\end{proof}

\paragraph{Remarks.} In Figure~\ref{fig:proof_helper}, we provide a visual illustration of the proof technique. Here, we can clearly see that $t_i^{\text{comp}} \leq t_i^{\text{orig}}$, which leads to the lower recovery error at the same time instance $t$. To rigorously show that $t_i^{\text{comp}} \leq t_i^{\text{orig}}$, we relied on using the gradient flow result stated in~\Cref{thm:arora} and the incremental learning assumption presented in Assumption~\ref{ass:incremental}. We justified the use of Assumption~\ref{ass:incremental} in the beginning of Appendix~\ref{sec:incremental_app}, where we showed that the principal components of the DLN are fitted incrementally, and that generally, $c_{\text{vec}} = c_{\text{val}} = 0$ (see Figure~\ref{fig:ass_cvals}).

\subsection{Auxiliary Results}

\begin{lemma}
\label{lem:von}
    Consider two matrices $\bm{A}, \bm{B} \in \R^{d_1 \times d_2}$ and their respective compact singular value decompositions (SVD) $\bm{A} = \bm{U}_A \bm{\Sigma}_A \bm{V}_A^{\top}$ and $\bm{B} = \bm{U}_B \bm{\Sigma}_B \bm{V}_B^{\top}$. We have that
    \begin{align*}
        \|\bm{A}-\bm{B}\|_{\mathsf{F}}^2 \geq \|\bm{\Sigma}_A-\bm{\Sigma}_B\|_{\mathsf{F}}^2.
    \end{align*}
\end{lemma}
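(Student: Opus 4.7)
The plan is to expand both squared Frobenius norms and reduce the inequality to a single trace inequality that is exactly von Neumann's trace inequality. Concretely, using the fact that the Frobenius norm of a matrix equals the $\ell_2$ norm of its singular values, I would write
\begin{align*}
    \|\bm{A}-\bm{B}\|_{\mathsf{F}}^2 &= \|\bm{A}\|_{\mathsf{F}}^2 + \|\bm{B}\|_{\mathsf{F}}^2 - 2\langle \bm{A},\bm{B}\rangle = \|\bm{\Sigma}_A\|_{\mathsf{F}}^2 + \|\bm{\Sigma}_B\|_{\mathsf{F}}^2 - 2\,\mathrm{tr}(\bm{A}^{\top}\bm{B}),\\
    \|\bm{\Sigma}_A-\bm{\Sigma}_B\|_{\mathsf{F}}^2 &= \|\bm{\Sigma}_A\|_{\mathsf{F}}^2 + \|\bm{\Sigma}_B\|_{\mathsf{F}}^2 - 2\,\mathrm{tr}(\bm{\Sigma}_A\bm{\Sigma}_B).
\end{align*}
So the claim reduces to showing $\mathrm{tr}(\bm{A}^{\top}\bm{B}) \leq \mathrm{tr}(\bm{\Sigma}_A\bm{\Sigma}_B) = \sum_i \sigma_i(\bm{A})\,\sigma_i(\bm{B})$, where the singular values are taken in non-increasing order.

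The second step is to invoke von Neumann's trace inequality, which states that for any real matrices $\bm{A},\bm{B}$ of the same shape, $|\mathrm{tr}(\bm{A}^{\top}\bm{B})| \leq \sum_i \sigma_i(\bm{A})\,\sigma_i(\bm{B})$. Substituting this into the expansion above yields $\|\bm{A}-\bm{B}\|_{\mathsf{F}}^2 \geq \|\bm{\Sigma}_A-\bm{\Sigma}_B\|_{\mathsf{F}}^2$, which is the desired conclusion. Alternatively, one could cite Mirsky's inequality directly, which states $\sum_i (\sigma_i(\bm{A})-\sigma_i(\bm{B}))^2 \leq \|\bm{A}-\bm{B}\|_{\mathsf{F}}^2$; this is exactly the lemma.

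If a self-contained derivation is preferred over citing von Neumann, I would prove the trace inequality by writing $\mathrm{tr}(\bm{A}^{\top}\bm{B}) = \mathrm{tr}(\bm{\Sigma}_A \bm{U}_A^{\top}\bm{U}_B \bm{\Sigma}_B \bm{V}_B^{\top}\bm{V}_A) = \mathrm{tr}(\bm{\Sigma}_A \bm{P} \bm{\Sigma}_B \bm{Q})$ with $\bm{P} = \bm{U}_A^{\top}\bm{U}_B$ and $\bm{Q} = \bm{V}_B^{\top}\bm{V}_A$ orthogonal, and then use the fact that a real-valued function of the form $(\bm{P},\bm{Q}) \mapsto \mathrm{tr}(\bm{\Sigma}_A \bm{P} \bm{\Sigma}_B \bm{Q})$ over the product of orthogonal groups attains its maximum when $\bm{P}$ and $\bm{Q}$ are both the identity (since $\bm{\Sigma}_A,\bm{\Sigma}_B$ have non-increasing non-negative diagonals). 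This can be made rigorous by a doubly-stochastic / Birkhoff argument on the matrix of absolute values of $\bm{P}$ and $\bm{Q}$.

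The main obstacle, if anything, is the self-contained proof of von Neumann's trace inequality, since it requires either a Birkhoff/Hardy–Littlewood–Pólya rearrangement argument or an invocation of Ky Fan's maximum principle. For a paper of this nature the cleanest route is almost certainly to simply cite Mirsky's inequality (or equivalently von Neumann) as a standard result and present the two-line expansion above.
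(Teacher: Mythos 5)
Your proposal is correct and follows essentially the same route as the paper: expand both squared Frobenius norms, reduce to the trace inequality $\tr(\bm{A}^{\top}\bm{B}) \leq \tr(\bm{\Sigma}_A\bm{\Sigma}_B)$, and invoke von Neumann's trace inequality. The extra remarks on Mirsky's inequality and a self-contained Birkhoff-type argument are fine but not needed, since the paper likewise cites von Neumann as a standard result.
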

\begin{proof}
    By using the definition of the Frobenius norm, we have
    \begin{align*}
        \|\bm{A}-\bm{B}\|_{\mathsf{F}}^2 &= \|\bm{U}_A \bm{\Sigma}_A \bm{V}_A^{\top} - \bm{U}_B \bm{\Sigma}_B \bm{V}_B^{\top}\|_{\mathsf{F}}^2 \\
        &= \tr(\bm{\Sigma}_A^2)  -2\tr(\bm{A}^{\top}\bm{B}) + \tr(\bm{\Sigma}_B^2) \\
        &\geq \tr(\bm{\Sigma}_A^2)  -2\tr(\bm{\Sigma}_A\bm{\Sigma}_B) + \tr(\bm{\Sigma}_B^2) \tag{Von-Neumann Inequality} \\
        &= \|\bm{\Sigma}_A-\bm{\Sigma}_B\|_{\mathsf{F}}^2.
    \end{align*}
This proves the desired result.
\end{proof}

\clearpage

\end{document}